\newcommand{\setappendix}{Appendix~\thesection:~~}
\newcommand{\setsection}{\thesection~~}
\titleformat{\section}{\bfseries\LARGE}{%
	\ifnum\pdfstrcmp{\@currenvir}{appendices}=0
	\setappendix
	\else
	\setsection
\fi}{0em}{}
\newcommand*{\QED }{\hfill\ensuremath{\square}}
\def \({\left(}
\def \){\right)}
\def \[{\left[}
\def \]{\right]}
\newcommand{\nn}{\nonumber \\}
\newcommand{\tbf}[1]{{\textbf{#1}}}
\newcommand{\mcO}{{\mathcal{O}}}
\newcommand{\bh}{{\textbf {h}}}
\newcommand{\cC}{{\mathcal{C}}}
\newcommand{\sign}{\text{ sign}}
\newcommand{\be}{\begin{equation}}
\newcommand{\ee}{\end{equation}}
\newcommand{\beqa}{\begin{eqnarray}}
\newcommand{\eeqa}{\end{eqnarray}}
\newcommand\smallO{
  \mathchoice
    {{\scriptstyle\mathcal{O}}}
    {{\scriptstyle\mathcal{O}}}
    {{\scriptscriptstyle\mathcal{O}}}
    {\scalebox{.7}{$\scriptscriptstyle\mathcal{O}$}}
  }
\newcommand{\bea}{\begin{align}}
\newcommand{\eea}{\end{align}}
\newtheorem{theorem}{Theorem}[section]
\newtheorem{assumption}{Assumption}
\newtheorem{lemma}[theorem]{\textbf{Lemma}}
\newtheorem{thm}[theorem]{\textbf{Theorem}}
\newtheorem{remark}[theorem]{\textbf{Remark}}
\newtheorem{proposition}[theorem]{\textbf{Proposition}}
\DeclareMathAlphabet{\varmathbb}{U}{bbold}{m}{n}
\newcommand{\id}{\mathds{1}}
\newcommand{\EE}{\mathbb{E}}
\newcommand{\bbR}{\mathbb{R}}
\newcommand{\bbN}{\mathbb{N}}
\newcommand{\td}[1]{{\tilde{#1}}}
\newcommand{\mO}{\mathcal{O}}
\newcommand{\underlim}[2]{\underset{#1 \to #2}{\longrightarrow}}
\tikzstyle{latent} = [circle,fill=white,draw=black,inner sep=1pt,
\tikzstyle{obs} = [latent,fill=gray!25]
\tikzstyle{const} = [rectangle, inner sep=0pt, node distance=1]
\tikzstyle{factor} = [rectangle, fill=black,minimum size=5pt, inner
\tikzstyle{det} = [latent, diamond]
\tikzstyle{plate} = [draw, rectangle, rounded corners, fit=#1]
\tikzstyle{wrap} = [inner sep=0pt, fit=#1]
\tikzstyle{gate} = [draw, rectangle, dashed, fit=#1]
\tikzstyle{caption} = [font=\footnotesize, node distance=0] %
\tikzstyle{plate caption} = [caption, node distance=0, inner sep=0pt,
\tikzstyle{factor caption} = [caption] %
\tikzstyle{every label} += [caption] %
\newcommand{\edge}[3][]{ %
  \foreach \x in {#2} { %
    \foreach \y in {#3} { %
      \path (\x) edge [->, >={triangle 45}, #1] (\y) ;%
    } ;
  } ;
}
\tikzset{middlearrow/.style={decoration={markings,mark= at position 0.5 with {\arrow{#1}} ,},postaction={decorate}}}
\DeclareTextCommandDefault{\nobreakspace}{\leavevmode\nobreak\ } 
\begin{document}
\title{The committee machine: Computational to statistical gaps \\in
  learning a two-layers neural network}

\author{Benjamin Aubin$^{\star\dagger}$, Antoine Maillard$^{\dagger}$, Jean Barbier$^{\Diamond}$, \\Florent Krzakala$^{\dagger}$, Nicolas Macris$^{\otimes}$ and Lenka Zdeborov{\'a}$^{\star}$}
\date{}
\maketitle
{\let\thefootnote\relax\footnote{
\!\!\!\!\!\!\!\!\!\!$\star$ Institut de Physique Th\'eorique, CNRS \& CEA \& Universit\'e Paris-Saclay, Saclay, France.\\
$\dagger$ Laboratoire de Physique Statistique,
CNRS \& Sorbonnes Universit\'es \& \'Ecole Normale Sup\'erieure, PSL University, Paris, France.\\
$\otimes$ Laboratoire de Th\'eorie des Communications, \'Ecole Polytechnique F\'ed\'erale de Lausanne, Suisse. \\
$\Diamond$ International Center for Theoretical Physics, Trieste, Italy.\\
}}
\setcounter{footnote}{0}

\begin{abstract}
  Heuristic tools from statistical physics have been used in the past 
  to locate the phase transitions and compute the optimal learning and generalization errors in the
  teacher-student scenario in multi-layer neural networks.  
  In this contribution, we provide a rigorous justification of these approaches
   for a two-layers neural network model called the committee machine, under a technical assumption. 
  We also introduce a version of the approximate
  message passing (AMP) algorithm for the committee machine that allows to perform
  optimal learning in polynomial time for a large set of
  parameters. 
  We find that there are regimes in which a low
  generalization error is information-theoretically achievable while
  the AMP algorithm fails to deliver it; strongly suggesting that no
  efficient algorithm exists for those cases, and unveiling
  a large computational gap.
\end{abstract}
{\singlespacing
\hypersetup{linkcolor=black}
\tableofcontents}
\newpage
\section{Introduction}
While the traditional approach to learning and generalization follows the Vapnik-Chervonenkis \cite{vapnik1998statistical} and Rademacher \cite{bartlett2002rademacher} worst-case type bounds, there has been a considerable body of theoretical work on calculating the generalization ability of neural networks for data arising from a probabilistic model within the framework of statistical mechanics \cite{seung1992statistical,watkin1993statistical,monasson1995learning,monasson1995weight,engel2001statistical}. In the wake of the need to understand the effectiveness of neural networks and also the limitations of the classical approaches \cite{zhang2016understanding}, it is of interest to revisit the results that have emerged thanks to the physics perspective. This direction is currently experiencing a strong revival, see e.g. \cite{chaudhari2016entropy,martin2017rethinking,barbier2017phase,baity2018comparing}.

Of particular interest is the so-called teacher-student approach, where labels are generated by feeding i.i.d.\ random samples to a neural network architecture (the {\it teacher}) and are then presented to another neural network (the {\it student}) that is trained using these data. Early studies computed the information theoretic limitations of the supervised learning abilities of the teacher weights by a student who is given $m$ independent $n$-dimensional examples with $\alpha\!\equiv\!m/n\!=\!\Theta(1)$ and $n\to \infty$ \cite{seung1992statistical,watkin1993statistical,engel2001statistical}. These works relied on non-rigorous heuristic approaches, such as the replica and cavity methods \cite{mezard1987spin,mezard2009information}. 
Additionally, no provably efficient algorithm was provided to achieve the predicted learning abilities, and it was thus difficult to test those predictions, or to assess the computational difficulty.

Recent developments in statistical estimation and information theory ---in particular of approximate message passing algorithms (AMP) \cite{donoho2009message,rangan2011generalized,bayati2011dynamics,javanmard2013state}, and a rigorous proof of the replica formula for the optimal generalization error \cite{barbier2017phase}--- allowed to settle these two missing points for single-layer neural networks (i.e. without any hidden variables). In the present paper, we leverage on these works, and provide rigorous asymptotic predictions and corresponding message passing algorithm for a class of two-layers networks.

\section{Summary of contributions and related works}
While our results hold for a rather large class of non-linear activation functions, we illustrate our findings on a case considered most commonly in the early literature: the committee machine.  This is possibly the simplest version of a two-layers neural network where all the weights in the second layer are fixed to unity, and we illustrate it in Fig.~\ref{fig:committee}. Denoting $Y_\mu$ the label associated with a $n$-dimensional sample $X_\mu$, and $W_{il}^*$ the weight connecting the $i$-th coordinate of the input to the $l$-th node of the hidden layer, it is defined by:
\begin{equation}
Y_\mu = {\rm{\sign}}\Big[\sum_{l=1}^K {\sign} \Big( \sum_{i=1}^nX_{\mu i}  W_{i l}^* \Big) \Big]\,. \label{model:com}
\end{equation}
We concentrate here on the teacher-student scenario: The teacher generates i.i.d.\ data samples with i.i.d.\ standard Gaussian coordinates $X_{\mu i} \sim\mathcal{N}(0,1)$, then she/he generates the associated labels $Y_\mu$ using a committee machine as in \eqref{model:com}, with i.i.d.\ weights $W_{il}^*$ unknown to the student (in the proof section we will consider the more general case of a distribution for the weights of the form $\prod_{i=1}^n P_0(\{W_{il}^*\}_{l=1}^K)$, but in practice we consider the fully separable case). The student is then given the $m$ input-output pairs $(X_{\mu},Y_\mu)_{\mu=1}^m$ and knows the distribution  $P_0$ used to generate $W_{il}^*$. The goal of the student is to learn the weights $W_{il}^*$ from the available examples  $(X_{\mu},Y_\mu)_{\mu=1}^m$ in order to reach the smallest possible generalization error (i.e. to be able to predict the label the teacher would generate for a new sample not present in the training set).  

There have been several studies of this model within the non-rigorous statistical physics approach in the limit where $\alpha\equiv m/n=\Theta(1)$, $K=\Theta(1)$ and $n\to \infty$~\cite{schwarze1993learning,schwarze1992generalization,schwarze1993generalization,MatoParga92,monasson1995weight,engel2001statistical}. A particularly interesting result in the teacher-student setting is the {\it specialization of hidden neurons} (see sec. 12.6 of \cite{engel2001statistical}, or \cite{saad1995line} in the context of online learning): For $\alpha<\alpha_{\rm spec}$ (where $\alpha_{\rm spec}$ is a certain critical value of the sample complexity), the permutation symmetry between hidden neurons remains conserved even after an optimal learning, and the learned weights of each of the hidden neurons are identical. For $\alpha>\alpha_{\rm spec}$, however, this symmetry gets broken as each of the hidden units correlates strongly with one of the hidden units of the teacher. Another remarkable result is the calculation of the optimal generalization error as a function of $\alpha$.

Our first contribution consists in a proof of the replica formula conjectured in the statistical physics literature, using the adaptive interpolation method of \cite{BarbierM17a,barbier2017phase}, that allows to put several of these results on a rigorous basis.
This proof uses a technical unproven assumption. 
Our second contribution is the design of an AMP-type of algorithm that is able to achieve the optimal generalization error in the above limit of large dimensions for a wide range of parameters. 
The study of AMP ---that is widely believed to be optimal between all polynomial algorithms in the above setting \cite{donoho2013accurate,REVIEWFLOANDLENKA,deshpande2015finding,bandeira2018notes}--- unveils,
in the case of the committee machine with a large number of hidden neurons, the existence a large {\it hard phase} in which learning is information-theoretically possible, leading to a good generalization error decaying asymptotically as $1.25 K/\alpha$ (in the $\alpha = \Theta(K)$ regime), 
but where AMP fails and provides only a poor generalization that does not go to zero when increasing $\alpha$. This strongly suggests that no efficient algorithm exists in this hard region and therefore there is a computational gap in learning such neural networks. 
In other problems where a hard phase was identified its study boosted the development of algorithms that are able to match the predicted thresholds, and we anticipate this will translate to the present model.

{\color{black} We also want to comment on a related line of work that studies the loss-function landscape of neural networks. While a range of works show under various assumptions that spurious local minima are absent in neural networks, others show under different conditions that they do exist, see e.g.~\cite{safran2017spurious}. 
The regime of parameters that is hard for AMP must have spurious local minima, but the converse is not true in general. It might be that there are spurious local minima, yet 
the AMP approach succeeds. Moreover, in all previously studied models in the Bayes-optimal setting the (generalization) error obtained with the AMP is the best 
known and other approaches, e.g. (noisy) gradient based, spectral algorithms or semidefinite programming, are not better 
in generalizing even in cases where the “student” models are overparametrized. Of course in order to be in the Bayes-optimal setting one needs to know the model used by the teacher which is not the case in practice. }

\begin{figure}
\centering
\begin{tikzpicture}[scale=1.5]
    \tikzstyle{factor}=[rectangle,minimum size=4pt,draw=black, fill opacity=1.]
    \tikzstyle{latent}=[circle,minimum size=19pt,draw=black, fill opacity=1.,fill=white]
    \tikzstyle{output}=[circle,minimum size=19pt,draw=black, fill opacity=1.,fill=white]
        \tikzstyle{noise}=[circle,minimum size=18pt,draw=black, fill opacity=1.,fill=white]
    \tikzstyle{output_y}=[rectangle,minimum size=15pt,draw=black, fill opacity=1.,fill=white]
    \tikzstyle{annot} = [text width=3cm, text centered]
    \tikzstyle{annot} = [text width=3cm, text centered]
    \tikzstyle{annotLarge} = [text width=5cm, text centered]
    \def\NX{15}
    \def\NK{2}
    \def\NY{1}
    \def\middle{0}
    \foreach \i in {1,...,\NX}
     	  \pgfmathparse{0.9*rnd+0.3}
          \definecolor{MyColor}{rgb}{\pgfmathresult,\pgfmathresult,\pgfmathresult}
    	\node[factor, fill=MyColor] (X-\i) at (\middle 0, 0.24*0.5*\NX+0.12 - 0.24*\i ) {}; 
    \foreach \k in {1,...,\NK}
    	\node[latent] (K-\k) at (2,\middle  \k - 1.5) {}; 
    \foreach \y in {1,...,\NY}
    	\node[output] (Y-\y) at (3.5,0) {}; 	
    \node[output_y] (Y-2) at (4.5,0) {}; 
    
    \foreach \i in {1,...,\NX}
    	\foreach \k in {1,...,\NK}
    		\path[-] (X-\i) edge (K-\k);
   	\foreach \k in {1,...,\NK}
    	\foreach \y in {1,...,\NY}
    		\path[-] (K-\k) edge (Y-\y);
    \path[-] (Y-1) edge (Y-2);
    \node[annotLarge] at (-1.5,0) {$(X_{\mu i})_{\mu,i=1}^{m,n}$ \\ samples};
    \node[annot] at (1.2,-1.6) {$W^*_{il} \in \bbR^{n \times K}$}  ;
    \node[annot] at (4.525,0) {${Y_{\mu}}$}  ;
    \node[annot] at (2.9,-0.8) {$W^{(2)}\in\bbR^{K}$}  ;
    \node[annot] at (2,-0.45) {\small $ f^{(1)}$}  ;
    \node[annot] at (2,0.55) {\small $ f^{(1)}$}  ;
    \node[annot] at (3.5,0.05) {\small $ f^{(2)}$}  ;
    \node[annot] at (0,2.2) {\small $n$ features}  ;
    \node[annot] at (2,1.3) {\small $K$ hidden\\ units}  ;
    \node[annot] at (4.5,0.7) {\small output}  ;
	\end{tikzpicture}	
	\caption{The \emph{committee machine} is one of the simplest models belonging to the considered model class \eqref{model}, and on which we focus to illustrate our results. It is a two-layers neural network with activation sign functions $f^{(1)},f^{(2)}=\sign$ and weights $W^{(2)}$ fixed to unity. It is represented for $K=2$.}
	\label{fig:committee}
\end{figure}
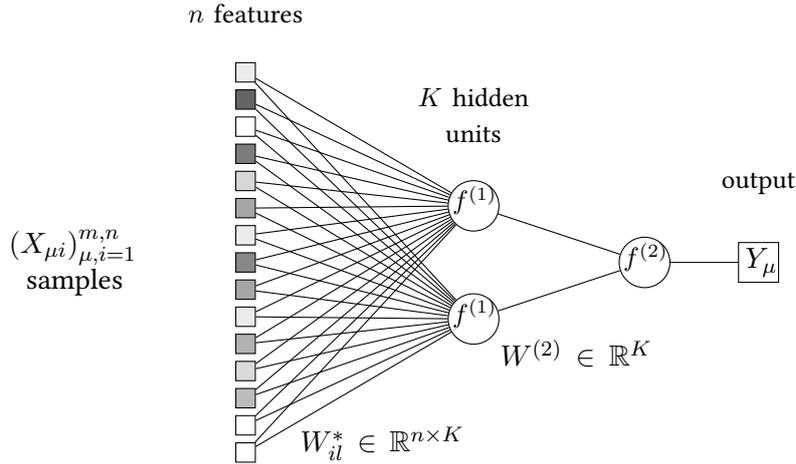

\section{Main technical results} \label{part2}

\subsection{A general model}
While in the illustration of our results we shall focus on the model (\ref{model:com}), all our formulas are valid for a broader class of models: Given $m$ input samples $(X_{\mu i})_{\mu,i=1}^{m,n}$, we denote $W_{il}^* $ the teacher-weight connecting the $i$-th input (i.e. visible unit) to the $l$-th node of the hidden layer. For a generic function $\varphi_{\rm out}: \bbR^K \times \bbR \to \bbR$ one can formally write the output as
\begin{equation}\label{model}
Y_\mu = \varphi_{\rm out} \Big(\Big\{\frac{1}{\sqrt n}\sum_{i=1}^n X_{\mu i} W_{i l}^* \Big\}_{l=1}^K, A_\mu \Big)~~~~~~\text{or}~~~~~~ Y_\mu \sim P_{\rm out}\Big( \cdot\Big| \Big\{\frac{1}{\sqrt n}\sum_{i=1}^n X_{\mu i} W_{i l}^* \Big\}_{l=1}^K \Big)\, ,
\end{equation}
where $(A_\mu)_{\mu=1}^m$ are i.i.d.\ real valued random variables with
known distribution $P_A$, that form the probabilistic part of the model, generally accounting for noise. 

For deterministic models the second argument is simply absent (or is a Dirac mass). We can view alternatively \eqref{model} as a channel where the transition kernel $P_{\rm out}$ is directly related to $\varphi_{\rm out}$. As discussed above, we focus on the teacher-student scenario where the teacher generates Gaussian i.i.d.\ data $X_{\mu i}\sim{\cal N}(0,1)$, and i.i.d.\ weights $W_{il}^* \sim P_0$. The student then learns $W^* $ from the data $(X_{\mu},Y_\mu)_{\mu=1}^m$ by computing marginal means of the posterior probability distribution (\ref{posterior-measure}).

Different scenarii fit into this general framework. Among those, the committee machine is obtained when choosing $\varphi_{\rm out}({h})={\rm{\sign}}(\sum_{l=1}^K {\sign} ( h_l) )$ while another model considered previously is given by the parity machine, when $\varphi_{\rm out}({h})=\prod_{l=1}^K {\sign} ( h_l)$, see e.g.  \cite{engel2001statistical} and sec.~\ref{sec:parity} for the numerical results in the case $K=2$. A number of layers beyond two has also been considered, see \cite{MatoParga92}. Other activation functions can be used, and many more problems can be described, e.g. compressed pooling \cite{alaoui2016decoding,el2017decoding} or multi-vector compressed sensing \cite{zhu2017performance}. 

\subsection{Two auxiliary inference problems} 
Denote $\mathcal{S}_K$ the finite dimensional vector space of $K\times K$ matrices,
$\mathcal{S}_K^+$ the convex set of
semi-definite positive $K\times K$ matrices, $\mathcal{S}_K^{++}$ for
positive definite $K\times K$ matrices, and
$\forall \,N \in \mathcal{S}_K^+$ we set
$S_K^{+}(N) \equiv \{M \in S_K^+ \text{ s.t. } N-M \in \mathcal{S}_K^+\}$. Note that ${\cal S}_K^+(N)$ is convex and compact.

Stating our results requires introducing two simpler auxiliary $K$-dimensional estimation problems: \\
$\bullet$ The first one consists in retrieving a $K$-dimensional input vector $W_0\sim P_0$ from the output of a Gaussian vector
channel with $K$-dimensional observations $$Y_0= r^{1/2} W_0 + Z_0\,,$$ $Z_0 \sim \mathcal{N}(0, I_{K\times K})$ and the ``channel gain''
matrix $r\in \mathcal{S}_K^+$. The posterior distribution
on $w= (w_l)_{l=1}^K$ is 
\begin{align}
 P(w \vert Y_0)=\frac{1}{{\cal Z}_{P_0}}P_0(w) e^{Y_0^\intercal r^{1/2} w - \frac{1}{2} w^\intercal r w}\, ,
\label{aux-model-1}
\end{align}
and the associated {\it free entropy} (or minus {\it free energy}) is given by the expectation over $Y_0$ of the log-partition function $$\psi_{P_0}(r) \equiv \mathbb{E}\ln {\cal Z}_{P_0}$$ and involves $K$ dimensional integrals. \\
$\bullet$ The second problem considers $K$-dimensional i.i.d.\ vectors $V, U^* \sim \mathcal{N}(0, I_{K\times K})$ where $V$ is considered to be known and one has to retrieve $U^* $ from a scalar observation obtained as $$\widetilde Y_0 \sim P_{\rm out}(\,\cdot\,| q^{1/2} V + (\rho-q)^{1/2}U^* )$$ where the second moment 
matrix $\rho \equiv \mathbb{E}[W_0 W_0^\intercal]$ is in $\mathcal{S}_K^+$ (where $W_0\sim P_0$) and the so-called ``overlap matrix''
$q$ is in $S_K^{+}(\rho)$. The associated posterior is 
\begin{align}
 P(u \vert \widetilde Y_0, V) = \frac{1}{{\cal Z}_{P_{\rm out}}}\frac{e^{-\frac{1}{2}u^\intercal u}}{{(2\pi)^{K/2}}}P_{\rm out}\big(\widetilde{Y}_0 | q^{1/2} V + (\rho - q)^{1/2}u\big)\, ,
\label{aux-model-2}
\end{align}
and the free entropy reads this time $$\Psi_{P_{\rm out}}(q;\rho) \equiv \mathbb{E} \ln {\cal Z}_{P_{\rm out}}$$ (with the expectation over $\widetilde Y_0$ and $V$) and also involves $K$ dimensional integrals.

\subsection{The free entropy}
 The central object of study leading to the optimal learning and generalization errors in the present setting is the posterior distribution of the weights:
\begin{align}
 P(\{w_{il}\}_{i,l=1}^{n, K}  \mid  \{X_{\mu i},Y_\mu\}_{\mu,i=1}^{m,n})  =   \frac{1}{{\cal Z}_n}\prod_{i=1}^n  P_0(\{w_{il}\}_{l=1}^K) \prod_{\mu=1}^m  P_{\rm out}\Big(Y_\mu\Big|\Big\{\frac{1}{\sqrt n}\sum_{i=1}^n X_{\mu i} w_{i l}\Big\}_{l=1}^K\Big)\,,
\label{posterior-measure}
\end{align}
where the normalization factor is nothing else than a {\it partition function}, i.e. the integral of the numerator over $\{w_{il}\}_{i,l=1}^{n, K}$. 
The expected\footnote{The symbol $\mathbb{E}$ will generally denote an expectation over all random variables in the ensuing expression (here $\{X_{\mu i}, Y_\mu\}$). 
Subscripts will be used 
only when we take partial expectations or if there is an ambiguity. 
} free entropy is by definition
\begin{align}
	 f_n \equiv  \frac1n\mathbb{E}\ln {\cal Z}_n \,. \label{freeent}
\end{align}
The replica formula gives an explicit (conjectural) expression of
$f_n$ in the high-dimensional limit $n,m\to \infty$ with $\alpha = m/n$
fixed. We show in sec.~\ref{sec:replicacomputation} how the heuristic replica
method \cite{mezard1987spin,mezard2009information} yields the
formula. This computation was first performed, to the best of our knowledge, by \cite{schwarze1993learning} in the case of the committee machine. 
Our first contribution is a rigorous proof of the corresponding free entropy formula using an interpolation method \cite{guerra2003broken,talagrand2003spin,BarbierM17a}, under a technical Assumption~\ref{assumption}.

In order to formulate our results, we add an (arbitrarily small) Gaussian regularization noise $Z_\mu\sqrt{\Delta}$ to the first expression of the model \eqref{model}, where $\Delta>0$, $Z_\mu\sim\mathcal{N}(0,1)$, which thus becomes
\begin{align}\label{modelnoise}
Y_\mu = \varphi_{\rm out} \Big(\Big\{\frac{1}{\sqrt n}\sum_{i=1}^n X_{\mu i} W_{i l}^* \Big\}_{l=1}^K, A_\mu \Big)+Z_\mu\sqrt{\Delta}\,,	
\end{align}
so that the channel kernel is ($u\in \mathbb{R}^K$)
\begin{align}\label{new-kernel}
 P_{\rm out}(y |u) = \frac{1}{\sqrt{2\pi\Delta}} \int_{\mathbb{R}} dP_A(a) e^{-\frac{1}{2\Delta}(y -\varphi_{\rm out}(u, a))^2}\,.
\end{align}
Let us define the {\it replica symmetric (RS) potential} as 
\begin{align}\label{RSpot}
f_{\rm RS}(q, r)=f_{\rm RS}(q, r;\rho)\equiv \psi_{P_0}(r) +\alpha\Psi_{P_{\rm out}}(q;\rho) -\frac{1}{2}{\rm Tr} (r q),
\end{align}
where $\alpha\equiv m/n$, and $\Psi_{P_{\rm out}}(q ;\rho)$ and $\psi_{P_0}(r)$
are the free entropies of the two simpler $K$-dimensional estimation problems \eqref{aux-model-1} and \eqref{aux-model-2}.

All along this paper, we assume the following hypotheses for our rigorous statements:

\begin{enumerate}[label=(H\arabic*),noitemsep]
	\item \label{hyp:1} The prior $P_0$ has bounded support in $\mathbb{R}^K$.
	\item \label{hyp:2} The activation $\varphi_{\rm out}: \mathbb{R}^K \times\mathbb{R} \to \mathbb{R}$ is a bounded $\cC^2$ function with bounded first and second derivatives w.r.t.\ its first argument (in $\mathbb{R}^K$-space).
	\item \label{hyp:3} For all $\mu=1, \ldots, m$ and $i=1,\ldots, n$ we have i.i.d.\ $X_{\mu i} \sim {\cal N}(0,1)$. 
\end{enumerate}

We finally rely on a technical hypothesis, stated as Assumption~\ref{assumption} in section \ref{subsec:technical-assumption}.

\begin{thm}[Replica formula]\label{main-thm}
 Suppose \ref{hyp:1}, \ref{hyp:2} and \ref{hyp:3}, and Assumption~\ref{assumption}\footnote{Since the publication of this work the adaptive interpolation method used in this paper has been improved for finite-rank models and can now circumvent this artificial hypothesis, see \cite{barbier2020information} and \cite{reeves2020information}.}. Then for the model
 \eqref{modelnoise} with kernel \eqref{new-kernel} the limit of the free entropy is:
\begin{align}\label{repl-1}
\lim_{n\to \infty}f_n \equiv \lim_{n\to \infty}\frac1n \mathbb{E}\ln {\cal Z}_n= {\adjustlimits \sup_{r\in \mathcal{S}^{+}_K} \inf_{q\in\mathcal{S}_K^+(\rho)}} f_{\rm RS}(q, r)\,.
\end{align}
\end{thm}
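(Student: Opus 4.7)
My plan is to follow the adaptive interpolation method of Barbier and Macris, as signalled in the excerpt. I will construct a one-parameter interpolating model indexed by $t\in[0,1]$, with free entropy $f_n(t)$, such that $f_n(0)=f_n$ and at $t=1$ the problem decouples into the two single-letter auxiliary problems defining $\psi_{P_0}$ and $\Psi_{P_{\rm out}}$. Concretely, at interpolation time $t$ the original committee-machine observations are weakened by an appropriate factor while each weight $W^*_i$ receives a side Gaussian observation with $K\times K$ SNR matrix $R(t)=\int_0^t r(s)\,ds$, and each teacher pre-activation $\tfrac{1}{\sqrt n}\sum_i X_{\mu i}W^*_{il}$ receives a side Gaussian observation with SNR matrix $Q(t)=\int_0^t q(s)\,ds$. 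The point of the adaptive method --- as opposed to the classical Guerra--Toninelli scheme --- is that the matrix-valued paths $r(\cdot)\in\mathcal{S}_K^+$ and $q(\cdot)\in\mathcal{S}_K^+(\rho)$ are not prescribed in advance but are chosen as functions of a small auxiliary perturbation, so that both the upper and the lower bound can be extracted in a unified way.

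Computing $\tfrac{d}{dt}f_n(t)$ by Gaussian integration by parts and the Nishimori identity (valid in this Bayes-optimal setting) produces a sum rule of the schematic form
\begin{align*}
f_n =\; & \psi_{P_0}(R(1))+\alpha\,\Psi_{P_{\rm out}}(Q(1);\rho) \\
& -\tfrac{1}{2}\int_0^1\mathrm{Tr}\bigl(r(t)q(t)\bigr)dt +\alpha\int_0^1\mathbb{E}\,\mathrm{Tr}\bigl[(Q_n(t)-q(t))\,M(t)\bigr]dt+o_n(1),
\end{align*}
where $Q_n(t)$ is the $K\times K$ empirical overlap between the teacher and a posterior sample in the interpolated model, and $M(t)$ is a related MMSE-type matrix. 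Specialising to constant paths $r(t)\equiv r^\star$, $q(t)\equiv q^\star$, and using Nishimori once more to re-express the remainder as $-\alpha\,\mathbb{E}\,\mathrm{Tr}[(Q_n-q^\star)M(Q_n-q^\star)^\intercal]\le 0$ for suitable $q^\star$, one obtains the Guerra-type inequality $f_n\le f_{\rm RS}(q^\star,r^\star)+o_n(1)$, and hence, after optimisation, $\limsup f_n\le\sup_r\inf_q f_{\rm RS}(q,r)$.

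For the matching lower bound one exploits the adaptive freedom: choose $(q(t),r(t))$ so that $q(t)$ tracks the conditional expectation of $Q_n(t)$ and $r(t)$ solves an ODE forcing the remainder integral to vanish to leading order, which converts the sum rule into $\liminf f_n\ge\sup_r\inf_q f_{\rm RS}(q,r)$ via a standard minimax argument on the compact set $\mathcal{S}_K^+(\rho)$. The central technical obstacle --- and precisely the role of the unproven Assumption~\ref{assumption} referenced in the statement --- is the \emph{concentration of the $K\times K$ overlap matrix} $Q_n(t)$ around its posterior-disorder-averaged value, uniformly in $t$ and in the auxiliary perturbation. For $K=1$ this concentration is established by a small perturbation trick on a scalar quantity, but the committee machine's matrix-valued overlap carries a hidden-unit permutation symmetry that substantially complicates the analysis; my proposal therefore assumes this concentration and derives both bounds as sketched above.
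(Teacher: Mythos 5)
Your overall strategy is the right one --- the paper's proof is exactly an adaptive interpolation with a $t$-dependent Gaussian side channel on the weights (SNR $R_1(t)=\epsilon_1+\int_0^t r$) and on the pre-activations (SNR $R_2(t)=\epsilon_2+\int_0^t q$), a sum rule obtained by Gaussian integration by parts and Nishimori, and matching bounds. However, two of your key mechanisms do not survive scrutiny. First, your upper bound: the remainder in the sum rule here is $-\tfrac12\mathbb{E}\langle\mathrm{Tr}[(\tfrac1n\sum_\mu\nabla u_{Y_\mu}(\underline s_\mu)\nabla u_{Y_\mu}(\underline S_\mu)^\intercal-r(t))(Q-q(t))]\rangle$, which is a cross term between two fluctuating quantities and has \emph{no definite sign} for constant paths --- there is no Nishimori manipulation turning it into $-\alpha\,\mathbb{E}\,\mathrm{Tr}[(Q-q^\star)M(Q-q^\star)^\intercal]$. (Indeed, if a Guerra-type sign argument worked for arbitrary constant $(q^\star,r^\star)$ you would conclude $\limsup f_n\le\inf_r\inf_q f_{\rm RS}$, which is not the claimed $\sup_r\inf_q$.) The paper kills this remainder for \emph{both} bounds by Cauchy--Schwarz together with overlap concentration, choosing $q(t)=\mathbb{E}\langle Q\rangle_{n,t,\epsilon}$ adaptively in both cases; the upper bound then additionally requires $r(t)=2\alpha\nabla\Psi_{P_{\rm out}}(q(t))$ (a coupled ODE), convexity of $\psi_{P_0}$ and $\Psi_{P_{\rm out}}$, and the observation that this stationarity condition makes $q(t,\epsilon)$ the minimiser of $q\mapsto f_{\rm RS}(q,r(t,\epsilon))$.

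Second, and more importantly for proving the theorem \emph{as stated}: Assumption~\ref{assumption} is not the overlap concentration. It is the positivity condition $\sum_{l\le l'}\partial(F_n^{(2)})_{ll'}/\partial(R_2)_{ll'}\ge 0$ for the map $F_n^{(2)}=2\alpha\nabla\Psi_{P_{\rm out}}(\mathbb{E}\langle Q\rangle_{n,t,\epsilon})$. Its role is to guarantee, via the Liouville formula, that the Jacobian determinant of $\epsilon\mapsto R(t,\epsilon)$ defined by the ODEs is $\ge 1$, so that $\epsilon\mapsto R(t,\epsilon)$ is a $\mathcal{C}^1$ diffeomorphism and the average over the perturbation $\epsilon$ can be lower-bounded by an average over the effective SNRs; \emph{that} is what lets one import the concentration proof of the matrix overlap from the existing literature. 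By assuming concentration outright you bypass the entire content of Lemma~\ref{lemma:positivity_trace_jac}, the Liouville/Jacobian argument, and the reason Assumption~\ref{assumption} takes the form it does --- so your argument, even if the upper bound were repaired, establishes a different conditional statement than Theorem~\ref{main-thm}.
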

This theorem extends the recent progress for generalized linear models of \cite{barbier2017phase}, which includes the case $K=1$ of the present contribution, to the phenomenologically richer case of two-layers problems such as the committee machine. The proof sketch based on an {\it adaptive interpolation method} recently developed in \cite{BarbierM17a} is outlined in sec.~\ref{sec:proofsketch} and the details can be found in sec.~\ref{smproof}. 
\begin{remark}[Relaxing the hypotheses]
Note that, following similar approximation arguments as in \cite{barbier2017phase}, the hypothesis \ref{hyp:1} can be relaxed to the existence of the second moment of the prior; thus covering the Gaussian case, \ref{hyp:2} can be dropped (and thus include model \eqref{model:com} and its $\sign(\cdot)$ activation) and \ref{hyp:3} extended to data matrices $X$ with i.i.d.\ entries of zero mean, unit variance and finite third moment. Moreover, the case $\Delta=0$ can be considered when the outputs are discrete, as in the committee machine \eqref{model:com}, see \cite{barbier2017phase}. The channel kernel becomes in this case $P_{\rm out}(y |u) =  \int dP_A(a) \mathbf{1}(y -\varphi_{\rm out}(u, a))$ and the replica formula is the limit $\Delta\to0$ of the one provided in Theorem~\ref{main-thm}. In general this regularizing noise is needed for the free entropy limit to exist.	
\end{remark}

\subsection{Learning the teacher weights and optimal generalization error} 
A classical result in Bayesian estimation is that the estimator $\hat W$ that minimizes the mean-square error with the ground-truth $W^* $ is given by the expected mean of the posterior distribution. Denoting $q^* $ the extremizer in the replica formula (\ref{repl-1}), we expect from the replica method that in the limit $n\to\infty, m/n=\alpha$, and with high probability, $\hat W^\intercal  W^* /n  \to  q^* $. We refer to proposition~\ref{concentration} and to the proof in sec.~\ref{smproof} for the precise statement, that remains rigorously valid {\it only} in the presence of an additional (possibly infinitesimal) side-information. 
From the overlap matrix $q^*$, one can compute the Bayes-optimal generalization error when the student tries to classify a new, yet unseen, sample $X_{\rm new}$. The estimator of the new label $\hat Y_{\rm new}$ that minimizes the mean-square error with the true label is given by computing the posterior mean of $\varphi_{\rm out}(X_{\rm new}  w)$ ($X_{\rm new}$ is a row vector). Given the new sample, the optimal generalization error is then 
\begin{align}
 \frac{1}{2} \EE_{X,W^*} \left[\left(\EE_{w|X,Y} \big[ \varphi_{\rm out}(X_{\rm new}  w)\big] - \varphi_{\rm out}(X_{\rm new} W^* )\right)^2\right]\xrightarrow[n \to \infty]{}\epsilon_g(q^*),
\end{align}
where $w$ is distributed according to the posterior measure \eqref{posterior-measure} (note that this Bayes-optimal computation differs from the so-called Gibbs estimator by a factor $2$, see sec.~\ref{sec:generalization}). In particular, when the data $X$ is drawn from the standard Gaussian distribution on $\bbR^{m\times n}$, and is thus rotationally invariant, it follows that this error only depends on $ w^\intercal  W^*/n $, which converges to $q^* $. Then a direct algebraic computation gives a lengthy but explicit formula for $\epsilon_g(q^*)$, as shown in sec.~\ref{sec:generalization}.

\subsection{Approximate message passing, and its state evolution} 
Our next result is based on an adaptation of a popular algorithm to solve random instances of generalized linear models, the \emph{Approximate Message Passing} (AMP) algorithm \cite{donoho2009message,rangan2011generalized}, for the case of the committee machine and models described by \eqref{model}. 

The AMP algorithm can be obtained as a Taylor expansion of loopy belief-propagation (see sec.~\ref{sec:AMP}) and also originates in earlier statistical physics works \cite{thouless1977solution,mezard1989space,opper1996mean,Kaba,Baldassi26062007,REVIEWFLOANDLENKA}. It is conjectured to perform the best among all polynomial algorithms in the framework of these models. It thus gives us a tool to evaluate both the intrinsic algorithmic hardness of the learning and the performance of existing algorithms with respect to the optimal one in this model. 

\begin{algorithm}
\caption{Approximate Message Passing for the committee machine\label{alg:AMP}}  
\begin{algorithmic}
    \STATE {\bfseries Input:} vector $Y \in \bbR^m$ and matrix $X\in \bbR^{m \times n}$:
    \STATE \emph{Initialize}: $g_{\rm out,\mu} = 0, \Sigma_i = I_{K\times K} $ for $ 1 \leq i \leq n $ and $ 1 \leq \mu \leq m $ at $t=0$.
    \STATE \emph{Initialize}: $\hat{W}_i \in \bbR^K$ and $\hat{C}_i$, $\partial_{\omega} g_{\rm out , \mu}$ $\in \mathcal{S}_K^+$ for $ 1 \leq i \leq n $ and $ 1 \leq \mu \leq m $ at $t=1$.
    \REPEAT   
    \STATE Update of the mean $\omega_{\mu} \in \bbR^K$ and covariance $V_{\mu}\in \mathcal{S}_K^+$: \\
    \hspace{0.5cm} $\omega_{\mu}^t = \sum\limits_{i = 1}^n \big(\frac{X_{\mu
      i}}{\sqrt{n}}\hat{W}_{i}^t -    \frac{X_{\mu
      i}^2}{n}
    \left(\Sigma_{i}^{t-1}\right)^{-1}\hat{C}_{i}^t \Sigma_{i
    }^{t-1}g_{\rm out,\mu}^{t-1} \big)    \hspace{0.5cm}|\hspace{0.5cm}  V_{\mu}^t = \sum\limits_{i=1}^n\frac{X_{\mu
      i}^2}{n} \hat{C}_{i}^t $\vspace{0.1cm}
    \STATE Update of $g_{\rm out, \mu} \in \bbR^K$ and $\partial_{\omega} g_{\rm out , \mu} \in \mathcal{S}_K^+$: \\
    \hspace{0.5cm}$g_{\rm out, \mu}^t = g_{\rm out} (\omega_{\mu}^t , Y_{\mu}, V_{\mu}^t) \hspace{0.5cm}|\hspace{0.5cm}  \partial_{\omega} g_{\rm out, \mu}^t = \partial_{\omega}  g_{\rm out} (\omega_{\mu}^t , Y_{\mu}, V_{\mu}^t)  $ \vspace{0.1cm}
    \STATE Update of the mean $T_i \in \bbR^K$ and covariance $\Sigma_i \in \mathcal{S}_K^+$:\\
    \hspace{0.5cm}$T_i^t = \Sigma_{i}^t \Big(  \sum\limits_{\mu =1}^m
      \frac{X_{\mu
      i}}{\sqrt{n}}g_{\rm out,\mu}^t  -\frac{X_{\mu
      i}^2}{n}  \partial_{\omega} g_{\rm out , \mu}^t \hat{W}_{i}^t \Big) \hspace{0.5cm}|\hspace{0.5cm}  \Sigma_{i}^t = -\Big(\sum\limits_{\mu =1}^m \frac{X_{\mu
      i}^2}{n}  \partial_\omega g_{\rm out,\mu}^t \Big)^{-1} $\vspace{0.1cm}
    \STATE Update of the estimated marginals $\hat{W}_i \in \bbR^K$ and $\hat{C}_i \in \mathcal{S}_K^+$: \\
    \hspace{0.5cm}$\hat{W}_i^{t+1} = f_w( \Sigma_i^t , T_i^t )   \hspace{0.5cm}|\hspace{0.5cm}  \hat{C}_i^{t+1} = f_c( \Sigma_i^t , T_i^t )$\vspace{0.1cm}
    \STATE ${t} = {t} + 1$ 
    \UNTIL{Convergence on
    $\hat{W}$, $\hat{C}$.} 
    \STATE {\bfseries Output:}
    $\hat{W}$ and $\hat{C}$.
\end{algorithmic}
\end{algorithm}

The AMP algorithm is summarized by its pseudo-code in Algorithm~\ref{alg:AMP}, where the update
functions $g_{\rm out}$, $\partial_{\omega}g_{\rm out}$, $f_w$ and
$f_c$ are related, again, to the two auxiliary problems
(\ref{aux-model-1}) and (\ref{aux-model-2}). The functions $f_w(\Sigma,T)$
and $f_c(\Sigma,T)$ are respectively the mean and variance under the posterior distribution \eqref{aux-model-1} when $r \to \Sigma^{-1}$ and $Y_0 \to \Sigma^{1/2}T
$, while  $g_{\rm
  out}(\omega_{\mu},Y_{\mu},V_{\mu})$  is given by the product of $V_{\mu}^{-1/2}$ and the mean of $u$ under the posterior \eqref{aux-model-2} using
$\widetilde Y_0 \to Y_{\mu} $, $\rho-q \to V_{\mu}$ and $q^{1/2}V \to \omega_{\mu}$
 (see sec.~\ref{sec:AMP} for more details). After
convergence, $\hat W$ estimates the weights of the teacher-neural
network. The label of a sample $X_{\rm new}$ not seen in the training
set is estimated by the AMP algorithm as 
\begin{align}
      Y^t_{\rm new} = \int dy \big(\prod_{l=1}^K dz_l\big) \, y\,
  P_{\rm out}(y|\{z_l\}_{l=1}^K) {\cal N}(z; \omega_{\rm new}^t ,
  V_{\rm new}^t)\, ,    \label{AMP_gen}
\end{align}
where $\omega_{\rm new}^t = \sum_{i=1}^n X_{{\rm new},  i} \hat W_i^t$ is
the mean of the normally distributed variable $z\in {\mathbb R}^K$,
and $V_{\rm new}^t=\rho-q_{\rm AMP}^t$ is the $K\times K$ covariance
matrix (see below for the definition of $q_{\rm AMP}^t$). We provide a demonstration code of the algorithm on \href{https://github.com/benjaminaubin/TheCommitteeMachine}{GitHub}  \cite{GitHub_AMP_Aubin}.

AMP is particularly interesting because its performance can be tracked rigorously, again in the asymptotic limit when $n \to
\infty$, via a procedure known as state evolution (a rigorous version
of the cavity method in physics \cite{mezard2009information}), see \cite{javanmard2013state}. State evolution tracks the value of the overlap between
the hidden ground truth $W^*$ and the AMP estimate $\hat W^t$, defined as $q_{\rm AMP}^t\equiv\lim_{n\to\infty}({\hat W^t})^\intercal   W^*/n$, via the iteration of the following equations:
\begin{equation}
\label{main:StateEvolution}
	q_{\rm AMP}^{t+1} = 2 \nabla\psi_{P_0}(r_{\rm AMP}^t) \, ,	 \hspace{1cm}
	r_{\rm AMP}^{t+1} = 2 \alpha \nabla\Psi_{P_{\rm out}}(q_{\rm AMP}^t;\rho)\, .
\end{equation}
See sec.~\ref{sec:se} for more details and note that the fixed points of these equations correspond to the critical points
of the replica free entropy (\ref{repl-1}).

{\color{black} Let us comment further on the convergence of the
  algorithm. In the large $n$ limit, and if the integrals are
  performed without errors, then the algorithm is guaranteed to converge. This is a consequence of the state evolution combined with
the Bayes-optimal setting. In practice, of course, $n$ is finite and integrals are
approximated. In that case convergence is not guaranteed, but is
robustly achieved in all the cases presented in this paper.  
We also expect (by experience with the single layer case) that if the
input-data matrix is not random (which is beyond our assumptions) then we
will encounter convergence issues, which could be fixed by moving to
some variant of the algorithm such as VAMP \cite{[arXiv:1612.01186]}.}

\section{From two to more hidden neurons, and the specialization phase transition}
\subsection{Two neurons} 

\begin{figure}[t]
\centering
\includegraphics[width=1.0\linewidth]{./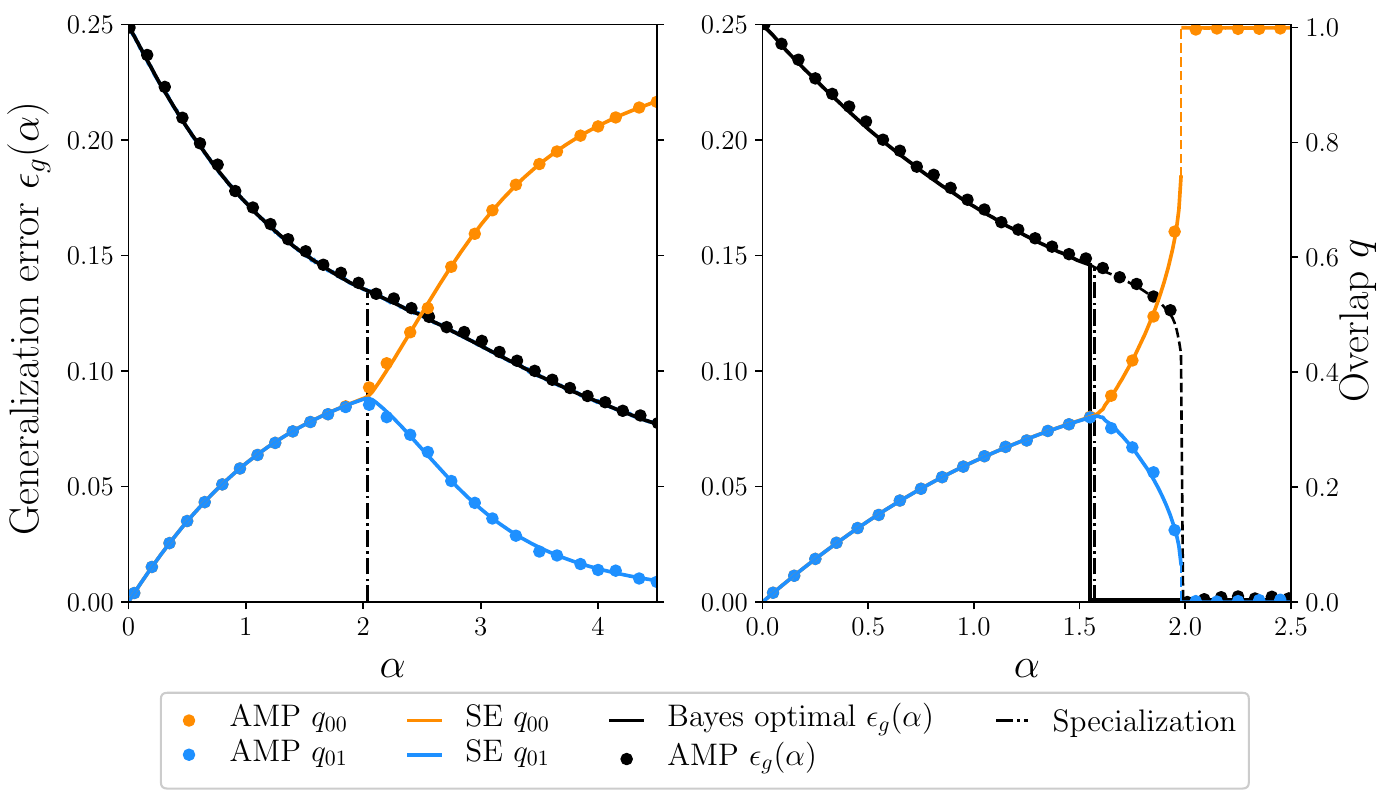}
 \caption{Generalization error and order parameter for a committee
   machine with two hidden neurons ($K=2$) with Gaussian weights (left),
   binary/Rademacher weights (right). These are shown as a function
   of the ratio $\alpha=m/n$ between the number of samples $m$ and the
   dimensionality $n$. Lines are obtained from the state evolution (SE)
   equations (dominating solution is shown in full line), data-points from the AMP algorithm averaged over 10
   instances of the problem of size $n=10^4$.
   $q_{00}$ and $q_{01}$ denote diagonal and off-diagonal overlaps,
   and their values are given by the labels on the far-right of the figure.
   }
\label{fig:phaseDiagramK2}
\vspace{-0.5cm}
\end{figure}

\begin{figure}[t]
\centering
\includegraphics[width=1.0\linewidth]{./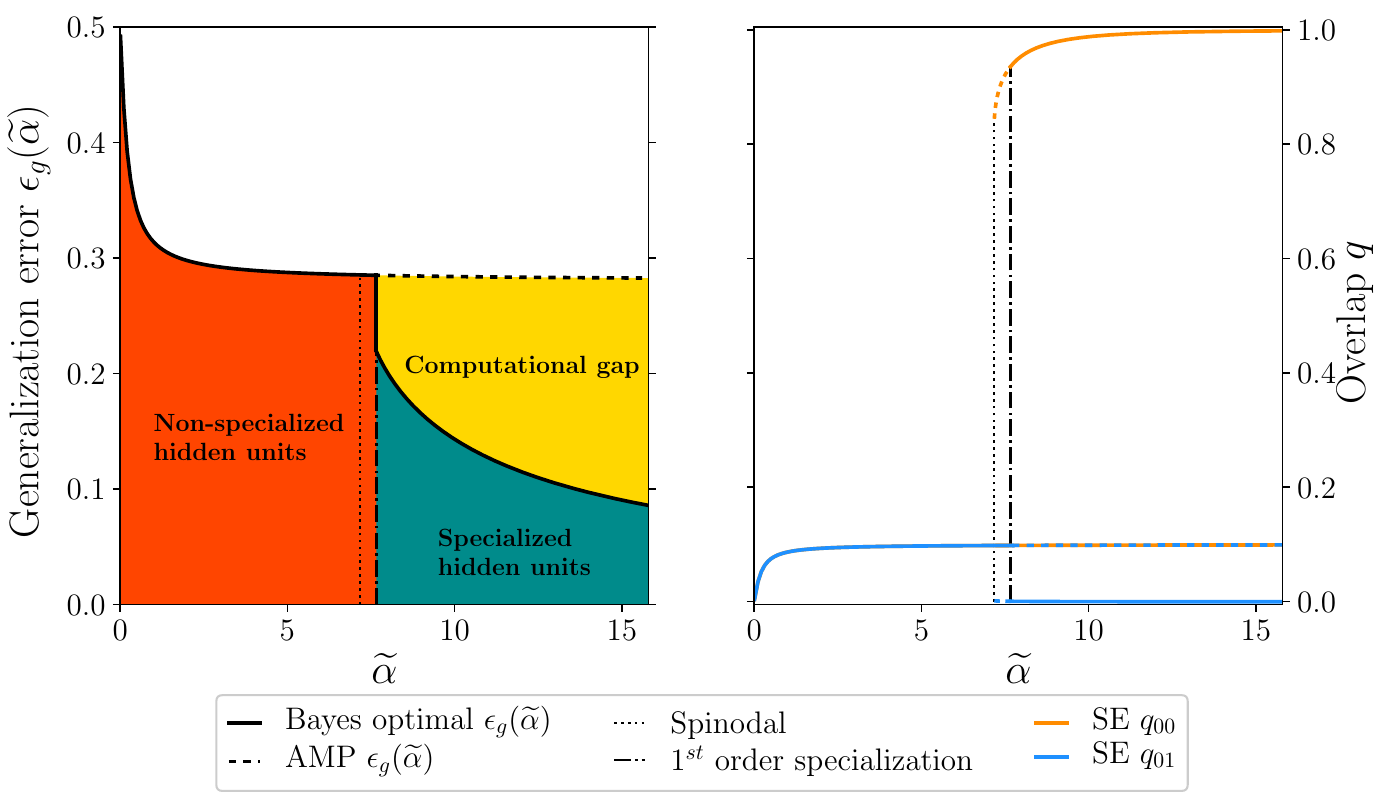}
 \caption{(Left) Bayes optimal and AMP generalization errors and (right) diagonal and off-diagonal overlaps $q_{00}$ and $q_{01}$ for a committee
   machine with a large number of hidden neurons $K$ and Gaussian weights, as a function
   of the rescaled parameter $\tilde{\alpha}=\alpha/K$. Curves shown correspond to the value $K = 10$.
   Solutions corresponding to global and local minima of the replica free entropy are respectively represented with full and dashed lines. The dotted line marks the spinodal at $\widetilde \alpha^G_{\rm spinodal}\simeq 7.17$, i.e.\ the apparition of a local minimum in the replica free entropy, associated to a solution with specialized hidden units.
  The dotted-dashed line shows the first order specialization transition at $\widetilde \alpha^G_{\rm spec} \simeq 7.65$, at which the specialized fixed point becomes the global minimum. 
  For $\widetilde \alpha < \widetilde \alpha^G_{\rm spec}$, AMP reaches the Bayes-optimal generalization error and overlaps, corresponding to a non-specialized solution 
  with $q_{00} = q_{01}$. 
   However, for $\widetilde \alpha > \widetilde \alpha^G_{\rm spec}$, the AMP algorithm does not follow the optimal specialized solution and is stuck in the non-specialized solution plateau, represented with dashed lines 
  (in particular $q_{00}^{\mathrm{AMP}} = q_{01}^{\mathrm{AMP}} \simeq 1/K$ at large $\widetilde \alpha$). 
   Hence, it unveils a large computational gap (yellow area). 
   We finally emphasize that the initial descent of the generalization error of the non-specialized solution to a plateau occurs for finite $\alpha$ as $K \to \infty$ (i.e.\ for $\widetilde \alpha$ going to $0$).
   On the other hand, the $K \to \infty$ limit of the transition points $(\widetilde \alpha^G_{\rm spec},\widetilde \alpha^G_{\rm spinodal})$, as well the generalization error values for all finite $\widetilde \alpha$, are found to be very well approximated by their values for $K = 10$.
   }
\label{fig:phaseDiagramKlarge}
\vspace{-0.5cm}
\end{figure}

Let us now discuss how the above results can be used to
study the optimal learning in the simplest non-trivial case of a
two-layers neural network with two hidden neurons, that is when model (\ref{model:com}) is simply
$$Y_{\mu}={\rm{\sign}}\Big[{\sign} \Big( \sum_{i=1}^n X_{\mu i}
W^*_{i1}\Big)+{\sign} \Big( \sum_{i=1}^n X_{\mu i} W^*_{i2}\Big)\Big]\,,$$ and is represented in Fig.~\ref{fig:committee}, with the
convention that ${\sign}(0)=0$. We remind that the
input-data matrix $X$ has i.i.d.\ ${\cal N}(0,1)$ entries, and the
teacher-weights $W^*$ used to generate the labels $Y$ are taken i.i.d.\ from $P_0$.

In Fig.~\ref{fig:phaseDiagramK2} we plot the optimal generalization error as a
function of the sample complexity $\alpha=m/n$. In the left panel the weights are Gaussian
(for both the teacher and the student), while in the right panel they are
binary/Rademacher. The full line is obtained from the fixed point of
the state evolution (SE) of the
AMP algorithm \eqref{main:StateEvolution}, corresponding to the extremizer of the replica free entropy \eqref{repl-1}. The points are results of the AMP algorithm run till convergence averaged
over 10 instances of size $n=10^4$. 
{\color{black} In this case and with random initial conditions the AMP algorithm did converge in all our trials.}
As expected we observe excellent agreement
between the SE and AMP.

In both left and right panels of Fig.~\ref{fig:phaseDiagramK2} we
observe the so-called {\it specialization} phase transition. 
Indeed, \eqref{main:StateEvolution} has two types of fixed points: a {\it
  non-specialized} fixed point where every matrix element of the $K\times K$
order parameter $q$ is the same (so that both hidden neurons learn the
same function) and a {\it specialized} fixed point where the diagonal
elements of the order parameter are different from the non-diagonal
ones. We checked for other types of fixed points for $K=2$ (one where
the two diagonal elements are not the same), but have not found
any. In terms of weight-learning, this means for the non-specialized fixed point
that the estimators for both $W_{1}$ and $W_{2}$ are the same,
whereas in the specialized fixed point the estimators of the weights
corresponding to the two hidden neurons are different, and that the
network ``figured out'' that the data are better described by a model that 
is not linearly
separable. The specialized fixed point is associated
with lower error than the non-specialized one (as one can see in
Fig.~\ref{fig:phaseDiagramK2}). The existence of this phase transition
was discussed in statistical physics literature on the
committee machine, see
e.g. \cite{schwarze1992generalization,saad1995line}.

For Gaussian weights (Fig.~\ref{fig:phaseDiagramK2} left), the
specialization phase transition arises continuously at
$\alpha^G_{\rm spec}(K=2)\simeq 2.04$. This means that for
$\alpha<\alpha^G_{\rm spec}(K=2)$ the number of samples is too small,
and the student-neural network is not able to learn that two different
teacher-vectors $W_1$ and $W_2$ were used to generate the observed
labels. For $\alpha>\alpha^G_{\rm spec}(K=2)$, however, it  is able to
distinguish the two different weight-vectors and the generalization
error decreases fast to low values (see
Fig.~\ref{fig:phaseDiagramK2}). For completeness, we remind that in the
case of $K=1$ corresponding to single-layer neural network no such
specialization transition exists. We show in sec.~\ref{sec:linear_net} that it is absent also in multi-layer neural networks as long as the activations remain linear. The
non-linearity of the activation function is therefore an essential
ingredient in order to observe a specialization phase transition.

The right part of Fig.~\ref{fig:phaseDiagramK2} depicts the fixed
point reached by the state evolution of AMP for the case of binary
weights. We observe two phase transitions in the performance of AMP in
this case: (a) the specialization phase transition at
$\alpha_{\rm spec}^{B}(K=2) \simeq 1.58$, and for slightly larger
sample complexity a transition towards {\it perfect generalization}
(beyond which the generalization error is asymptotically zero) at
$\alpha^B_{\rm perf}(K=2)\simeq 1.99$. The binary case with $K=2$
differs from the Gaussian one in the fact that perfect generalization
is achievable at finite $\alpha$. While the specialization transition
is continuous here, the error has a discontinuity at the 
transition of perfect generalization. This discontinuity is associated
with the 1st order phase transition (in the physics nomenclature), leading
to a gap between algorithmic (AMP in our case) performance and
information-theoretically optimal performance reachable by exponential
algorithms. To quantify the optimal performance we need to evaluate
the global extremum of the replica free entropy (not the local one
reached by the state evolution). In doing so that we get that information
theoretically there is a single discontinuous phase transition towards
perfect generalization at $\alpha^B_{\rm IT}(K=2)\simeq 1.54$.

While the information-theoretic and specialization
phase transitions were identified in the physics literature
on the committee machine
\cite{schwarze1992generalization,schwarze1993generalization,seung1992statistical,watkin1993statistical},
the gap between the information-theoretic performance and the
performance of AMP ---that is conjectured to be optimal
among polynomial algorithms--- was not yet discussed in the context of
this model. Indeed, even its understanding in simpler models than those
discussed here, such as the single layer case, is more recent
\cite{donoho2009message,REVIEWFLOANDLENKA,donoho2013accurate}.

\subsection{More is different} It becomes more difficult to study the
replica formula for larger values of $K$ as it involves (at least)
$K$-dimensional integrals. Quite interestingly, it is possible to
work out the solution of the replica formula in the large $K$ limit (thus taken {\it after} the large $n$ limit, so that $K/n$ vanishes).
It is indeed natural to look for solutions of the replica formula, as
suggested in \cite{schwarze1993learning}, of the form $q = q_d
I_{K\times K} + ({q_a}/{K}) \textbf{1}_K \textbf{1}_K^\intercal$, with the
unit vector $\textbf{1}_K = (1)_{l=1}^K$. Since both $q$ and $\rho$
are assumed to be positive, this scaling implies that $0\leq q_d \leq 1$ and $0 \leq q_a +
q_d \leq 1$, as it should, see sec. \ref{sec:largeK}. We also detail in this same section the corresponding large $K$ expansion of the free entropy
for the teacher-student scenario with Gaussian weights. Only the
information-theoretically reachable generalization error was computed
\cite{schwarze1993learning}, thus we concentrated on the analysis of
performance of AMP by tracking the state evolution equations. In doing
so, we unveil a large computational gap.

In the right panel of Fig.~\ref{fig:phaseDiagramKlarge} we show the fixed
point values of the two overlaps $q_{00} = q_d + q_a/K$ and $q_{01} = q_a/K$ and the resulting generalization error, plotted in the left panel. As discussed in \cite{schwarze1993learning} it
can be written in a closed form as $\epsilon_g= \arccos
\left[2\left(q_a + \arcsin q_d\right)/\pi\right]/\pi$, represented in the left panel of Fig.~\ref{fig:phaseDiagramKlarge}. The specialization
transition arises for $\alpha=\Theta(K)$, so we define $\widetilde
\alpha\equiv \alpha/K$. The specialization is now a 1st order phase
transition, meaning that the specialization fixed point first appears
at $\widetilde \alpha^G_{\rm spinodal}\simeq 7.17$ but the
free entropy global extremizer remains the one of the non-specialized fixed point until
$\widetilde  \alpha^G_{\rm spec}\simeq 7.65$. This has
interesting implications for the optimal generalization
error that gets towards a plateau of value $\varepsilon_{\rm plateau}
\simeq 0.28$ for $\widetilde \alpha < \widetilde \alpha^G_{\rm spec} $ and then
jumps discontinuously down to reach a decay asymptotically as $1.25/ \widetilde \alpha$. See left panel of Fig.~\ref{fig:phaseDiagramKlarge}.

AMP is conjectured to be optimal among all polynomial algorithms (in the considered limit) and thus analyzing its state
evolution sheds light on possible computational-to-statistical gaps
that come hand in hand with 1st order phase transitions. In the
regime of $\alpha = \Theta(K)$ for large $K$ the non-specialized fixed
point is always stable implying that AMP will not be
able to give a lower generalization error than $\varepsilon_{\rm
plateau}$. Analyzing the replica formula for large $K$ in more details, see sec. \ref{sec:largeK}, we concluded that AMP will not reach the optimal generalization for any
$\alpha < \Theta(K^2)$. This implies a rather sizable gap between the
performance that can be reached information-theoretically and the one
reachable tractably (see yellow area in Fig.~\ref{fig:phaseDiagramKlarge}). Such large computational gaps have been
previously identified in a range of inference problems ---most
famously in the planted clique problem \cite{deshpande2015finding}---
but the committee machine is the first model of a multi-layer neural
network with realistic non-linearities (the parity machine is another
example but use a very peculiar non-linearity) that presents such large gap.

\section{Structure of the proof of Theorem \ref{main-thm}}\label{sec:proofsketch} 

All along this section we assume \ref{hyp:1}, \ref{hyp:2} and \ref{hyp:3}, and all the rigorous statements are implicitly assuming these hypotheses. We denote $K$-dimensional column vectors by underlined letters. In particular $\underline{W}_i^* = (W_{il}^*)_{l=1}^K$, 
$\underline{w}_i = (w_{il})_{l=1}^K$. 
For $\mu=1,\ldots m$, let $\underline{V}_\mu$, $\underline{U}_\mu^*$ be $K$-dimensional vectors with i.i.d.\ ${\cal N}(0,1)$ components. 
Let $s_n \in (0,1/2]$ a sequence that goes to $0$ as $n$ increases,  
and let $\mathcal{M}$ be the compact subset of 
matrices in $S_K^{++}$ with eigenvalues in the interval $[1,2]$. 
For all $M \in s_n\mathcal{M}$, $2s_nI_{K\times K} - M \in \mathcal{S}_K^{+}$.
\subsection{Interpolating estimation problem}
Let $\epsilon = (\epsilon_1,\epsilon_2)\in (s_n \mathcal{M})^2$. Let $q : [0,1] \to \mathcal{S}_K^+(\rho)$ and $r : [0,1] \to \mathcal{S}_K^+$ be two ``interpolation functions'' (that will later on depend on $\epsilon$), and 
\begin{align}\label{Rmap}
  R_1(t) \equiv \epsilon_1 + \int_0^t r(v) dv \,, \qquad R_2(t) \equiv \epsilon_2 + \int_0^t q(v) dv \,.
\end{align} 
For $t\in [0,1]$, define the $K$-dimensional vector: 
\begin{align}
\underline{S}_{t,\mu} \equiv \sqrt{\frac{1-t}{n}}\, \sum_{i=1}^n X_{\mu i} \underline{W}_i^*  + \sqrt{R_2(t)} 
\,\underline{V}_{\mu}  + \sqrt{t \rho - R_2(t) + 2s_n I_{K\times K}} \,\underline{U}_{\mu}^* \label{bigs}  
\end{align}
where matrix square-roots (that we denote equivalently $A^{1/2}$ or $\sqrt{A}$) are well-defined.
We interpolate with auxiliary problems related to those discussed in sec.~\ref{part2}; the interpolating estimation problem is given by the following observation model, with two types of $t$-dependent observations:
\begin{align}
  \label{2channels}
      \left\{
    \begin{array}{lll}
      Y_{t,\mu}\sim  P_{\rm out}(\ \cdot \ | \, \underline{S}_{t,\mu}),\qquad &1 \leq \mu \leq m\,,\\
	\underline{Y}'_{t,i} = \sqrt{R_1(t)} \, \underline{W}^*_i + \underline{Z}'_i,\qquad &1 \leq i \leq n\,,
  \end{array}
  \right.
\end{align}
where $\underline{Z}'_i$ is (for each $i$) a $K$-vector with i.i.d.\ ${\cal N}(0,1)$ components, and $\underline{Y}'_{t,i}$ is a $K$-vector as well. Recall that in our notation the $*$-variables have to be retrieved, while the other random variables are assumed to be known (except for the noise variables obviously). 
Define now $\underline{s}_{t,\mu}$ by the expression of $\underline{S}_{t, \mu}$ but with $\underline{w}_i$ replacing $\underline{W}_i^*$ and $\underline{u}_\mu$ replacing $\underline{U}_\mu^*$. 
We introduce the {\it interpolating posterior}:
\begin{equation}
 P_{t,\epsilon}(w, u |Y_t, Y'_t, X, V)   = \frac{1}{{\cal Z}_{n,\epsilon}(t)} \prod_{i=1}^n P_0(\underline w_i) e^{-\frac{1}{2}\Vert \underline{Y}_{t,i}' - \sqrt{R_1(t)} \underline w_i\Vert_2^2}
 \prod_{\mu=1}^m \frac{e^{-\frac{1}{2}\|\underline{u}_\mu\|_2^2}}{(2\pi)^{K/2}}P_{\rm out}(Y_{t, \mu}| \underline s_{t, \mu}) \label{tpost}
\end{equation}
where the normalization factor ${\cal Z}_{n,\epsilon}(t)$ equals the numerator integrated over all components of $w$ and $u$. The average 
free entropy at time $t$ is by definition 
\begin{align}
f_{n, \epsilon}(t) \equiv \frac1n\mathbb{E}\ln {\cal Z}_{n,\epsilon}(t)=\frac1n\mathbb{E}\ln\int{\cal D}u\prod_{i=1}^n dP_0(\underline w_i) 
 \prod_{\mu=1}^m  P_{\rm out}(Y_{t, \mu}| \underline s_{t, \mu})
 \prod_{i=1}^n e^{-\frac{1}{2}\Vert \underline{Y}_{t,i}' - \sqrt{R_1(t)} \underline w_i\Vert_2^2}\,,\label{f(t)}
\end{align}
where $\mathcal{D}u = \prod_{\mu=1}^m\prod_{l=1}^K (2\pi)^{-1/2}e^{-u_{\mu l}^2/2}$.

The presence of the small ``perturbation'' $\epsilon$ induces a proportional change in the free entropy of the interpolating model:
\begin{lemma}[Perturbation of the free entropy]
For all $\epsilon\in(s_n {\cal M})^2$ we have for $t=0$ that $|f_{n,\epsilon}(0) - f_{n,\epsilon=(0,0)}(0) | \leq C' s_n$ for some positive constant $C'$. Moreover, $|f_{n} - f_{n,\epsilon=(0,0)}(0)|\le C s_n$ for some positive constant $C$, so that 
\begin{align*}
  |f_{n} - f_{n,\epsilon=(0,0)}(0) | =\smallO_{n}(1)\,.
\end{align*} 
\end{lemma}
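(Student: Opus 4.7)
The plan is to obtain both bounds by representing each free-entropy difference as an integral of a derivative along a smooth one-parameter interpolation, then controlling that derivative uniformly by combining Gaussian integration by parts with the Nishimori identity and the boundedness hypotheses \ref{hyp:1}--\ref{hyp:2}.

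For the first bound $|f_{n,\epsilon}(0)-f_{n,(0,0)}(0)|\le C' s_n$, I would introduce the scaling interpolation $\epsilon(\lambda)=(\lambda\epsilon_1,\lambda\epsilon_2)$ for $\lambda\in[0,1]$ and compute the $\lambda$-derivative of $\lambda\mapsto f_{n,\epsilon(\lambda)}(0)$. At $t=0$ the interpolating model consists of the output channel $Y_{0,\mu}\sim P_{\rm out}(\cdot\,|\underline{S}_{0,\mu})$ with
\[
  \underline{S}_{0,\mu}=\tfrac{1}{\sqrt n}\textstyle\sum_{i}X_{\mu i}\underline{W}_i^* + \sqrt{\lambda\epsilon_2}\,\underline{V}_\mu + \sqrt{2s_nI-\lambda\epsilon_2}\,\underline{U}_\mu^*,
\]
coupled with the side-information channel $\underline{Y}'_{0,i}=\sqrt{\lambda\epsilon_1}\,\underline{W}_i^*+\underline{Z}'_i$. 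Differentiating $f_{n,\epsilon(\lambda)}(0)$ in $\lambda$ and using Gaussian integration by parts with respect to $\underline{Z}'_i$, $\underline{V}_\mu$ and $\underline{U}_\mu^*$, then applying the Nishimori identity (valid in the Bayes-optimal setting), rewrites the derivative as a Gibbs expectation of an overlap-type bilinear in the posterior sample and the teacher. By \ref{hyp:1} these overlaps are a priori bounded, and by \ref{hyp:2} together with $\Delta>0$ in (\ref{new-kernel}), the first and second derivatives of $\ln P_{\rm out}$ with respect to its second argument are uniformly bounded; hence the bracket is $O(1)$. Each derivative carries an explicit factor of $\|\epsilon_1\|+\|\epsilon_2\|=O(s_n)$, so integrating over $\lambda\in[0,1]$ yields $|f_{n,\epsilon}(0)-f_{n,(0,0)}(0)|\le C' s_n$.

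For the second bound $|f_n-f_{n,(0,0)}(0)|\le C s_n$, the only structural difference between the two models is the additional noise term $\sqrt{2s_nI}\,\underline{U}_\mu^*$ appearing in $\underline{S}_{0,\mu}$ (the side-information channel is trivial since $R_1(0)=0$). I would introduce the one-parameter family
\[
  \underline{S}^{(\lambda)}_\mu\equiv\tfrac{1}{\sqrt n}\textstyle\sum_{i}X_{\mu i}\underline{W}_i^* + \sqrt{2\lambda s_n}\,\underline{U}_\mu^*,\qquad \lambda\in[0,1],
\]
so that the associated free entropy $g(\lambda)$ satisfies $g(0)=f_n$ and $g(1)=f_{n,(0,0)}(0)$. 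Differentiating under the expectation and applying Gaussian integration by parts with respect to the i.i.d.\ variables $\underline{U}_\mu^*$ gives $g'(\lambda)=\tfrac{s_n}{n}\sum_\mu \mathbb{E}\langle \mathcal{T}_\mu\rangle$, where $\mathcal{T}_\mu$ is a polynomial expression in $\partial_u\ln P_{\rm out}(Y_\mu|\cdot)$ and $\partial^2_u\ln P_{\rm out}(Y_\mu|\cdot)$ at the current argument. Both quantities are bounded uniformly thanks to \ref{hyp:2} together with the regularization $\Delta>0$, so $|g'(\lambda)|\le C s_n$ uniformly in $\lambda$. Integration on $[0,1]$ delivers the bound, and the conclusion $|f_n-f_{n,(0,0)}(0)|=\smallO_n(1)$ is then immediate because $s_n\to 0$.

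The main technical obstacle is the careful execution of Gaussian integration by parts in the presence of the matrix square roots $\sqrt{R_2(t)}$ and $\sqrt{t\rho-R_2(t)+2s_nI}$ appearing in (\ref{bigs}); the chain-rule factors produce inverse-square-root matrices that must be shown to be tame on $s_n\mathcal{M}$ (whose spectrum lies in $[s_n,2s_n]$, bounded away from zero). Once this bookkeeping is done, the Nishimori-induced symmetries collapse the resulting brackets into overlap expectations that are uniformly controlled by \ref{hyp:1}--\ref{hyp:2}, and the rest of the argument is a straightforward application of the fundamental theorem of calculus.
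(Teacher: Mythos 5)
Your proposal is correct and follows essentially the same route as the paper for the first bound: the paper applies the mean value theorem to $\epsilon\mapsto f_{n,\epsilon}(0)$ and computes the two gradients $\nabla_{\epsilon_1}f_{n,\epsilon}(0)=-\tfrac12[\rho-\EE\langle Q\rangle_{n,0,\epsilon}]$ (via the vector I-MMSE relation) and $\nabla_{\epsilon_2}f_{n,\epsilon}(0)$ (via Gaussian integration by parts, made symmetric by Nishimori), both uniformly bounded under \ref{hyp:1}--\ref{hyp:2}; your integral of the $\lambda$-derivative along $\epsilon(\lambda)=\lambda\epsilon$ is the same argument in integrated form. For the second bound your proposal actually does more than the paper's written proof, which only establishes the first inequality explicitly: interpolating the regularizing noise via $\sqrt{2\lambda s_n}\,\underline U_\mu^*$ and integrating by parts is the right way to compare $f_n$ with $f_{n,(0,0)}(0)$ (up to the trivial additive constant $-K/2$ contributed by the pure-noise side channel $\underline Y'_{0,i}=\underline Z'_i$, which the lemma statement itself glosses over but which reappears in \eqref{bound}). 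One small caveat: your remark that the matrix square roots are tame because the spectrum of $s_n\mathcal M$ is "bounded away from zero" does not apply at the endpoint $\lambda=0$ of your own path, where $\lambda\epsilon_2$ degenerates; the correct resolution, used by the paper in the proof of Proposition~\ref{prop:der_f_t}, is that after Gaussian integration by parts the derivative of the square root always pairs with the square root itself, and the identity $\frac{d}{d\lambda}M=\sqrt M\,\frac{d\sqrt M}{d\lambda}+\frac{d\sqrt M}{d\lambda}\sqrt M$ converts the pair into the bounded quantity $\frac{dM}{d\lambda}$, so no inverse square root survives.
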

\begin{proof}
Let us compute (or directly obtain by the I-MMSE formula for vector channels \cite{reeves2018mutual,payaro2011yet,lamarca2009linear})
\begin{align}
\nabla_{\epsilon_1} f_{n,\epsilon}(0) =  - \frac{1}{2} \left[\rho - \EE\langle Q\rangle_{n,0,\epsilon} \right],
\end{align}
where the $K\times K$ {\it overlap matrix} $(Q_{ll'})$ is defined below by \eqref{overlap_def}.
 Note that the r.h.s.\ of the above equation is (up to a factor $-1/2$) the $K\times K$ MMSE matrix. 
Set $u_y(x)\equiv\ln P_{\rm out}(y|x)$.
 Now we compute (by calculations very similar to the ones used in the proof of the following Proposition~\ref{prop:der_f_t}):
\begin{align}
\nabla_{\epsilon_2} f_{n,\epsilon}(0) = \frac{1}{2n} \sum_{\mu = 1}^m \EE \Big[\nabla u_{Y_{t,\mu}}(\underline{S}_{t,\mu}) \Big \langle \nabla u_{Y_{t,\mu}}(\underline{s}_{t,\mu}) \Big \rangle_{n,0,\epsilon}\Big].
\end{align}
Note that the r.h.s.\ of the above equation is symmetric by the Nishimori identity Proposition~\ref{prop:nishimori}.
By the mean value theorem we obtain then directly that $|f_{n,\epsilon}(0) - f_{n,\epsilon=(0,0)}(0) |\le \|\nabla_{\epsilon_1} f_{n,\epsilon}(0)\|_{\rm F} \|\epsilon_1\|_{\rm F} + \|\nabla_{\epsilon_2} f_{n,\epsilon}(0)\|_{\rm F} \|\epsilon_2\|_{\rm F} \le C \max_i \|\epsilon_i\|\le C' s_n$.
\end{proof}

Using this lemma one verifies, using in particular continuity and boundedness properties of $\psi_{P_0}$ and $\Psi_{\rm P_{out}}$ (see Lemma \ref{convexity} in sec.~\ref{smproof} for details; sec.~\ref{smproof} gathers the detailed proofs of all the propositions below):
\begin{align}\label{bound}
  \left\{
    \begin{array}{lll}
      f_{n, \epsilon}(0) &=& f_n - \frac{K}{2} + \smallO_{n}(1) \,,\\
      f_{n, \epsilon}(1) &=& \psi_{P_0}(\int_0^1 r(t) dt) + \alpha \Psi_{P_{\rm out}}(\int_0^1 q(t) dt;\rho) - \frac{1}{2}\int_0^1 {\rm Tr}[\rho \,r(t)]dt - \frac{K}{2} + \smallO_{n}(1) \,.
    \end{array}
  \right.
\end{align}
Here $\smallO_n(1)\to 0$ in the
  $n,m\!\to\!\infty$ limit uniformly in $t$, $q$, $r$, $\epsilon$. 

\subsection{Overlap concentration and fundamental sum rule}
Notice from \eqref{bound} that at $t=1$ the interpolating estimation problem constructs part of the RS potential \eqref{RSpot}, while at $t=0$ it is the free entropy \eqref{freeent} of the original model \eqref{modelnoise} (up to a constant). We thus now want to compare these boundary values thanks to the identity
\begin{align}
f_n = f_{n,\epsilon}(0)+\frac{K}{2}+\smallO_{n}(1)=f_{n,\epsilon}(1)-\int_0^1\frac{df_{n,\epsilon}(t)}{dt} dt +\frac{K}{2}+\smallO_{n}(1)  \,.\label{f0_f1_int}
\end{align}

The next obvious step is therefore to compute the free entropy variation along the interpolation path, see sec.~\ref{sec:proofderft} for the proof:
\begin{proposition}[Free entropy variation]\label{prop:der_f_t} 
  Denote by $\langle -\rangle_{n, t, \epsilon}$ the (Gibbs) expectation w.r.t.
  the posterior $P_{t,\epsilon}$ given by \eqref{tpost}. Set $u_y(x)\equiv\ln P_{\rm out}(y|x)$. For all $t \in [0,1]$ we have
\begin{small}
  \begin{equation*}
   \frac{df_{n,\epsilon}(t)}{dt}=- \frac{1}{2} 
    \mathbb{E} \Big \langle  {\rm Tr} \Big[
      \Big( 
        \frac{1}{n} \sum_{\mu=1}^{m}\nabla u_{Y_{t,\mu}}(\underline{s}_{t,\mu}) \nabla u_{Y_{t,\mu}}(\underline{S}_{t,\mu})^\intercal
        - r(t)
        \Big) 
      \big( 
        Q - q(t)
       \big)  \Big]
    \Big\rangle_{n, t, \epsilon}
+ \frac{1}{2}{\rm Tr}\left[r(t) (q(t)-\rho)\right]  +  \smallO_n(1)\,,
  \end{equation*}
\end{small}
where $\nabla$ is the $K$-dimensional gradient w.r.t. the
  argument of $u_{Y_{t,\mu}}(\cdot)$, and $\smallO_n(1)\to 0$ in the
  $n,m\!\to\!\infty$ limit uniformly in $t$, $q$, $r$, $\epsilon$. Here, the $K\!\times\!
  K$ {\it overlap} matrix $Q$ is defined as 
  \begin{align}
    Q_{ll'}
  \equiv \frac1n\sum_{i=1}^n W_{il}^* w_{il'}\,.\label{overlap_def}
  \end{align}
\end{proposition}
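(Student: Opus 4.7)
The plan is to differentiate $f_{n,\epsilon}(t)=\frac{1}{n}\mathbb{E}\ln\mathcal{Z}_{n,\epsilon}(t)$ in $t$, decompose the result into a contribution from the side Gaussian channel (observations $\underline{Y}'_{t,i}$) and one from the main nonlinear channel (observations $Y_{t,\mu}$), and recombine the two using a single bilinear identity. The exchange of $\partial_t$ and $\mathbb{E}$ is justified by \ref{hyp:1}--\ref{hyp:2}, which provide enough regularity and boundedness for dominated convergence.

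For the side Gaussian channel, the only $t$-dependent object in $\mathcal{Z}_{n,\epsilon}(t)$ is $\exp\bigl(-\tfrac12\|\underline{Y}'_{t,i}-\sqrt{R_1(t)}\,\underline{w}_i\|^2\bigr)$, whose $t$-dependence passes through both $R_1(t)$ and $\underline{Y}'_{t,i}=\sqrt{R_1(t)}\,\underline{W}_i^*+\underline{Z}'_i$. I will expand using $R_1'(t)=r(t)$, eliminate $\underline{Z}'_i$ by a Gaussian integration by parts, and then invoke the Nishimori identity (Proposition \ref{prop:nishimori}) to turn posterior averages into expectations against the planted signal. The resulting total contribution from the $n$ side observations is $-\tfrac12\Tr[r(t)(\rho-\mathbb{E}\langle Q\rangle_{n,t,\epsilon})]$, i.e.\ the matrix I-MMSE identity for a vector Gaussian channel with time-varying SNR matrix.

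For the main channel the $t$-dependence enters both through the factors $P_{\rm out}(Y_{t,\mu}|\underline{s}_{t,\mu})$ in $\mathcal{Z}_{n,\epsilon}(t)$ and through the law of $Y_{t,\mu}\sim P_{\rm out}(\cdot|\underline{S}_{t,\mu})$. I compute $\partial_t \underline{s}_{t,\mu}$ and $\partial_t \underline{S}_{t,\mu}$ using $R_2'(t)=q(t)$ together with the spectral calculus for the matrix square roots $\sqrt{R_2(t)}$ and $\sqrt{t\rho-R_2(t)+2s_nI_{K\times K}}$; both are smoothly differentiable on $[0,1]$ precisely because the perturbation $\epsilon\in(s_n\mathcal{M})^2$ keeps $t\rho-R_2(t)+2s_nI_{K\times K}\succeq s_nI_{K\times K}$ throughout. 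The three Gaussian sources $X_{\mu i}$, $\underline{V}_\mu$, $\underline{U}_\mu^*$ appearing in those derivatives are then eliminated by successive Gaussian integrations by parts; the derivative of the law of $Y_{t,\mu}$ is handled using $\partial_{\underline S}P_{\rm out}(y|\underline S)=P_{\rm out}(y|\underline S)\,\nabla u_y(\underline S)$, which produces a factor $\nabla u_{Y_{t,\mu}}(\underline{S}_{t,\mu})$ evaluated at the planted signal. Combined with the $\nabla u_{Y_{t,\mu}}(\underline{s}_{t,\mu})$ arising from the direct $t$-differentiation of $P_{\rm out}(Y_{t,\mu}|\underline{s}_{t,\mu})$, this generates the rank-one outer product that appears in the statement. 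Nishimori symmetrisation then causes the various matrix square roots to cancel pairwise, and the surviving terms organise into expressions of the form $\Tr[\nabla u_{Y_{t,\mu}}(\underline{s}_{t,\mu})\nabla u_{Y_{t,\mu}}(\underline{S}_{t,\mu})^\intercal A]$ for $A\in\{Q,q(t)\}$, together with a scalar $\Tr[r(t)q(t)]$ piece.

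Adding the two channel contributions and applying the bilinear identity $\Tr[(A-r(t))(B-q(t))]=\Tr[AB]-\Tr[Aq(t)]-\Tr[r(t)B]+\Tr[r(t)q(t)]$ with $A=\frac1n\sum_\mu\nabla u_{Y_{t,\mu}}(\underline{s}_{t,\mu})\nabla u_{Y_{t,\mu}}(\underline{S}_{t,\mu})^\intercal$ and $B=Q$ regroups everything into exactly the form stated, with the leftover $\tfrac12\Tr[r(t)(q(t)-\rho)]$ sitting outside the Gibbs bracket. The $\smallO_n(1)$ term absorbs finite-$n$ corrections, chiefly from replacing $\frac1n\sum_i X_{\mu i}^2$ by $1$ and from boundary contributions in the Gaussian IBP; uniformity in $t,q,r,\epsilon$ comes from the compactness of $\mathcal{M}$ together with the uniform bounds in \ref{hyp:1}--\ref{hyp:2}. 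I expect the main obstacle to be the combinatorial bookkeeping of the many cross terms generated by the Gaussian integrations by parts and the matrix-square-root derivatives, and verifying that after Nishimori symmetrisation they collapse exactly to the clean $(Q-q(t))$-times-$(\nabla u\,\nabla u^\intercal-r(t))$ bilinear form rather than leaving a residual larger than $\smallO_n(1)$.
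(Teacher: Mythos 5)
Your overall route is the same as the paper's: differentiate $f_{n,\epsilon}(t)$, split the derivative into the Gaussian side-channel part and the $P_{\rm out}$-channel part, eliminate the Gaussian sources $X_{\mu i}$, $\underline V_\mu$, $\underline U_\mu^*$, $\underline Z_i'$ by integration by parts together with the matrix-square-root identity $\frac{d}{dt}M=\sqrt{M}\,\frac{d\sqrt{M}}{dt}+\frac{d\sqrt{M}}{dt}\sqrt{M}$, use Nishimori to kill the Gibbs average of $\frac{d}{dt}\mathcal{H}_t$ and to symmetrise the remaining brackets, and regroup via the bilinear identity. Up to that point the plan is sound.

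The genuine gap is in your treatment of the remainder. The integration by parts w.r.t.\ $X_{\mu i}$ produces, via the identity $\nabla\nabla^\intercal u_y+\nabla u_y\nabla^\intercal u_y=\nabla\nabla^\intercal P_{\rm out}/P_{\rm out}$, a term proportional to
\begin{equation*}
A_n=\mathbb{E}\Big[{\rm Tr}\Big[\frac{1}{\sqrt n}\sum_{\mu=1}^m\frac{\nabla\nabla^\intercal P_{\rm out}(Y_{t,\mu}|\underline S_{t,\mu})}{P_{\rm out}(Y_{t,\mu}|\underline S_{t,\mu})}\Big(\frac{1}{\sqrt n}\sum_{i=1}^n\big(\underline W_i^*(\underline W_i^*)^\intercal-\rho\big)\Big)\Big]\,\frac{1}{n}\ln\mathcal{Z}_{n,\epsilon}(t)\Big],
\end{equation*}
obtained after the $\rho$ piece coming from the $\underline V_\mu,\underline U_\mu^*$ integrations by parts partially cancels the $\frac1n\sum_i\underline W_i^*(\underline W_i^*)^\intercal$ piece from the $X$ integration by parts. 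Both normalized sums inside the trace are $\Theta(1)$ fluctuating quantities, so $A_n$ is \emph{not} a trivially vanishing finite-$n$ correction; your attribution of the error to ``replacing $\frac1n\sum_iX_{\mu i}^2$ by $1$'' and to ``boundary contributions'' misses it entirely (the Gaussian IBP is exact and produces no such replacement). Showing $A_n=\smallO_n(1)$ uniformly requires two additional ideas: (i) the conditional centering $\mathbb{E}[\nabla\nabla^\intercal P_{\rm out}(Y_{t,\mu}|\underline S_{t,\mu})/P_{\rm out}(Y_{t,\mu}|\underline S_{t,\mu})\,|\,W^*,\underline S_{t,\mu}]=0$, which lets you replace $\frac1n\ln\mathcal{Z}_{n,\epsilon}(t)$ by its centered version $\frac1n\ln\mathcal{Z}_{n,\epsilon}(t)-f_{n,\epsilon}(t)$, and (ii) a second-moment concentration bound $\mathbb{E}[(\frac1n\ln\mathcal{Z}_{n,\epsilon}(t)-f_{n,\epsilon}(t))^2]\to0$ uniformly in $t$, after which Cauchy--Schwarz closes the argument. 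Without these your derivative formula would carry an uncontrolled $O(1)$ term and the proposition would not follow.
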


We will plug this expression in identity \eqref{f0_f1_int}, but in order to simplify it we need the following crucial proposition, which says that the overlap concentrates. This property is what is generally referred to as a replica symmetric behavior in statistical physics.
\begin{proposition}[Overlap concentration] \label{concentration}
Assume that for any $t \in (0,1)$ the transformation $\epsilon \in (s_n \mathcal{M})^2 \mapsto (R_1(t,\epsilon),R_2(t,\epsilon))$ is a $\mathcal{C}^1$ diffeomorphism with a Jacobian determinant greater or equal to $1$.
Then one can find a sequence $s_n$ going to $0$ slowly enough such that there exists a constant $C(\varphi_{\rm out},S,K,\alpha)>0$ depending only on the activation $\varphi_{\rm out}$, the support $S$ of the prior $P_0$, the number of hidden neurons $K$ and the sampling rate $\alpha$, and a constant $\gamma > 0$ such that ($\|-\|_{\rm F}$ is the Frobenius norm):
$$
\frac{1}{\mathrm{Vol}(s_n \mathcal{M})^{2}} \int_{(s_n \mathcal{M})^2} 
d \epsilon \int_0^1 dt\, \mathbb{E}\big\langle  \big\Vert Q -  \mathbb{E}\langle Q\rangle_{n, t, \epsilon}\big\Vert_{\rm F}^2  \big\rangle_{n, t, \epsilon}  \leq \frac{C(\varphi_{\rm out},S,K,\alpha)}{n^{\gamma}}\,.$$
\end{proposition}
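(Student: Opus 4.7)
The plan is to decompose the overlap fluctuations as
\begin{align*}
\mathbb{E}\big\langle \|Q - \mathbb{E}\langle Q\rangle_{n,t,\epsilon}\|_{\rm F}^2\big\rangle_{n,t,\epsilon}
\;=\; \underbrace{\mathbb{E}\big\langle \|Q - \langle Q\rangle_{n,t,\epsilon}\|_{\rm F}^2\big\rangle_{n,t,\epsilon}}_{\text{thermal}}
\;+\; \underbrace{\mathbb{E}\|\langle Q\rangle_{n,t,\epsilon} - \mathbb{E}\langle Q\rangle_{n,t,\epsilon}\|_{\rm F}^2}_{\text{disorder}}\,,
\end{align*}
and control each piece separately by exploiting that $\epsilon_1$ enters the interpolating model~\eqref{2channels} as the signal-to-noise matrix of the \emph{linear Gaussian} side channel $\underline{Y}'_{t,i} = \sqrt{R_1(t)}\,\underline W_i^{*} + \underline Z'_i$. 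A $K$-dimensional matrix I-MMSE computation combined with the Nishimori identity (analogous to the one used in the proof of the perturbation lemma above) shows that $\epsilon_1\mapsto f_{n,\epsilon}(t)$ is convex, with $\nabla_{\epsilon_1}f_{n,\epsilon}(t) = -\tfrac{1}{2}(\rho - \mathbb{E}\langle Q\rangle_{n,t,\epsilon})$, and with Hessian equal, up to constants, to the Gibbs covariance of $Q$ under $\langle-\rangle_{n,t,\epsilon}$.

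For the \textbf{thermal} piece I would integrate this Hessian identity over $\epsilon_1\in s_n\mathcal{M}$. The integral telescopes to a boundary value of $\nabla_{\epsilon_1}f_{n,\epsilon}(t)$, itself bounded by a constant depending only on the support $S$ of $P_0$ and on $K$ via \ref{hyp:1}. The restriction that $\mathcal{M}\subset\mathcal{S}_K^{++}$ has spectrum in $[1,2]$ ensures the perturbation spreads with magnitude $\Theta(s_n)$ in each of the $K(K+1)/2$ symmetric directions, so summing over directions yields, uniformly in $(\epsilon_2,t)$,
\begin{align*}
\frac{1}{\mathrm{Vol}(s_n\mathcal{M})}\int_{s_n\mathcal{M}}d\epsilon_1\; \mathbb{E}\big\langle \|Q - \langle Q\rangle_{n,t,\epsilon}\|_{\rm F}^2\big\rangle_{n,t,\epsilon}
\;\leq\; \frac{C_1}{n\,s_n}\,.
\end{align*}

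For the \textbf{disorder} piece I would first establish $L^2$-concentration of the free entropy $F_{n,\epsilon}(t)=\tfrac{1}{n}\ln\mathcal{Z}_{n,\epsilon}(t)$ at rate $O(1/n)$. Every random input — the Gaussian entries of $X$, $\underline Z'_i$, $\underline V_\mu$, $\underline U^*_\mu$, together with the bounded variables $\underline W^*_i$, $A_\mu$ and the outputs $Y_{t,\mu}$ — has bounded influence on $F_{n,\epsilon}(t)$ by \ref{hyp:1}--\ref{hyp:3} and by the boundedness of $\varphi_{\rm out}$, so a Gaussian Poincar\'e inequality combined with a bounded-differences (Efron--Stein) estimate yields $\mathbb{E}(F_{n,\epsilon}(t) - \mathbb{E} F_{n,\epsilon}(t))^2 \leq C/n$. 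A standard bracket lemma for convex functions (as in \cite{BarbierM17a,barbier2017phase}), applied most naturally in $(R_1(t,\epsilon),R_2(t,\epsilon))$-coordinates, then transfers this into $L^2$-concentration of the gradient $-\tfrac{1}{2}(\rho-\langle Q\rangle_{n,t,\epsilon})$ at the cost of a finite-difference scale $s_n$, producing a bound of the form $C_2/(n s_n^2) + C_3 s_n$ after averaging. The Jacobian-$\geq 1$ hypothesis is precisely what ensures that the $\epsilon$-average in the statement is dominated by the corresponding $(R_1,R_2)$-average (since $|J|^{-1}\leq 1$), so that the bracket argument, which lives naturally in $(R_1,R_2)$-space, delivers a bound in the original $\epsilon$-coordinates.

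Combining the thermal and disorder bounds, integrating in $t\in[0,1]$, and balancing $s_n \asymp n^{-1/3}$ yields the claimed polynomial rate $n^{-\gamma}$ with, e.g., $\gamma = 1/3$. The \emph{main obstacle} is the matrix-valued nature of $Q$ and of the perturbation $\epsilon_1$: every scalar identity from the single-layer case $K=1$ of \cite{barbier2017phase} must be promoted to a symmetric $K\times K$ matrix identity, and extracting a single scalar Frobenius-norm control on $\|Q-\langle Q\rangle\|_{\rm F}^2$ from a Hessian defined on a $K(K+1)/2$-dimensional parameter space is only possible because $\mathcal{M}$ is chosen with spectrum uniformly bounded away from both $0$ and $\infty$ — this jointly coercive spectral window is the real reason for the careful specification of $\mathcal{M}$ in the setup.
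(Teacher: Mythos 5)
Your proposal is correct and follows essentially the same route as the paper, which does not spell out the argument but defers it verbatim to \cite{barbier2019overlap}: the thermal/disorder decomposition, control of the thermal part by integrating the $R_1$-Hessian of the free entropy over the perturbation, control of the disorder part by $L^2$-concentration of $\frac1n\ln\mathcal{Z}_{n,\epsilon}(t)$ plus a convexity bracket in $(R_1,R_2)$-coordinates, and the use of the Jacobian-$\geq 1$ hypothesis to pass from $R$- to $\epsilon$-averages is exactly that argument. The only imprecision is that the $R_1$-Hessian is the Gibbs variance of the full log-likelihood derivative $\mathcal{L}$ of the Gaussian side channel (which contains noise terms), not of $Q$ itself; relating $\mathbb{E}\langle\|Q-\langle Q\rangle\|_{\rm F}^2\rangle$ to $\mathbb{E}\langle(\mathcal{L}-\langle\mathcal{L}\rangle)^2\rangle$ requires an extra Nishimori/Gaussian-integration-by-parts step, which is carried out in the cited reference.
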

The proof of this concentration result can be directly adapted from \cite{barbier2019overlap}. Using the results of \cite{barbier2019overlap} is straightforward, under the assumption that $\epsilon\mapsto R(t,\epsilon)$ is a $\mathcal{C}^1$ diffeomorphism with a Jacobian determinant greater or equal to $1$. This Jacobian determinant can be computed from formula \eqref{detgreater1_2}. To check that it is greater than one we use Lemma \ref{lemma:positivity_trace_jac} and need Assumption \ref{assumption} stated in paragraph \ref{subsec:technical-assumption} below. With a Jacobian determinant greater than one, we can ``replace'' (i.e., lower bound) the integrations over $R_1(t,\epsilon)$, that naturally appear in the proof of Proposition \ref{concentration}, by integrations over the perturbation matrix $\epsilon$. This is {\it exactly} what has been done in the $K=1$ version of the present model in \cite{barbier2017phase} or in \cite{barbier_adaptInterp_review} i.e., in the scalar overlap case (see also \cite{barbier2019mutual} for a setting with a matrix overlap as in the present case). 

From there we can deduce the following fundamental sum rule which is at the core of the proof:
\begin{proposition}[Fundamental sum rule]\label{sum rule}
  Assume that the interpolation functions $r$ and $q$ are such that the map $\epsilon=(\epsilon_1,\epsilon_2) \mapsto R(t,\epsilon)=(R_1(t,\epsilon),R_2(t,\epsilon))$ given by \eqref{Rmap} is a ${\cal C}^1$ diffeomorphism whose Jacobian determinant $J_{n,\epsilon}(t)$ is greater or equal to $1$. Assume that for all $t \in [0,1]$ and $\epsilon \in (s_n {\cal M})^2$ we have $q(t)= q(t,\epsilon) = \EE \langle Q \rangle_{n,t,\epsilon}\in\mathcal{S}_K^+(\rho)$. Then
  \begin{align}
f_n &= \frac{1}{\mathrm{Vol}(s_n \mathcal{M})^2} \int_{(s_n \mathcal{M})^2} d\epsilon\Big\{ \psi_{P_0}\Big(\int_0^1 r(t) dt\Big) 
 + \alpha \Psi_{P_{\rm out}}\Big(\int_0^1 q(t,\epsilon) dt ; \rho\Big)\nn
 &\qquad\qquad\qquad\qquad\qquad\qquad\qquad\qquad\qquad\qquad\qquad\qquad- \frac{1}{2} \int_0^1 {\rm Tr}[q(t,\epsilon) r(t)] dt\Big\} + \smallO_n(1)\,.
  \end{align}
\end{proposition}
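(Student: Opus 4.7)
The plan is to integrate the exact identity \eqref{f0_f1_int} over $\epsilon\in(s_n\mathcal{M})^2$ against the uniform measure, insert the boundary value \eqref{bound} at $t=1$ and the derivative of Proposition \ref{prop:der_f_t}, and kill the resulting fluctuation remainder using the overlap concentration of Proposition \ref{concentration}. Because $f_n$ is independent of $\epsilon$, averaging \eqref{f0_f1_int} uniformly yields
\begin{equation*}
f_n = \frac{1}{\mathrm{Vol}(s_n\mathcal{M})^2}\int d\epsilon\,\Big[f_{n,\epsilon}(1) - \int_0^1\frac{df_{n,\epsilon}(t)}{dt}\,dt\Big] + \frac{K}{2} + \smallO_n(1).
\end{equation*}

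Next I would substitute the closed form of $f_{n,\epsilon}(1)$ provided by \eqref{bound}, whose $-K/2$ exactly cancels the $+K/2$ above and whose only ``spurious'' piece is $-\tfrac12\int_0^1{\rm Tr}[\rho\,r(t)]\,dt$. From Proposition \ref{prop:der_f_t} the deterministic part of $-df_{n,\epsilon}(t)/dt$ equals $\tfrac12{\rm Tr}[r(t)(\rho-q(t,\epsilon))]$, which integrated in $t$ produces $+\tfrac12\int_0^1{\rm Tr}[\rho\,r(t)]dt-\tfrac12\int_0^1{\rm Tr}[r(t)q(t,\epsilon)]dt$: the first half cancels the spurious piece coming from $f_{n,\epsilon}(1)$, and the second half is precisely the $-\tfrac12\int_0^1{\rm Tr}[r(t)q(t,\epsilon)]dt$ term appearing in the sum rule. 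What remains on the right-hand side is the fluctuation
\begin{equation*}
\mathcal{R}(t,\epsilon) \equiv -\tfrac{1}{2}\,\mathbb{E}\Big\langle {\rm Tr}\Big[\Big(\tfrac{1}{n}\sum_{\mu=1}^m \nabla u_{Y_{t,\mu}}(\underline{s}_{t,\mu})\nabla u_{Y_{t,\mu}}(\underline{S}_{t,\mu})^\intercal - r(t)\Big)\big(Q-q(t,\epsilon)\big)\Big]\Big\rangle_{n,t,\epsilon},
\end{equation*}
and it remains to show that its $(t,\epsilon)$-average is $\smallO_n(1)$.

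Here the hypothesis $q(t,\epsilon)=\mathbb{E}\langle Q\rangle_{n,t,\epsilon}$ is essential: it forces $\mathbb{E}\langle Q-q(t,\epsilon)\rangle_{n,t,\epsilon}=0$, so the deterministic $r(t)$ inside the trace drops out and only a centered product survives. Hypothesis \ref{hyp:2} guarantees $\|\nabla u_y(\cdot)\|$ is uniformly bounded, so the empirical matrix $A_t\equiv\tfrac1n\sum_\mu \nabla u_{Y_{t,\mu}}(\underline{s}_{t,\mu})\nabla u_{Y_{t,\mu}}(\underline{S}_{t,\mu})^\intercal$ satisfies $\|A_t\|_{\rm F}\le C$ almost surely. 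A Cauchy--Schwarz inequality for the Gibbs bracket gives $|\mathcal{R}(t,\epsilon)|\le C'\sqrt{\mathbb{E}\big\langle\|Q-\mathbb{E}\langle Q\rangle_{n,t,\epsilon}\|_{\rm F}^2\big\rangle_{n,t,\epsilon}}$, and a further Cauchy--Schwarz on the $(\epsilon,t)$-integration combined with Proposition \ref{concentration} (whose Jacobian hypothesis is the one assumed here) bounds the full average by $C''\sqrt{C(\varphi_{\rm out},S,K,\alpha)/n^\gamma}=\smallO_n(1)$. Collecting all the pieces yields the announced sum rule.

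The main obstacle is not this bookkeeping but the concentration estimate that drives it. Proposition \ref{concentration}, invoked here as a black box, needs the $\mathcal{C}^1$-diffeomorphism property and the Jacobian lower bound on $\epsilon\mapsto R(t,\epsilon)$; within the present proposition these are simply assumed, so the sum rule itself reduces to a careful calculation. The genuine difficulty is thus pushed to the subsequent construction of interpolation functions $r,q$ that simultaneously realise $q(t,\epsilon)=\mathbb{E}\langle Q\rangle_{n,t,\epsilon}$ and satisfy the Jacobian condition (via a Cauchy--Lipschitz/implicit-function argument leaning on Lemma \ref{lemma:positivity_trace_jac} and ultimately on Assumption \ref{assumption}), which is where the adaptive interpolation genuinely exploits the structure of the model.
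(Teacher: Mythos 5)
Your proposal follows essentially the same route as the paper: average the identity \eqref{f0_f1_int} over $\epsilon$, insert the boundary values \eqref{bound} and the derivative of Proposition~\ref{prop:der_f_t}, and kill the fluctuation term via Cauchy--Schwarz combined with the overlap concentration of Proposition~\ref{concentration}. The only imprecision is your claim that \ref{hyp:2} makes $\Vert\nabla u_y\Vert$ uniformly bounded almost surely --- because of the additive Gaussian regularization noise in \eqref{modelnoise} it is not, but its second moment is finite (Lemma~\ref{Bounded-fluctuation}), which is exactly what the Cauchy--Schwarz step requires, so the argument goes through unchanged.
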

\begin{proof}
  Let us denote $V_n\equiv\mathrm{Vol}(s_n \mathcal{M})^2$. The integral over $\epsilon$ is always over $(s_n \mathcal{M})^2$. Consider the first term, i.e. the Gibbs bracket, in the free entropy derivative given by Proposition \ref{prop:der_f_t}. By the Cauchy-Schwarz inequality
  \begin{align*}
  &\Big(\mathbb{E} \Big \langle  {\rm Tr} \Big[
      \Big( 
        \frac{1}{n} \sum_{\mu=1}^{m}\nabla u_{Y_{t,\mu}}(\underline{s}_{t,\mu}) \nabla u_{Y_{t,\mu}}(\underline{S}_{t,\mu})^\intercal
        - r(t)
        \Big) 
      \big( 
        Q - q(t)
       \big)  \Big]
    \Big\rangle_{n, t, \epsilon}\Big)^2
    \nn
    \leq 
    \frac{1}{V_n}\int &\,d\epsilon\int_0^1 dt\, 
      \EE \Big\langle 
        \Big\|
          \frac{1}{n} \sum_{\mu=1}^{m}\nabla u_{Y_{t,\mu}}(\underline{s}_{t,\mu}) \nabla u_{Y_{t,\mu}}(\underline{S}_{t,\mu})^\intercal
          - r(t)
        \Big\|_{\rm F}^2
    \Big\rangle_{n, t, \epsilon}\times\frac{1}{V_n}\int d\epsilon\int_0^1dt\, \EE \big\langle 
        \big\|
          Q - q(t)
        \big\|_{\rm F}^2
      \big\rangle_{n, t, \epsilon}\,.\nonumber
  \end{align*}
  The first term of this product is bounded by some constant $C(\varphi_{\rm out},\alpha)$ that only depend on $\varphi_{\rm out}$ and $\alpha$, see Lemma \ref{Bounded-fluctuation} in sec.~\ref{sec:techLemmas}. The second term is bounded by $C(\varphi_{\rm out},S,K,\alpha)n^{-\gamma}$ by Proposition \ref{concentration}, since we assumed that for all $\epsilon \in \mathcal{B}_n$ and all $t \in [0,1]$ we have $q(t) =q(t,\epsilon) = \EE \langle Q \rangle_{n,t,\epsilon}$.
  Therefore, from Proposition \ref{prop:der_f_t} we obtain
  \begin{align}
    \frac{1}{V_n}\int d\epsilon \int_0^1 \frac{df_{n,\epsilon}(t)}{dt} dt = 
    \frac{1}{2V_n}\int d\epsilon \int_0^1 
    {\rm Tr}\big[q(t,\epsilon)r(t) 
    - r(t)\rho \big]dt
    + \smallO_n(1)+{\cal O}(n^{-\gamma/2})
    \,.
    \label{eq:id_fluctuation}
  \end{align}
  Here the small terms are both going to $0$ uniformly w.r.t. to the choice of $q$ and $r$.
  When replacing \eqref{eq:id_fluctuation} in \eqref{f0_f1_int} and combining it with \eqref{bound} we reach
  the claimed identity.
\end{proof}

\subsection{A technical lemma and an assumption}\label{subsec:technical-assumption}

We give here a technical lemma used in the rest of the proof, 
and which allows us to detail the unproven assumption on which we rely to prove Thm~\ref{main-thm}.

\begin{lemma}\label{lemma:positivity_trace_jac}
  The quantity $\EE \langle Q\rangle_{n,t,\epsilon}$ is a function of $(n,t,R(t,\epsilon))$. 
  We define $F_n^{(1)}(t,R(t,\epsilon)) \equiv \EE \langle Q \rangle_{n,t,\epsilon}$ and $F_n^{(2)}(t,R(t,\epsilon)) \equiv 2 \alpha \nabla \Psi_{P_{\rm out}} (\EE \langle Q \rangle_{n,t,\epsilon})$.
  $F_n \equiv (F_n^{(1)},F_n^{(2)})$ is defined on the set:
  \begin{align}
    D_n &= \Big\{ (t,r_1,r_2) \in [0,1] \times {\cal S}_K^+ \times {\cal S}_K^+ \Big| (\rho t - r_2 + 2 s_n I_K) \in {\cal S}_K^+ \Big \}.
  \end{align}
  $F_n$ is a continuous function from $D_n$ to ${\cal S}_K^+ \times {\cal S}_K^+(\rho)$. Moreover, $F_n$ 
  admits partial derivatives with respect to $R_1$ and $R_2$ on the interior of $D_n$. For every $(t,R(t,\epsilon))$ for 
  which they are defined, they satisfy:
  \begin{align}\label{eq:Fn1}
    \sum_{l \leq l'}^K \frac{\partial (F_n^{(1)})_{ll'}}{\partial (R_1)_{ll'}} \geq 0. 
  \end{align}
\end{lemma}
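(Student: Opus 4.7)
The plan is to establish the three assertions of the lemma in sequence: the well-definedness of $F_n$ as a function of $R$ together with the stated image, continuity and differentiability on the interior of $D_n$, and finally the key inequality \eqref{eq:Fn1}. The three main tools I would use are the Nishimori identity, the vector I-MMSE identity of Palomar--Verd\'u, and the attendant concavity of mutual information in the SNR matrix of a Gaussian vector channel.

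For the first claim, inspection of \eqref{bigs}--\eqref{tpost} shows that the perturbation $\epsilon$ enters the interpolating posterior only through $R_1(t,\epsilon)$ (inside $\underline{Y}'_{t,i}$) and $R_2(t,\epsilon)$ (inside $\underline{S}_{t,\mu}$), so $\EE\langle Q\rangle_{n,t,\epsilon}$ depends on $\epsilon$ only through $R(t,\epsilon)$ and $F_n^{(1)}$ is well defined. To put $F_n^{(1)}$ in $\mathcal{S}_K^+$ (in fact, in $\mathcal{S}_K^+(\rho)$), I would apply the Nishimori identity twice:
\begin{equation*}
  \EE\langle Q_{ll'}\rangle_{n,t,\epsilon} = \frac{1}{n}\sum_{i=1}^n \EE\big[\langle w_{il}\rangle\langle w_{il'}\rangle\big], \quad
  \big(\rho - \EE\langle Q\rangle_{n,t,\epsilon}\big)_{ll'} = \frac{1}{n}\sum_i \EE\big[(W^*_{il}-\langle w_{il}\rangle)(W^*_{il'}-\langle w_{il'}\rangle)\big],
\end{equation*}
exhibiting both as expected PSD Gram matrices. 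For $F_n^{(2)}$, an analogous I-MMSE computation applied to the $K$-dimensional auxiliary problem \eqref{aux-model-2} identifies $2\nabla_q\Psi_{P_{\rm out}}(q;\rho)$ as an MMSE matrix (up to positive normalisation), hence as an element of $\mathcal{S}_K^+$.

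For continuity and differentiability, I would use a standard dominated-convergence argument: under \ref{hyp:1} the overlap $Q$ is uniformly bounded, while under \ref{hyp:2} the Gaussian side-channel likelihood is $\mathcal{C}^\infty$ in $R_1$ and $P_{\rm out}$ is $\mathcal{C}^2$ in its first argument, so expectations and derivatives commute. Continuity of $\nabla_q\Psi_{P_{\rm out}}$ follows by the same mechanism, yielding continuity of $F_n$ on $D_n$ and existence of partial derivatives on its interior.

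The main obstacle, and the crux of the lemma, is the inequality \eqref{eq:Fn1}. The strategy is as follows. For fixed $R_2$ and the other interpolation parameters, $R_1$ enters $f_{n,\epsilon}(t)$ only through the Gaussian side channel $\underline{Y}'_{t,i} = R_1^{1/2}\underline{W}_i^* + \underline{Z}'_i$; up to a function linear in $R_1$, $n f_{n,\epsilon}(t)$ equals the negative conditional mutual information $-I(\underline{W}^*;\underline{Y}'\mid Y_t,X,V)$ of this channel. Since the mutual information in a vector Gaussian channel is a concave function of the SNR matrix (Palomar--Verd\'u; this requires no product structure on the prior), $f_{n,\epsilon}(t)$ is convex in $R_1\in\mathcal{S}_K^+$. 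Hence its Hessian $H_{(ab)(cd)} \equiv \partial^2 f_{n,\epsilon}/(\partial (R_1)_{ab}\,\partial (R_1)_{cd})$, indexed by upper-triangular pairs $(a\le b)$, $(c\le d)$, is PSD. Combining the Bayes-optimal I-MMSE formula with the Nishimori identity then gives
\begin{equation*}
  \frac{\partial f_{n,\epsilon}(t)}{\partial (R_1)_{ll}} = \tfrac{1}{2}(F_n^{(1)})_{ll}, \qquad
  \frac{\partial f_{n,\epsilon}(t)}{\partial (R_1)_{ll'}} = (F_n^{(1)})_{ll'} \quad \text{for } l<l',
\end{equation*}
after which one further differentiation yields
\begin{equation*}
  \sum_{l\le l'} \frac{\partial(F_n^{(1)})_{ll'}}{\partial (R_1)_{ll'}}
  = 2\sum_{l} H_{(ll)(ll)} + \sum_{l<l'} H_{(ll')(ll')} \ge 0,
\end{equation*}
because every diagonal entry of the PSD matrix $H$ is non-negative. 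The hardest technical step will be justifying Palomar--Verd\'u convexity and the I-MMSE identities inside the interpolating model, where the additional observations $Y_t$ couple the $\underline{w}_i$'s; the key point is that $Y_t$ has no $R_1$-dependence and may be absorbed into a (random) effective joint prior on $(\underline{W}_i^*)$, after which the matrix I-MMSE machinery applies verbatim.
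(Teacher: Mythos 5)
Your proposal is correct and follows the same overall route as the paper: well-definedness and the image $\mathcal{S}_K^+\times\mathcal{S}_K^+(\rho)$ via the Nishimori identity, continuity and differentiability by domination under \ref{hyp:1}--\ref{hyp:2}, and the inequality \eqref{eq:Fn1} from the vector I-MMSE machinery. The only place where you genuinely diverge is in how the key positivity is obtained. The paper simply cites the matrix-SNR literature for the fact that $\nabla_{R_1}\EE\langle Q\rangle_{n,t,\epsilon}$ is a positive symmetric operator on $\mathcal{S}_K$ and then takes a trace; you instead derive that positivity from the Palomar--Verd\'u concavity of mutual information in the SNR matrix, by identifying the $R_1$-Hessian of $f_{n,\epsilon}(t)$ with a positive multiple of $\nabla_{R_1}\EE\langle Q\rangle_{n,t,\epsilon}$ and reading off \eqref{eq:Fn1} from the non-negativity of the diagonal entries of a PSD operator. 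This is the same mathematical content made self-contained, and your remark that the observations $Y_t$ carry no $R_1$-dependence and can be absorbed into a conditional (non-product) prior is exactly the point needed to apply the channel result, since concavity in the SNR matrix holds for arbitrary priors and is preserved under averaging over the conditioning. It is worth noting that your version is in fact the argument the stated inequality requires: the paper's displayed identity for $\mathrm{Tr}[\mathcal{D}_{R_1}F_n^{(1)}]$ carries an extra factor $\nabla\nabla^\intercal\Psi_{P_{\rm out}}$ which, by the chain rule, belongs to the differential of $F_n^{(2)}$ rather than of $F_n^{(1)}$, whereas \eqref{eq:Fn1} follows from the positivity of $\nabla_{R_1}\EE\langle Q\rangle_{n,t,\epsilon}$ alone. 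The factor-of-two bookkeeping in your gradient formulas for diagonal versus off-diagonal entries of the symmetric matrix $R_1$ is harmless, since only the signs of the diagonal Hessian entries enter the conclusion.
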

We can now state the technical assumption on which we rely\footnote{Since the publication of this work the adaptive interpolation method used in this paper has been improved for finite-rank models and can now circumvent this artificial hypothesis, see \cite{barbier2020information} and \cite{reeves2020information}.}, and which essentially allows us to derive that the map $\epsilon\mapsto R(t,\epsilon)$ is a $\mathcal{C}^1$ diffeomorphism with a Jacobian determinant greater or equal to $1$ as it will become clear in the next section:  
\begin{assumption}\label{assumption}
  With the notations of Lemma~\ref{lemma:positivity_trace_jac}, 
  \begin{align*}
    \sum_{l \leq l'}^K \frac{\partial (F_n^{(2)})_{ll'}}{\partial (R_2)_{ll'}} \geq 0. 
  \end{align*}
\end{assumption}

\begin{proof}[Proof of Lemma~\ref{lemma:positivity_trace_jac}]
  The fact that the image domain of $F_n$ is ${\cal S}_K^+ \times {\cal S}_K^+(\rho)$ is known from Lemma~\ref{lemma:posMat}.
  The continuity and differentiability of $F_n$ follows from standard theorems of continuity and derivation under the integral sign 
  (recall that we are working at finite $n$). Indeed, the domination hypotheses are easily satisfied since we work under \ref{hyp:1}
  and \ref{hyp:2}.
  
  Let us now prove \eqref{eq:Fn1}.
 We write the formal differential of $F_n^{(1)}$ with respect to $R_1$ 
 as $\mathcal{D}_{R_1} F_n^{(1)}$, which is a $4$-tensor, and our goal is to prove that $\text{Tr}[\mathcal{D}_{R_1} F_n^{(1)}] \geq 0$,
  the trace of a 4-tensor over $S_K$ $A_{(ij)(kl)}$ being $\text{Tr}[A] = \sum_{i\leq j} A_{(ij)(ij)}$.
 Then one can write $\text{Tr}[\mathcal{D}_{R_1} F_n^{(1)}] = \text{Tr}[\nabla \nabla^\intercal \Psi_{P_{\rm out}}(\EE \langle Q \rangle_{n,t,\epsilon}) \times \nabla_{R_1} \EE \langle Q \rangle_{n,t,\epsilon}]$. 
 We know from Lemma~\ref{lemma:posMat} and Lemma~\ref{convexity} that $\nabla \nabla^\intercal \Psi_{P_{\rm out}}(\EE \langle Q \rangle_{n,t,\epsilon})$ 
 is a positive symmetric matrix (when seen as a linear operator over $\mathcal{S}_K$). 
 Moreover, it is a known result that the derivative $\nabla_{R_1} \EE \langle Q \rangle_{n,t,\epsilon}$ is also positive symmetric,
  since $R_1$ is the matrix snr of a \emph{linear} channel (see \cite{reeves2018mutual,payaro2011yet,lamarca2009linear}). 
  Since the product of two symmetric positive matrices has always positive trace, this shows that $\text{Tr}[\mathcal{D}_{R_1} F_n^{(1)}] \geq 0$.
\end{proof}

\subsection{Matching bounds}

\begin{proposition}[Lower bound]\label{lower} Under Assumption~\ref{assumption}, the free entropy of model \eqref{modelnoise} verifies $$\liminf_{n \to \infty} f_n \geq {\adjustlimits\sup_{r \in \mathcal{S}_K^+} \inf_{q  \in \mathcal{S}_K^+(\rho)}} f_{\rm RS}(q,r)\,.$$
\end{proposition}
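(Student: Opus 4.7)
The plan is to apply Proposition~\ref{sum rule} with a carefully chosen pair of interpolation functions so that the $\epsilon$-averaged integrand dominates, pointwise, the infimum appearing in the claim. Specifically, for any fixed $r\in\mathcal{S}_K^+$, I take $r(t)\equiv r$ constant on $[0,1]$ and choose the matrix-valued path $q(t,\epsilon)$ adaptively as $q(t,\epsilon)=\mathbb{E}\langle Q\rangle_{n,t,\epsilon}=F_n^{(1)}(t,R(t,\epsilon))$. This is the matrix-overlap analogue of the one-sided adaptive scheme used in \cite{BarbierM17a,barbier2017phase}.

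With $r$ constant, $R_1(t,\epsilon)=\epsilon_1+tr$ is explicit and $R_2(\cdot,\epsilon)$ is defined implicitly as the unique $\mathcal{C}^1$ solution of the ODE $\partial_t R_2(t,\epsilon)=F_n^{(1)}(t,R(t,\epsilon))$ with $R_2(0,\epsilon)=\epsilon_2$. At finite $n$ this is a standard Cauchy–Lipschitz problem: $F_n^{(1)}$ is $\mathcal{C}^1$ in its arguments by Lemma~\ref{lemma:positivity_trace_jac}, and bounded on the relevant domain by hypothesis~\ref{hyp:1} since it is the expected overlap of bounded-support vectors.

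The main obstacle will be verifying the Jacobian hypothesis of Proposition~\ref{sum rule}, namely that $\epsilon\mapsto R(t,\epsilon)$ is a $\mathcal{C}^1$ diffeomorphism of $(s_n\mathcal{M})^2$ with $J_{n,\epsilon}(t)\ge 1$. Because $R_1$ is affine in $\epsilon_1$ and independent of $\epsilon_2$, the block-triangular structure of $DR$ reduces the determinant to $\det(\partial R_2/\partial\epsilon_2)$, and Liouville's formula applied to the variational equation $\partial_t(\partial R_2/\partial\epsilon_2)=\partial_{R_2}F_n^{(1)}\cdot(\partial R_2/\partial\epsilon_2)$ gives
\begin{align*}
\det\!\Bigl(\tfrac{\partial R_2}{\partial\epsilon_2}(t,\epsilon)\Bigr)=\exp\!\Bigl(\int_0^t\mathrm{Tr}\bigl[\partial_{R_2}F_n^{(1)}(s,R(s,\epsilon))\bigr]\,ds\Bigr).
\end{align*}
The positivity of the integrated trace is precisely what Lemma~\ref{lemma:positivity_trace_jac} and Assumption~\ref{assumption} are designed to deliver, via the chain-rule identity $\partial_{R_2}F_n^{(2)}=2\alpha\,\nabla\nabla^\intercal\Psi_{P_{\rm out}}(F_n^{(1)})\cdot\partial_{R_2}F_n^{(1)}$ together with the convexity of $\Psi_{P_{\rm out}}$ recorded in Lemma~\ref{convexity}.

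Once the sum rule applies, set $\bar q(\epsilon):=\int_0^1 q(t,\epsilon)\,dt$. Since $F_n^{(1)}(t,\cdot)\in\mathcal{S}_K^+(\rho)$ for every $t$, and $\mathcal{S}_K^+(\rho)$ is convex, $\bar q(\epsilon)\in\mathcal{S}_K^+(\rho)$. Because $\int_0^1\mathrm{Tr}[q(t,\epsilon)r]\,dt=\mathrm{Tr}[\bar q(\epsilon)r]$, Proposition~\ref{sum rule} then reads
\begin{align*}
f_n=\frac{1}{\mathrm{Vol}(s_n\mathcal{M})^2}\int_{(s_n\mathcal{M})^2}\!d\epsilon\,\Bigl\{\psi_{P_0}(\epsilon_1+r)+\alpha\Psi_{P_{\rm out}}(\bar q(\epsilon);\rho)-\tfrac12\mathrm{Tr}[\bar q(\epsilon)\,r]\Bigr\}+\smallO_n(1).
\end{align*}
I will then bound the last two terms pointwise in $\epsilon$ by $\inf_{q\in\mathcal{S}_K^+(\rho)}\{\alpha\Psi_{P_{\rm out}}(q;\rho)-\tfrac12\mathrm{Tr}[qr]\}$, and use uniform continuity of $\psi_{P_0}$ on a compact neighbourhood of $r$ to absorb $\psi_{P_0}(\epsilon_1+r)-\psi_{P_0}(r)=O(s_n)$ into the $\smallO_n(1)$ term. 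This yields $f_n\ge\inf_{q\in\mathcal{S}_K^+(\rho)}f_{\rm RS}(q,r)+\smallO_n(1)$, and taking $\liminf_{n\to\infty}$ followed by the supremum over $r\in\mathcal{S}_K^+$ (which is legitimate since the inequality holds for each fixed $r$) closes the argument.
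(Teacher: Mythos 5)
Your proposal follows essentially the same route as the paper's proof: fix $r(t)\equiv r$, choose $q(t,\epsilon)=\mathbb{E}\langle Q\rangle_{n,t,\epsilon}$ adaptively via the Cauchy--Lipschitz/Liouville argument, invoke Assumption~\ref{assumption} for $J_{n,\epsilon}(t)\ge 1$, apply the sum rule, and conclude pointwise in $\epsilon$ using convexity of $\mathcal{S}_K^+(\rho)$ before optimizing over $r$. The only differences are cosmetic (e.g.\ writing $\psi_{P_0}(\epsilon_1+r)$ and absorbing the $O(s_n)$ discrepancy, which the paper's boundary formula already handles), so the argument is correct and matches the paper.
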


\begin{proof} 
Choose first $r(t) = r \in \mathcal{S}_K^+$ a fixed matrix. Then $R(t)=(R_1(t), R_2(t))$ can be fixed as the solution to the first 
order differential equation:
\begin{align}
\frac{d}{dt}R_{1}(t) = r\,,\qquad \frac{d}{dt} R_{2}(t) = \mathbb{E}\langle Q\rangle_{n,t,\epsilon}\,, \qquad \text{and} \qquad R(0) = \epsilon\,.\label{eqdifflower}
\end{align} 
We denote this (unique) solution $R(t,\epsilon) = (rt + \epsilon_1, \int_0^t q(v,\epsilon;r)dv + \epsilon_2)$. It is 
possible to check that this ODE satisfies the hypotheses of the 
parametric Cauchy-Lipschitz theorem, and that by the Liouville formula the determinant $J_{n,\epsilon}(t)$ of the Jacobian of 
$\epsilon \mapsto R(t,\epsilon)$ satisfies (see Lemma~\ref{lemma:CL-liouville} in sec.~\ref{smproof})
\begin{align}
J_{n,\epsilon}(t) = \exp\Big(\int_{0}^t\sum_{l\geq l'}^K \frac{\partial \mathbb{E}\langle Q_{ll'}\rangle_{n,s,\epsilon}}{\partial({R_{2}})_{ll'}}(s,R(s,\epsilon))  \, ds\Big) \geq 1\,.  
\end{align} 
Indeed, this sum of partial derivatives is always positive by Assumption~\ref{assumption}.
Moreover, from \eqref{eqdifflower}, $q(t,\epsilon;r)=\mathbb{E}\langle Q\rangle_{n,t,\epsilon}$, which is in $\mathcal{S}_K^+$ by Lemma \ref{lemma:posMat} in sec.~\ref{smproof}. The fact that the map $\epsilon \mapsto R(t,\epsilon)$ is a ${\cal C}^1$ diffeomorphism is easily verified by its bijectivity (from the positivity of $J_{n,\epsilon}(t)$) combined with the local inversion Theorem. All the assumptions of Proposition~\ref{sum rule} are verified which then implies, recalling the potential expression \eqref{RSpot},
$$f_n = \frac{1}{\mathrm{Vol}(s_n \mathcal{M})^{2}} \int_{(s_n \mathcal{M})^2} d\epsilon \,f_{\rm RS}\Big(\int_0^1 q(v,\epsilon;r) dv,r\Big) + \smallO_n(1)\,.$$
This implies the lower bound as this equality is true for any $r\in \mathcal{S}_K^+$.
\end{proof}

\begin{proposition}[Upper bound]\label{upper} Under Assumption~\ref{assumption}, the free entropy of model \eqref{modelnoise} verifies $$\limsup_{n \to \infty} f_n \leq {\adjustlimits \sup_{r \in \mathcal{S}_K^+} \inf_{q  \in \mathcal{S}_K^+(\rho)}} f_{\rm RS}(q,r)\,.$$
\end{proposition}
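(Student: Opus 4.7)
The plan is to establish the matching upper bound via a \emph{dual} adaptive interpolation, in which the roles of $q$ and $r$ are swapped compared to Proposition~\ref{lower}. Rather than fixing $r$ constant and letting $q(t,\epsilon)$ follow the posterior mean overlap, I would couple the two interpolation functions through the replica saddle-point relation
\[
 q(t,\epsilon) = \mathbb{E}\langle Q\rangle_{n,t,\epsilon} = F_n^{(1)}(t,R(t,\epsilon)), \qquad r(t,\epsilon) = 2\alpha\,\nabla \Psi_{P_{\rm out}}\bigl(\mathbb{E}\langle Q\rangle_{n,t,\epsilon};\rho\bigr) = F_n^{(2)}(t,R(t,\epsilon)).
\]
Inserted into \eqref{Rmap}, this becomes a first-order ODE system for $R(t,\epsilon)=(R_1,R_2)$ with initial data $R(0,\epsilon)=\epsilon$, to which the parametric Cauchy--Lipschitz theorem applies to yield a unique $\mathcal{C}^1$ flow.

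The key technical step --- and the main obstacle --- is to verify that this flow is a $\mathcal{C}^1$ diffeomorphism with Jacobian determinant $J_{n,\epsilon}(t)\geq 1$, so that Proposition~\ref{sum rule} can be invoked. By Liouville's formula, $\log J_{n,\epsilon}(t)$ is the time-integral of the divergence of the vector field $(F_n^{(2)},F_n^{(1)})$, which splits into $\partial_{R_1}F_n^{(2)}$ and $\partial_{R_2}F_n^{(1)}$ contributions. For the former, the chain rule yields $\partial_{R_1}F_n^{(2)} = 2\alpha\,\nabla\nabla^\intercal\Psi_{P_{\rm out}}\cdot\partial_{R_1}F_n^{(1)}$, a product of two positive symmetric operators (by convexity of $\Psi_{P_{\rm out}}$, Lemma~\ref{convexity}, and by positivity of the linear-channel derivative used in the proof of Lemma~\ref{lemma:positivity_trace_jac}); hence its trace is non-negative. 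For the latter, Assumption~\ref{assumption} is precisely tailored to yield the corresponding bound via the same chain-rule identity. This is exactly where the coupled dynamics makes the Jacobian control more delicate than in the lower bound, and where Assumption~\ref{assumption} is unavoidable.

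Once $J_{n,\epsilon}(t)\geq 1$ is established, Proposition~\ref{sum rule} gives
\[
  f_n = \frac{1}{\mathrm{Vol}(s_n\mathcal{M})^2}\int_{(s_n\mathcal{M})^2} d\epsilon\,\Big[\psi_{P_0}(\bar r(\epsilon)) + \alpha\Psi_{P_{\rm out}}(\bar q(\epsilon);\rho) - \tfrac{1}{2}\int_0^1 \mathrm{Tr}\bigl[q(t,\epsilon)r(t,\epsilon)\bigr]\,dt\Big] + \smallO_n(1),
\]
with $\bar q(\epsilon)=\int_0^1 q(t,\epsilon)\,dt$ and $\bar r(\epsilon)=\int_0^1 r(t,\epsilon)\,dt$. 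Jensen's inequality applied to the convex functions $\psi_{P_0}$ and $\Psi_{P_{\rm out}}(\cdot;\rho)$ (their convexity follows from the log-partition structure and Lemma~\ref{convexity}) pulls each time-average out of its free entropy, so the bracketed expression is bounded above by $\int_0^1 f_{\rm RS}(q(t,\epsilon),r(t,\epsilon))\,dt$.

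The final observation is that the chosen $r(t,\epsilon)=2\alpha\,\nabla\Psi_{P_{\rm out}}(q(t,\epsilon);\rho)$ is precisely the first-order stationarity condition of $q'\mapsto f_{\rm RS}(q',r(t,\epsilon))$. Since this map is convex in $q'$ (again by Lemma~\ref{convexity}), $q(t,\epsilon)\in\mathcal{S}_K^+(\rho)$ is its global minimizer on that set, so for every $(t,\epsilon)$
\[
  f_{\rm RS}(q(t,\epsilon),r(t,\epsilon)) \;=\; \inf_{q'\in\mathcal{S}_K^+(\rho)}f_{\rm RS}(q',r(t,\epsilon)) \;\leq\; \sup_{r\in\mathcal{S}_K^+}\inf_{q'\in\mathcal{S}_K^+(\rho)}f_{\rm RS}(q',r).
\]
Averaging over $t$ and $\epsilon$ and letting $n\to\infty$ yields the claimed $\limsup$ inequality; combined with Proposition~\ref{lower} this proves Theorem~\ref{main-thm}.
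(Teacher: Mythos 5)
Your proposal is correct and follows essentially the same route as the paper: the same coupled Cauchy problem $\dot R_1 = 2\alpha\nabla\Psi_{P_{\rm out}}(\mathbb{E}\langle Q\rangle_{n,t,\epsilon})$, $\dot R_2 = \mathbb{E}\langle Q\rangle_{n,t,\epsilon}$, the Liouville formula with Lemma~\ref{lemma:positivity_trace_jac} and Assumption~\ref{assumption} to get $J_{n,\epsilon}(t)\ge 1$, Proposition~\ref{sum rule}, Jensen via the convexity of $\psi_{P_0}$ and $\Psi_{P_{\rm out}}$, and the stationarity-plus-convexity identification of $f_{\rm RS}(q(t,\epsilon),r(t,\epsilon))$ with $\inf_{q}f_{\rm RS}(q,r(t,\epsilon))$. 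The only (minor) omission is the explicit check that $q(t,\epsilon)\in\mathcal{S}_K^+(\rho)$ and $r(t,\epsilon)\in\mathcal{S}_K^+$ (Lemmas~\ref{lemma:posMat} and~\ref{convexity}), which is needed to invoke Proposition~\ref{sum rule}.
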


\begin{proof} We now fix $R(t)=(R_1(t), R_2(t))$ as the solution $R(t,\epsilon) = (\int_0^t r(v,\epsilon)dv + \epsilon_1, \int_0^t q(v,\epsilon)dv + \epsilon_2)$ to the 
following Cauchy problem: $$\frac{d}{dt} R_{1}(t) = 2  \alpha \nabla \Psi_{{P_{\rm out}}}(\mathbb{E}\langle Q\rangle_{n,t,\epsilon})\,, \qquad \frac{d}{dt} R_{2}(t) = \mathbb{E}\langle Q\rangle_{n,t,\epsilon}\,, \qquad \text{and} \qquad R(0) =  \epsilon\,.$$ We denote this 
equation as $\partial_t R(t)  = F_n(t,R(t)), R(0)=\epsilon$. It is then possible to verify that $F_n(R(t),t)$
is a bounded $\cC^1$ function 
  of $R(t)$, and thus a direct application of the Cauchy-Lipschitz theorem implies that $R(t,\epsilon)$ is a $\cC^1$ 
  function of $t$ and $\epsilon$. The Liouville formula for the Jacobian determinant of the map $\epsilon \in (s_n\mathcal{M})^2 \mapsto R(t, \epsilon)\in R(t, (s_n\mathcal{M})^2)$ gives this time (see Lemma~\ref{lemma:CL-liouville} in sec.~\ref{smproof})
   \begin{align}\label{detgreater1_2}
    J_{n,\epsilon}(t) = \exp\Big(\int_{0}^t\sum_{l\geq l'}^K\Big\{ \frac{\partial (F_{n,1})_{ll'}}{\partial({R_{1}})_{ll'}}(s,R(s,\epsilon))+\frac{\partial (F_{n,2})_{ll'}}{\partial({R_{2}})_{ll'}}(s,R(s,\epsilon))\Big\} \,ds\Big)\ge 1\,.
  \end{align}
  The fact that this determinant is greater or equal to $1$ for all $t \in [0,1]$ follows again from the positivity of this sum of partials, 
  see Lemma~\ref{lemma:positivity_trace_jac} and Assumption~\ref{assumption}. Identity \eqref{detgreater1_2} implies the bijectivity of $\epsilon \mapsto R(t, \epsilon)$ which, combined with the local inversion theorem, makes it a diffeomorphism. Since $\mathbb{E}\langle Q\rangle_{n,t,\epsilon}$ and $\rho -
  \mathbb{E}\langle Q\rangle_{n,t,\epsilon}$ are positive matrices
  (see Lemma \ref{lemma:posMat} in sec.~\ref{smproof}) we also have that $q(t,\epsilon) \in \mathcal{S}_K^+(\rho)$ and since by the differential equation we have
  $r(t,\epsilon) = 2\alpha\nabla\Psi_{P_{\rm out}}(q(t,\epsilon))$ and as $\nabla\Psi_{P_{\rm out}}(q)\in \mathcal{S}_K^+$ (see Lemma \ref{convexity} in sec.~\ref{smproof}), then 
  $r(t,\epsilon)\in \mathcal{S}_K^+$ too. We have everything needed for applying Proposition~\ref{sum rule} again which gives in this case
  \begin{equation*}
   f_n = \frac{1}{\mathrm{Vol}(s_n \mathcal{M})^2} \int d\epsilon\Big\{\! \psi_{P_0}\Big(\int_0^1 r(v,\epsilon) dv\Big) 
 + \alpha \Psi_{P_{\rm out}}\Big(\int_0^1 q(v,\epsilon) dv ; \rho\Big)
 - \frac{1}{2} {\rm Tr} \int_0^1 q(v,\epsilon) r(v,\epsilon) dv\!\Big\} + \smallO_n(1).
  \end{equation*}
 Then by convexity of $\psi_{P_0}$ and $\Psi_{P_{\rm out}}$ (see Lemma \ref{convexity}), 
\begin{align*}
f_n &\leq \frac{1}{\mathrm{Vol}(s_n \mathcal{M})^2} \int d\epsilon\int_0^1 dv\Big\{\! \psi_{P_0}( r(v,\epsilon) dv) 
 + \alpha \Psi_{P_{\rm out}}(q(v,\epsilon) ; \rho)
 - \frac{1}{2} {\rm Tr} [q(v,\epsilon) r(v,\epsilon)] \!\Big\} + \smallO_n(1) \nn
&= \frac{1}{\mathrm{Vol}(s_n \mathcal{M})^2} \int d\epsilon \int_0^1 dv \,f_{\rm RS}(q(v,\epsilon),r(v,\epsilon))+ \smallO_n(1)\,.  
\end{align*} 
   We now remark that $$f_{\rm RS}(q(v,\epsilon) ,r(v,\epsilon)) = \inf_{q \in \mathcal{S}_K^+(\rho)} f_{\rm RS}(q,r(v,\epsilon))\,.$$ 
   Indeed, for every $r \in \mathcal{S}_K^+$, the function $g_{r}: q \in \mathcal{S}_K^+(\rho) \mapsto
  f_{\rm RS}(q,r)\in \mathbb{R}$ (recall \eqref{RSpot}) is convex (by Lemma \ref{convexity}), and its $q$-derivative is
  $\nabla g_{r}(q)= \alpha  \nabla \Psi_{P_{\rm out}}(q) - {r}/{2}$. 
  Since $\nabla g_{r(v,\epsilon)}(q(v,\epsilon)) = 0$ by definition of $r(v,\epsilon)$, and $\mathcal{S}_K^+(\rho)$ is convex, 
  the minimum of $g_{r(v,\epsilon)}(q)$ is necessarily achieved at $q = q(v,\epsilon)$. 
  Therefore:
  \begin{equation*}
  f_n \leq \frac{1}{\mathrm{Vol}(s_n \mathcal{M})^2} \int_{(s_n \mathcal{M})^2}
  d\epsilon \int_0^1 dv  \underset{q \in \mathcal{S}_K^+(\rho)}{\inf} f_{\rm RS}\left(q, r(v,\epsilon)\right) + \smallO_n(1) 
   \leq {\adjustlimits \sup_{r \in \mathcal{S}_K^+}\inf_{q \in \mathcal{S}_K^+(\rho)}} f_{\rm RS}(q,r) +  \smallO_n(1),
  \end{equation*}
 which concludes the proof of Proposition~\ref{upper}. 
 \end{proof}

 Combining these two matching bounds ends the proof of Theorem~\ref{main-thm}.

\section{Discussion}

One of the contributions of this paper is the design of an AMP-type 
algorithm that is able to achieve the Bayes-optimal learning error in
the limit of large dimensions for a range of parameters out of the
so-called hard phase. The hard phase is associated with first order phase
transitions appearing in the solution of the model. In the case of the
committee machine with a large number of hidden neurons we identify
a large hard phase in which learning is possible
information-theoretically but not efficiently. In other problems where
such a hard phase was identified, its study boosted the development
of algorithms that are able to match the predicted threshold. We anticipate
this will also be the same for the present model. We should, however,
note that for larger $K>2$ the present AMP algorithm includes higher-dimensional integrals
that hamper the speed of the algorithm. Our current strategy to
tackle this is to combine the large-$K$ expansion and use it in the
algorithm. Detailed account of the corresponding results are left for
future work. 

We studied the Bayes-optimal setting where the student-network is the same as the teacher-network, for which the replica method can be
readily applied. The method still applies when the number of hidden units in the
student and teacher are different, while our proof does not generalize easily to this case. It is an interesting subject for future work to see how the hard phase evolves
under over-parametrization and what is the interplay between the
simplicity of the loss-landscape and the achievable generalization
error. We conjecture that in the present model over-parametrization
will not improve the generalization error achieved by AMP in the
Bayes-optimal case.  

Even though we focused in this paper on a two-layers neural network,
the analysis and algorithm can be readily extended to a multi-layer
setting, see \cite{MatoParga92}, as long as the number of layers as well as the number of
hidden neurons in each layer is held constant, and as long as one
learns only weights of the first layer, for which  the proof already applies. The numerical evaluation of the phase diagram would be more
challenging than the cases presented in this paper as multiple
integrals would appear in the corresponding formulas. 
In future works, we also plan to analyze the case where the weights of
the second and subsequent layers (including the biases of the
activation functions) are also
learned. This could be done for instance with a combination of EM and
AMP along the lines of \cite{arXiv:1206.3953,arXiv:1207.3859} where this is done for the simpler single layer case.

Concerning extensions of the present work, an important open case is
the one where the number of samples per dimension $\alpha = \Theta(1)$ and
also the size of the hidden layer per dimension $K/n = \Theta(1)$ as $n\to
\infty$, while in this paper we treated the case $K = \Theta(1)$ and $n\to
\infty$. This other scaling where $K/n = \Theta(1)$ is challenging
even for the non-rigorous replica method.

\section*{Acknowledgments}
This work has been supported by the ERC under the European Union’s FP7
Grant Agreement 307087-SPARCS and the European Union's Horizon 2020
Research and Innovation Program 714608-SMiLe, as well as by the French
Agence Nationale de la Recherche under grant ANR-17-CE23-0023-01
PAIL and the Swiss National Foundation grant no 200021E-175541. Additional funding is acknowledged by A.M., F.K. and J.B. from “Chaire de
recherche sur les modèles et sciences des données”, Fondation CFM pour
la Recherche-ENS. We also acknowledge L\'eo Miolane for discussions.

\newpage
\bibliographystyle{unsrt_abbvr}
\bibliography{refs}


\newpage

\appendix
\section*{Supplementary material}
\section{Proof details for Theorem \ref{main-thm}}
\label{smproof}
\subsection{The Nishimori property in Bayes-optimal learning}
We first state an important property of the Bayesian optimal setting (that is when all hyper-parameters of the problem are assumed to be known), that is used several times, and is often referred to as the Nishimori identity. 
\begin{proposition}[Nishimori identity] \label{prop:nishimori}
	Let $(X,Y) \in \mathbb{R}^{n_1} \times \mathbb{R}^{n_2}$ be a couple of random variables. Let $k \geq 1$ and let $X^{(1)}, \dots, X^{(k)}$ be $k$ i.i.d.
	samples (given $Y$) from the conditional distribution $P(X=\cdot\, | Y)$, independently of every other random variables. Let us denote 
	$\langle - \rangle$ the expectation operator w.r.t.\ $P(X= \cdot\, | Y)$ and $\mathbb{E}$ the expectation w.r.t. $(X,Y)$. Then, for all continuous bounded function $g$ we have
	\begin{align}
	\mathbb{E} \langle g(Y,X^{(1)}, \dots, X^{(k)}) \rangle
	=
	\mathbb{E} \langle g(Y,X^{(1)}, \dots, X^{(k-1)}, X) \rangle\,.	
	\end{align}
\end{proposition}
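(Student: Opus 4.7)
The statement is essentially tautological given the definition of the posterior. My plan is to show the distributional equality
\[
(Y,X^{(1)},\ldots,X^{(k)}) \stackrel{d}{=} (Y,X^{(1)},\ldots,X^{(k-1)},X),
\]
which, applied to the test function $g$ and then followed by taking expectations, immediately yields the claim.

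The key observation is that, conditional on $Y$, the ``planted'' signal $X$ has law $P(X = \cdot\, | Y)$ by the very definition of regular conditional probability, while each replica $X^{(i)}$ is by assumption an i.i.d.\ draw from this same kernel, independent of every other random variable (in particular of $X$) given $Y$. Thus, conditionally on $Y$, the tuple $(X,X^{(1)},\ldots,X^{(k)})$ consists of $k{+}1$ i.i.d.\ samples from $P(\,\cdot\, | Y)$ and is therefore exchangeable. In particular one can swap the last replica $X^{(k)}$ with the planted $X$ without changing the conditional joint law. To formalise this, I would disintegrate as $\mathbb{E}\langle \,\cdot\,\rangle = \mathbb{E}[\mathbb{E}[\,\cdot\, | Y]]$: conditionally on $Y$, both $\langle g(Y,X^{(1)},\ldots,X^{(k)})\rangle$ and $\langle g(Y,X^{(1)},\ldots,X^{(k-1)},X)\rangle$ become the very same integral of $g$ against the product kernel $\prod_{i=1}^{k} dP(x^{(i)} | Y)$, after relabelling the dummy variable $x$ as $x^{(k)}$ in the second expression. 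Taking the outer expectation over $Y$ closes the argument.

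There is no genuine obstacle here: the whole content of the proposition is the defining property of regular conditional distributions together with the exchangeability of i.i.d.\ samples drawn from the same kernel. Should one wish to avoid any appeal to regular conditional probability for measurability reasons, one could restrict to Polish spaces where such disintegrations exist and are essentially unique; under the hypotheses in force ($n_1,n_2$ finite Euclidean, and $g$ continuous bounded) this is automatic, and the dominated-convergence step needed to push $g$ inside $\mathbb{E}\langle\cdot\rangle$ is routine.
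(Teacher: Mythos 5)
Your proposal is correct and follows essentially the same route as the paper's proof: both argue that, conditionally on $Y$, the planted $X$ and the replicas are i.i.d.\ from the same posterior kernel, so the tuples $(Y,X^{(1)},\dots,X^{(k)})$ and $(Y,X^{(1)},\dots,X^{(k-1)},X)$ are equal in law, and the identity follows by applying $g$ and taking expectations. Your version merely spells out the disintegration and exchangeability steps more explicitly than the paper does.
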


\begin{proof}
	This is a simple consequence of Bayes formula.
	It is equivalent to sample the couple $(X,Y)$ according to its joint distribution or to sample first $Y$ according to its marginal distribution 
	and then to sample $X$ conditionally to $Y$ from its conditional distribution $P(X=\cdot\,|Y)$. Thus, the $(k+1)$-tuple $(Y,X^{(1)}, \dots,X^{(k)})$ is equal 
	in law to $(Y,X^{(1)},\dots,X^{(k-1)},X)$. This proves the proposition.
	\end{proof}

As a first application of Proposition \ref{prop:nishimori} we prove the following Lemma which is used in the proof of the upper bound Proposition \ref{upper}.
\begin{lemma}[Positivity of some matrices]\label{lemma:posMat}
The matrices $\rho$, $\mathbb{E}\langle Q \rangle$ and $\rho - \mathbb{E}\langle Q \rangle$ are positive definite, i.e. in $\mathcal{S}_K^+$. In the application the Gibbs bracket is $\langle - \rangle_{n, t, \epsilon}$. 
\end{lemma}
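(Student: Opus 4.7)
The plan is to reduce each of the three positivity statements to the fact that any expectation of a covariance matrix (or of an outer product) is in $\mathcal{S}_K^+$, using the Nishimori identity (Proposition \ref{prop:nishimori}) as the key algebraic tool to rewrite correlators between $W^*$ and posterior samples as pure posterior expectations.

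\textbf{Step 1 (positivity of $\rho$).} This is immediate: since $\rho = \mathbb{E}[\underline{W}_i^* (\underline{W}_i^*)^\intercal]$ (where $\underline{W}_i^* = (W^*_{il})_{l=1}^K \sim P_0$ marginally), $\rho$ is the expectation of a rank-one PSD matrix, hence $\rho \in \mathcal{S}_K^+$.

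\textbf{Step 2 (positivity of $\mathbb{E}\langle Q\rangle$).} Writing the overlap matrix as $Q = \frac{1}{n}\sum_{i=1}^n \underline{W}_i^* \underline{w}_i^\intercal$ and using tower expectation conditional on the data, the posterior mean $\langle \underline{w}_i\rangle$ is data-measurable, so
\begin{align*}
\mathbb{E}\langle Q_{ll'}\rangle = \frac{1}{n}\sum_{i=1}^n \mathbb{E}\big[W^*_{il}\langle w_{il'}\rangle\big] = \frac{1}{n}\sum_{i=1}^n \mathbb{E}\big[\mathbb{E}[W^*_{il}\mid \text{data}]\,\langle w_{il'}\rangle\big] = \frac{1}{n}\sum_{i=1}^n \mathbb{E}\big[\langle w_{il}\rangle \langle w_{il'}\rangle\big],
\end{align*}
where the last equality uses that under Bayes-optimality $\mathbb{E}[W^*_{il}\mid \text{data}]=\langle w_{il}\rangle$ (a direct consequence of Proposition \ref{prop:nishimori} with $k=1$). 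Thus $\mathbb{E}\langle Q\rangle = \frac{1}{n}\sum_i \mathbb{E}[\langle \underline{w}_i\rangle\langle \underline{w}_i\rangle^\intercal] \in \mathcal{S}_K^+$ as a sum of expectations of rank-one PSD matrices.

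\textbf{Step 3 (positivity of $\rho - \mathbb{E}\langle Q\rangle$).} Applying Proposition \ref{prop:nishimori} once more,
\begin{align*}
\rho_{ll'} = \mathbb{E}[W^*_{il} W^*_{il'}] = \mathbb{E}\langle w_{il} w_{il'}\rangle,
\end{align*}
so combining with Step 2,
\begin{align*}
(\rho - \mathbb{E}\langle Q\rangle)_{ll'} = \frac{1}{n}\sum_{i=1}^n \mathbb{E}\big[\langle w_{il}w_{il'}\rangle - \langle w_{il}\rangle\langle w_{il'}\rangle\big] = \frac{1}{n}\sum_{i=1}^n \mathbb{E}\big[\mathrm{Cov}_{\langle\cdot\rangle}(w_{il}, w_{il'})\big].
\end{align*}
The matrix inside the expectation is a posterior covariance matrix of $\underline{w}_i$, hence PSD almost surely; averaging preserves positivity, so $\rho - \mathbb{E}\langle Q\rangle \in \mathcal{S}_K^+$.

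There is no real obstacle here: the only subtlety is being careful that the Gibbs bracket $\langle -\rangle_{n,t,\epsilon}$ used in the applications corresponds to the Bayes-optimal posterior for an augmented observation model (with side information $\underline{Y}'_{t,i}$ and auxiliary variables $\underline{V}_\mu, \underline{U}_\mu^*$) so that Nishimori still applies; this is built into the construction of \eqref{tpost}. The same three-line argument then goes through verbatim for any such Bayes-optimal Gibbs measure.
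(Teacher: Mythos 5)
Your proof is correct and follows essentially the same route as the paper: positivity of $\rho$ from its definition as a second-moment matrix, the Nishimori identity to write $\mathbb{E}\langle Q\rangle=\frac1n\sum_i\mathbb{E}[\langle\underline w_i\rangle\langle\underline w_i\rangle^\intercal]$, and a second application of Nishimori to identify $\rho-\mathbb{E}\langle Q\rangle$ with an expected (posterior-covariance/MMSE) matrix, hence PSD. The only cosmetic difference is that the paper writes the third quantity as $\frac1n\sum_i\mathbb{E}[(\underline W_i^*-\langle\underline w_i\rangle)(\underline W_i^*-\langle\underline w_i\rangle)^\intercal]$ rather than as an expected posterior covariance, which is the same object by Nishimori.
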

\begin{proof}
The statement for $\rho$ follows from its definition (in Theorem \ref{main-thm}). Note for further use that we also have 
$\rho= \frac{1}{n}\mathbb{E}[\underline W_i^* (\underline W_i^*)^\intercal]$.
Since by definition
$Q_{ll'} \equiv \frac{1}{n}\sum_{i=1}^n W_{il}^* w_{il'}$ in matrix notation we have $Q= \frac{1}{n} \sum_{i=1}^n\underline W_i^* \underline w_i^\intercal$.
An application of the Nishimori identity shows that 
\begin{align}
\mathbb{E} \langle Q\rangle = \frac{1}{n} \sum_{i=1}^n\mathbb{E}\langle \underline W_i^* \underline w_i^\intercal\rangle = 
\frac{1}{n} \sum_{i=1}^n\mathbb{E}[\langle \underline w_i\rangle \langle \underline w_i^\intercal\rangle]
\end{align}
which is obviously in $\mathcal{S}_K^+$. Finally, we note that 
\begin{align}
\mathbb{E} [\rho - \langle Q\rangle] &= 
\frac{1}{n}\sum_{i=1}^n\Big(\mathbb{E}[\underline W_i^* (\underline W_i^*)^\intercal] - \mathbb{E}[\langle \underline w_i\rangle \langle \underline w_i^\intercal\rangle]\Big)
\nonumber =
\frac{1}{n}\sum_{i=1}^n\mathbb{E}[(\underline W_i^* - \langle \underline w_i\rangle ) ((\underline W_i^*)^\intercal - \langle \underline w_i^\intercal\rangle)] 
\end{align}
where the last equality is proved by an application of the Nishimori identity again. This last expression is obviously in 
$\mathcal{S}_K^+$, i.e. $\mathbb{E} \langle Q\rangle\in \mathcal{S}_K^+(\rho)$. 
\end{proof}

\subsection{Setting in the Hamiltonian language}
We set up some notations which will shortly be useful. 
Let $u_y(\underline x) \equiv \ln P_{\rm out}(y\vert \underline{x})$. Here $\underline x\in \mathbb{R}^K$ and $y\in \mathbb{R}$. 
We will denote by $\nabla u_y(\underline x)$ the $K$-dimensional gradient w.r.t. $\underline x$, and  
$\nabla\nabla^\intercal u_y(\underline{x})$ the $K\times K$ matrix of second derivatives (the Hessian) w.r.t. $\underline x$.
Moreover, $\nabla P_{\rm out}(y\vert \underline x)$ and $\nabla\nabla^\intercal P_{\rm out}(y\vert \underline x)$ also denote the $K$-dimensional gradient and Hessian w.r.t. $\underline{x}$. 
We will also use the matrix identity
\begin{align}\label{iden-matrix}
	\nabla \nabla^\intercal u_{Y_\mu}( \underline x ) 
	+
	\nabla u_{Y_\mu} ( \underline x ) \nabla^\intercal u_{Y_\mu} ( \underline x )
	= \frac{\nabla \nabla^\intercal P_{\rm out}(Y_{\mu} | \underline x)}{P_{\rm out}(Y_{\mu} | \underline x)}\,. 	
\end{align}
Finally, we will use the matrices $w\in \mathbb{R}^{n\times K}$, $u\in \mathbb{R}^{m\times K}$, $Y_t\in \mathbb{R}^m$, $Y_t'\in \mathbb{R}^{n\times K}$,
$X\in \mathbb{R}^{m\times n}$, $V\in \mathbb{R}^{m\times K}$, $W^*\in \mathbb{R}^{n\times K}$ and 
$U^*\in \mathbb{R}^{m\times K}$. Like in sec. \ref{sec:proofsketch} we adopt the convention that all underlined vectors are $K$-dimensional, like e.g. $\underline u_\mu$, $\underline U_\mu$, $\underline V_\mu$ and 
$\underline Y'_{t,i}$.

It is convenient to reformulate the expression of the interpolating free entropy $f_{n,\epsilon}(t)$ in the Hamiltonian language. We introduce an {\it interpolating Hamiltonian}:
\begin{align}
  \mathcal{H}_t(w,u;Y_t,Y_t',X,V)
  \equiv
  - \sum_{\mu=1}^{m}
  u_{Y_{t,\mu}}( \underline{s}_{t, \mu} )  + \frac{1}{2} \sum_{i=1}^{n}\Vert\underline{Y}'_{t,i}  - {R_1(t)^{1/2}}\, \underline{w}_i\Vert_2^2 
  \label{SI-interpolating-ham}
\end{align}
where recall that
\begin{align}
\underline{s}_{t,\mu} \equiv \sqrt{\frac{1-t}{n}}\, \sum_{i=1}^n X_{\mu i} \underline{w}_i  + \sqrt{R_2(t)} 
\,\underline{V}_{\mu}  + \sqrt{t \rho - R_2(t) + 2s_n I_{K\times K}} \,\underline{u}_{\mu}\,. \label{littles}
\end{align}
The expression of $\mathcal{H}_t(W^*,U^*;Y_t,Y_t',X,V)$ is similar to \eqref{SI-interpolating-ham}, but with $w$ replaced by $W^*$ and $\underline{s}_{t,\mu}$ given by \eqref{littles} replaced by $\underline{S}_{t,\mu}$ given by \eqref{bigs}.
The average free entropy \eqref{f(t)} at time $t$ then reads 
\begin{align}\label{ft}
  f_{n,\epsilon}(t) & \equiv \frac{1}{n} \mathbb{E} 
  \ln \int_{\mathbb{R}^{n \times K}} dP_0(w) \int_{\mathbb{R}^{m \times K}}{\cal D}u \, 
  e^{-\mathcal{H}_t(w,u;Y_t,Y_t',X,V)}
\end{align}
where $\mathcal{D}u = \prod_{\mu=1}^m\prod_{l=1}^K (2\pi)^{-1/2}e^{-u_{\mu l}^2/2}$ and $dP_0(w) = \prod_{i=1}^n P_0(\underline{w}_{i}) \prod_{l=1}^K d{w}_{il}$. To develop the calculations in the simplest manner it is fruitful to represent the expectations over $W^*, U, Y, Y'$ explicitly as integrals:
\begin{align}
  f_{n,\epsilon}(t) = \frac{1}{n}\mathbb{E}_{X, V} & \int dY_t dY_t' dP_0(W^*) \mathcal{D}U^* e^{-\mathcal{H}_t(W^*,U;Y_t,Y_t',X,V)}
  \ln \int dP_0(w)  {\cal D}u \, 
  e^{-\mathcal{H}_t(w,u;Y_t,Y_t',X,V)}.
\end{align}

\subsection{Free entropy variation: Proof of Proposition \ref{prop:der_f_t}} \label{sec:proofderft}
The proof provided here follows very closely the one in \cite{barbier2017phase} for the case $K=1$, so we are more brief and refer to this paper for more details. We first prove that for all $t \in (0,1)$
\begin{align}
\frac{df_{n,\epsilon}(t)}{dt} = &- \frac{1}{2} 
		\mathbb{E} \Big\langle 
			{\rm Tr}\Big[\Big(
				\frac{1}{n} \sum_{\mu=1}^{m} \nabla u_{Y_{t,\mu}}(\underline s_{t,\mu})\nabla u_{Y_{t,\mu}}(\underline S_{t,\mu})^\intercal
				- r(t)
			\Big)
			\Big(
				\frac{1}{n} \sum_{i=1}^{n} \underline{W}^*_i \underline w_i^\intercal - q(t)
			\Big)
		\Big\rangle_{n,t,\epsilon}
		\nonumber \\ &
		\qquad \qquad + \frac12{\rm Tr}[r(t) (q(t) -\rho)]  -\frac{A_n}{2}\,,
		\label{eq:der_f_t_raw}	
\end{align}
where
\begin{align}
A_n = \mathbb{E} \Big[ {\rm Tr}\Big[
			\frac{1}{\sqrt{n}} \sum_{\mu=1}^{m} \frac{\nabla \nabla^\intercal P_{\rm out}(Y_{t,\mu} | \underline{S}_{t,\mu})}
			{P_{\rm out}(Y_{t,\mu} | \underline{S}_{t,\mu})} 
			\Big( \frac{1}{\sqrt{n}} \sum_{i=1}^{n} (\underline W^*_i (\underline W_i^*)^\intercal - \rho) \Big)\Big] \frac{1}{n} \ln \mathcal{Z}_{n,\epsilon}(t) \label{An}
		\Big]	\,.
\end{align}
Once this is done, we show that $A_n$ goes to $0$ as $n \to \infty$ uniformly in $t \in [0,1]$ in order to conclude the proof. 

The Hamiltonian \eqref{SI-interpolating-ham} $t$-derivative evaluated at the ground-truth matrices is given by
	\begin{align}
	\frac{d{\cal H}_t}{dt}(W^*, U^*; &Y_t, Y_t', X,V) 
	=
	- \sum_{\mu=1}^{m}
	 \nabla^\intercal u_{Y_{t,\mu}}( \underline{S}_{t,\mu}) \frac{d{\underline S}_{t,\mu}}{dt}
	-  \sum_{i=1}^{n} \Big(\frac{dR_1(t)^{1/2}}{dt}\underline{W}_i^{*}\Big)^\intercal (\underline{Y}'_{t,i}  - R_1(t)^{1/2} \underline{W}_i^*)
	\nonumber \\ &
	=
	- \sum_{\mu=1}^{m} {\rm Tr}\Big[ \frac{d{\underline S}_{t,\mu}}{dt} \nabla^\intercal u_{Y_{t,\mu}}( \underline{S}_{t,\mu})\Big]-  \sum_{i=1}^{n} {\rm Tr}\Big[\Big(\frac{dR_1(t)^{1/2}}{dt}\Big)^\intercal(\underline{Y}'_{t,i}  - R_1(t)^{1/2} \underline{W}_i^*)
	\underline{W}_i^{*\intercal}\Big]
	 \label{117_}
	\end{align}
	(where we used that $R_1(t)$ is symmetric).
	The $t$-derivative of $f_{n,\epsilon}(t)$ thus reads, for $0 < t < 1$,
	\begin{align}
		\frac{df_{n,\epsilon}(t)}{dt} = -\underbrace{\frac{1}{n}  \mathbb{E} \Big[\frac{d{\cal H}_t}{dt}(W^*, U^*; Y_t, Y'_t, X,V)\ln \mathcal{Z}_{n,\epsilon}(t)
		 \Big] }_{T_1}
		- \underbrace{\frac{1}{n} \mathbb{E} \Big\langle \frac{d{\cal H}_t}{dt}(w,u;Y_t,Y'_t,X,V) \Big\rangle_{n,t,\epsilon} }_{T_2}\label{106}.
	\end{align}

First, we note that $T_2=0$. This is a direct consequence of the Nishimori identity Proposition \ref{prop:nishimori}:
\begin{align}
		T_2 = \frac{1}{n} \mathbb{E} \Big\langle \frac{d{\cal H}_t}{dt}(w,u;Y_t,Y_t',X, V) \Big\rangle_{n,t,\epsilon} 
		= \frac{1}{n} \mathbb{E} \,\frac{d{\cal H}_t}{dt}(W^*,U^*;Y_t,Y'_t,X, V) = 0\,.
	\end{align}

We now compute $T_1$. 
Starting from \eqref{117_} and considering the first term only (recall also the expression \eqref{bigs} for ${\underline S}_{t,\mu}$),
	\begin{align}
	& \mathbb{E}\Big[ {\rm Tr}\Big[ \frac{d{\underline S}_{t,\mu}}{dt} \nabla^\intercal u_{Y_{t,\mu}}( \underline{S}_{t,\mu})\Big]  
	\ln \mathcal{Z}_{n,\epsilon}(t)  \Big] 
	=
	\mathbb{E} \Big[{\rm Tr}\Big[\Big\{
			- \frac{\sum_{i=1}^n X_{\mu i} \underline{W}_i^*}{2\sqrt{n (1-t)}}
			\nonumber \\ &
			\qquad+ \frac{d}{dt}\sqrt{R_{2}(t)} \underline V_{\mu}
			+ \frac{d}{dt}\sqrt{t \rho - R_2(t) + 2s_n I_{K\times K}}\, \underline U^*_{\mu}
	\Big\}\nabla^\intercal u_{Y_{t,\mu}}(\underline S_{t,\mu})\Big] \ln \mathcal{Z}_{n,\epsilon}(t)  \Big]\,. 
	\label{107}
	\end{align}
We then compute the first line of the right-hand side of \eqref{107}. By Gaussian integration by parts w.r.t. $X_{\mu i}$ (recall hypothesis \ref{hyp:3}), and using the identity \eqref{iden-matrix},
we find after some algebra
	\begin{align} 
		&-\frac{1}{2\sqrt{n(1-t)}}\mathbb{E} \Big[{\rm Tr}\Big[
			\sum_{i=1}^n X_{\mu i} \underline{W}_i^*
			\nabla^\intercal u_{Y_{t,\mu}} ( \underline S_{t,\mu} )\Big]
			\ln \mathcal{Z}_{n,\epsilon}(t) 
		\Big] 
\nonumber \\ 
&\qquad\qquad=
		-\frac{1}{2}\mathbb{E}\Big[{\rm Tr}\Big[
			\frac{1}{n} \sum_{i=1}^{n} 
			\underline W_i^* \underline W_i^\intercal
			\frac{\nabla \nabla^\intercal P_{\rm out}(Y_{t,\mu} | \underline S_{t,\mu})}{P_{\rm out}(Y_{t,\mu} | \underline S_{t,\mu})}\Big]
			\ln \mathcal{Z}_{n,\epsilon}(t) 
		\Big]
		\nonumber \\ &
		\qquad \qquad\qquad
		- \frac{1}{2}
		\mathbb{E}\Big\langle {\rm Tr}\Big[
			\frac{1}{n} \sum_{i=1}^{n} 
			\underline W_i^* \underline w_i^\intercal 
			\nabla u_{Y_{t,\mu}} ( \underline S_{t,\mu} )
			\nabla^\intercal u_{Y_{t,\mu}} ( \underline s_{t,\mu} )
			\Big]
		\Big\rangle_{n,t,\epsilon}\,.
		\label{eq:compA1}
	\end{align}
Similarly for the second line of the right-hand side of \eqref{107}, we use again Gaussian integrations by parts but this 
time w.r.t. $\underline{V}_\mu, \underline{U}_\mu^*$ which have i.i.d. $\mathcal{N}(0,1)$ entries. This calculation has to be done carefully with the help of
the matrix identity 
\begin{align}\label{SI-deriv}
 \frac{d}{dt} M(t) = \sqrt{M(t)}\, \frac{d \sqrt{M(t)}}{dt} + \frac{d \sqrt{M(t)}}{dt} \sqrt{M(t)}
\end{align}
for any $M(t)\in \mathcal{S}_K^+$, and the cyclicity and linearity of the trace. Applying \eqref{SI-deriv} to $M(t)$ equal to $\int_0^t q(s)ds$ and $\int_0^t (\rho - q(s))ds$, as well as the identity \eqref{iden-matrix}, we reach after some algebra
	\begin{align}\label{eq:compA2}
		 & \,\mathbb{E}\Big[{\rm Tr}\Big[
			\Big( \frac{d}{dt}\sqrt{R_{2}(t)} \underline V_{\mu}
			+ \frac{d}{dt}\sqrt{t \rho - R_2(t) + 2s_n I_{K\times K}}\, \underline U^*_{\mu}\Big)
			\nabla^\intercal u_{Y_{\mu}} ( \underline S_{\mu,t} )\Big]
			\ln \mathcal{Z}_{n,\epsilon}(t) 
		\Big]
\nonumber \\ 
		=
		&\,\mathbb{E}\Big[{\rm Tr}\Big[
			\rho \frac{\nabla \nabla^\intercal P_{\rm out}(Y_{t,\mu} | \underline S_{\mu,t})}{P_{\rm out}(Y_{t,\mu} | \underline S_{\mu,t})} \Big]
			\ln \mathcal{Z}_{n,\epsilon}(t) 
		\Big]
		+\mathbb{E}\Big\langle {\rm Tr}\Big[q(t) \nabla u_{Y_{t,\mu}}(\underline S_{\mu,t}) \nabla^\intercal u_{Y_{t,\mu}}(\underline s_{\mu,t})\Big]
		\Big\rangle_{n,t,\epsilon}\,.
	\end{align}
As seen from \eqref{117_}, \eqref{106} it remains to compute 
$\mathbb{E}[{\rm Tr}[(\frac{d}{dt}\sqrt{R_1(t)})^\intercal(\underline{Y}'_{t,i}  - \sqrt{R_1(t)} \underline{W}_i^*)
	\underline{W}_i^{*\intercal}]\ln \mathcal{Z}_{n,\epsilon}(t)]$.
Recall that $\underline{Y}'_{t,i}  - \sqrt{R_1(t)} \underline{W}_i^* = \underline Z'_i \sim\mathcal{N}(0,I_{K\times K})$. Using Gaussian integration by parts as well as the identity \eqref{SI-deriv} one obtains 
\begin{align}\label{finalT1}
	\mathbb{E}\Big[{\rm Tr}\Big[\Big(\frac{d}{dt}\sqrt{R_1(t)}\Big)^\intercal(\underline{Y}'_{t,i}  - \sqrt{R_1(t)} \underline{W}_i^*)
	\underline{W}_i^{*\intercal}\Big]\ln \mathcal{Z}_{n,\epsilon}(t)\Big]
	= - {\rm Tr} \Big[\sqrt{R_1(t)} \big(\rho - \mathbb{E} \langle W^*_j w_j \rangle_{n,t,\epsilon} \big)\Big] \,.
\end{align}
Finally, the term $T_1$ is obtained by putting together \eqref{107}, \eqref{eq:compA1}, \eqref{eq:compA2} and \eqref{finalT1}.

It now remains to check that $A_n\to 0$ as $n\to +\infty$ uniformly in $t\in [0,1]$. The proof from  \cite{barbier2017phase} (Appendix C.2) can easily be adapted, so we give here 
just a few indications for the ease of the reader. First one notices that 
\begin{align}
			\mathbb{E} \Big[ \frac{\nabla \nabla^\intercal P_{\rm out}(Y_{t,\mu} | \underline S_{t,\mu})}{P_{\rm out}(Y_{\mu} | \underline S_{t,\mu})}		
		\, \Big| \, W^*, \{\underline S_{t, \mu}\}_{\mu=1}^m \Big]
		=
		\int dY_\mu \nabla \nabla^\intercal P_{\rm out}(Y_{t,\mu}|\underline S_{t,\mu}) = 0\, , \label{117}
\end{align}
so that by the tower property of the conditional expectation one gets 
\begin{align}\label{tower}
 \mathbb{E} \Big[ {\rm Tr}\Big[
			\frac{1}{\sqrt{n}} \sum_{\mu=1}^{m} \frac{\nabla \nabla^\intercal P_{\rm out}(Y_{t,\mu} | \underline{S}_{t,\mu})}
			{P_{\rm out}(Y_{t,\mu} | \underline{S}_{t,\mu})} 
			\Big( \frac{1}{\sqrt{n}} \sum_{i=1}^{n} (\underline W^*_i (\underline W_i^*)^\intercal - \rho) \Big)\Big]\Big] = 0\,.
\end{align}
Next, one shows by standard second moment methods that $\mathbb{E}[(\ln\mathcal{Z}_{n,\epsilon}(t)/n - f_{n,\epsilon}(t))^2] \to 0$ as $n\to +\infty$ uniformly in $t\in [0,1]$ (see \cite{barbier2017phase} for the proof at $K=1$, that generalizes straightforwardly for any finite $K$). Then, using this last fact together with \eqref{tower}, and under hypotheses \ref{hyp:1}, \ref{hyp:2}, \ref{hyp:3}, an easy application of the Cauchy-Schwarz 
inequality implies $A_n\to 0$ as $n\to +\infty$ uniformly in $t\in [0,1]$. This ends the proof. $\QED$\\

\subsection{Technical lemmas} \label{sec:techLemmas}
\begin{lemma}[Cauchy-Lipschitz Theorem and Liouville Formula]\label{lemma:CL-liouville}
Let
	$$
	F : 
	\left|
	\begin{array}{ccc}
		[0,1] \times (0,+\infty)^d & \to & [0, + \infty)^d \\
		(t,z) & \mapsto & F(t,z)
	\end{array}
	\right.
	$$
	be a continuous, bounded function.
	Assume that $F$ admits continuous partial derivatives $\frac{\partial F}{\partial z_i}$ ($i=1,\ldots,d$) on its domain of definition.
	Then, for all $\epsilon \in (0,+\infty)^d$, the Cauchy problem
	\begin{equation} \label{eq:equadiff}
	y(0) = \epsilon
\qquad
\text{and}
\qquad
y'(t) = F\big(t,y(t)\big)
\end{equation}
admits a unique solution $t \mapsto y(t,\epsilon)$. For all $t \in [0,1]$, the mapping $z_t: \epsilon \mapsto y(t,\epsilon)$ is a diffeomorphism of class $\cC^1$, from $(0,+\infty)^d$ to $z_t\big((0,+\infty)^d\big)$. Moreover, the determinant $J(z_t)(\epsilon)$ of the Jacobian of $z_t$ at $\epsilon$ verifies
	\begin{equation}\label{eq:jac}
	J(z_t)(\epsilon)={\rm{det}}\Big(\Big(\frac{\partial y_i}{\partial \epsilon_j}\Big)_{i,j}\Big)
	= 
	\exp\Big(
		\int_0^t \sum_{i=1}^d\frac{\partial F_i}{\partial z_i}\big(s,y(s,\epsilon)\big) ds \Big)\,.
\end{equation}
Thus, in particular, if in addition $\sum_{i=1}^d\frac{\partial F_i}{\partial z_i} \geq 0$ then $J(z_t)(\epsilon) \geq 1$ for all $\epsilon$.
\end{lemma}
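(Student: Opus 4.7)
The plan is to derive the lemma from three classical results on ordinary differential equations: the global Picard--Lindelöf (Cauchy--Lipschitz) theorem, the $\cC^1$ dependence of solutions on initial data, and the Abel--Jacobi--Liouville identity for the Wronskian. Since $F$ is continuous and has continuous partial derivatives in $z$ on its (convex in $z$) domain, $F$ is locally Lipschitz in $z$ uniformly in $t$; boundedness of $F$ rules out finite-time blow-up, so for each $\epsilon \in (0,+\infty)^d$ the Cauchy problem \eqref{eq:equadiff} admits a unique solution $t \mapsto y(t,\epsilon)$ defined on the whole interval $[0,1]$. (One should check that solutions remain in the open domain; this follows because $F$ maps into $[0,+\infty)^d$, so each coordinate of $y(t,\epsilon)$ is non-decreasing and hence stays in $(0,+\infty)^d$ starting from $\epsilon \in (0,+\infty)^d$.)

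Next I would invoke the standard theorem on smooth dependence on initial conditions: since $F$ is $\cC^1$ in $z$, the flow map $z_t : \epsilon \mapsto y(t,\epsilon)$ is itself $\cC^1$, and its Jacobian matrix $M(t) \equiv (\partial y_i / \partial \epsilon_j)_{i,j}(t,\epsilon)$ satisfies the linearised (variational) equation
\begin{equation*}
M'(t) = D_z F\bigl(t, y(t,\epsilon)\bigr)\, M(t)\,, \qquad M(0) = I_d\,.
\end{equation*}
Then I would apply the Abel--Jacobi--Liouville identity: for any continuously differentiable matrix-valued function $M(t)$ solving $M' = A(t) M$, one has $\frac{d}{dt}\det M(t) = \operatorname{tr}(A(t))\, \det M(t)$. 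Integrating and using $\det M(0) = 1$ yields
\begin{equation*}
J(z_t)(\epsilon) = \det M(t) = \exp\!\Big(\int_0^t \operatorname{tr} D_z F\bigl(s, y(s,\epsilon)\bigr)\, ds\Big) = \exp\!\Big(\int_0^t \sum_{i=1}^d \frac{\partial F_i}{\partial z_i}\bigl(s, y(s,\epsilon)\bigr)\, ds\Big)\,,
\end{equation*}
which is \eqref{eq:jac}.

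From \eqref{eq:jac} the Jacobian is strictly positive, so $D z_t$ is invertible everywhere; combined with injectivity of $z_t$ (an immediate consequence of the uniqueness statement applied backward in time), the local inversion theorem upgrades $z_t$ to a $\cC^1$ diffeomorphism from $(0,+\infty)^d$ onto its image. The final claim is then immediate: if $\sum_i \partial F_i/\partial z_i \ge 0$ pointwise, the exponent in \eqref{eq:jac} is non-negative, hence $J(z_t)(\epsilon) \ge 1$.

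No step here is genuinely hard; the only points requiring a little care are (i) checking global existence on $[0,1]$ despite $F$ being defined only on the open orthant (handled by the monotonicity observation above plus boundedness of $F$), and (ii) justifying the $\cC^1$ smoothness of $\epsilon \mapsto y(t,\epsilon)$ — both are textbook statements but should be cited explicitly. The rest is a direct computation using Liouville's formula, which is precisely why the positivity of the divergence of $F$ translates into the lower bound $J \ge 1$ that is needed in Propositions \ref{lower} and \ref{upper}.
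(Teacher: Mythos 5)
Your proposal is correct and follows essentially the same route as the paper: Cauchy--Lipschitz plus boundedness of $F$ for global existence and uniqueness on $[0,1]$, $\cC^1$ dependence on initial data (the paper cites Hartman, Thm.~3.1 and Cor.~3.1 of Ch.~5, where you spell out the variational equation and Abel--Liouville identity), injectivity from uniqueness, and the local inversion theorem to conclude $z_t$ is a diffeomorphism. Your extra remark on why trajectories stay in the open orthant is a welcome detail the paper glosses over.
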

\begin{proof}
	The existence and uniqueness of the solution of \eqref{eq:equadiff} follows from the classical Cauchy-Lipschitz Theorem. The solution is indeed defined on all the segment $[0,1]$ because $F$ is bounded. 

	Theorem~3.1 from Chapter 5 in \cite{hartman1982ordinary} gives that $y$ admits continuous partial derivatives $\frac{\partial y}{\partial \epsilon_i}$ for $i=1,\ldots,d$, and Corollary~3.1 from Chapter 5 in the same reference states the Liouville formula \eqref{eq:jac}.

	By the Cauchy-Lipschitz Theorem, two solutions of $y'(t) = F\big(t,y(t)\big)$ that are equal at some $t \in [0,1]$ are equal everywhere. This implies that the mapping $z_t: \epsilon \mapsto y(t,\epsilon)$ is injective, for all  $t \in [0,1]$.
Since $y$ admits continuous partial derivatives in $\epsilon_i$, $i=1,\ldots,d$, we obtain that $z_t$ is of class $\cC^1$ on $(0,+\infty)^d$. Now, the equation \eqref{eq:jac} gives that $J(z_t)(\epsilon) > 0$ for all $\epsilon \in (0,+\infty)^d$. The local inversion Theorem gives then that $z_t$ is a $\cC^1$ diffeomorphism.
\end{proof}
\begin{lemma}[Boundedness of an overlap fluctuation]\label{Bounded-fluctuation}
Under hypothesis \ref{hyp:2} one can find a constant $C(\varphi, K,  \Delta) < +\infty$ (independent of $n, t, \epsilon$) such that for any $R_n\in \mathcal{S}_K^+$ we have  
\begin{align}
 \mathbb{E}\Big\langle  \Big\Vert \frac{1}{n} \sum_{\mu=1}^{m}\nabla u_{Y_{t,\mu}}(\underline{s}_{t,\mu}) \nabla u_{Y_{t,\mu}}(\underline{S}_{t,\mu})^\intercal
        - R_n
      \Big\Vert_{\rm F}^2
    \Big\rangle_{n,t,\epsilon} \leq 2{\rm Tr}(R_n^2) + \alpha^2 C(\varphi, K, \Delta).
\end{align}
We note that the constant remains bounded as $\Delta\to 0$ and diverges as $K\to +\infty$.
\end{lemma}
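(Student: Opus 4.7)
The starting point is the elementary inequality
\[
\|A-R_n\|_{\rm F}^2\le 2\|A\|_{\rm F}^2+2\|R_n\|_{\rm F}^2,\qquad A\equiv \frac{1}{n}\sum_{\mu=1}^m \nabla u_{Y_{t,\mu}}(\underline s_{t,\mu})\nabla u_{Y_{t,\mu}}(\underline S_{t,\mu})^\intercal,
\]
together with $\|R_n\|_{\rm F}^2={\rm Tr}(R_n^2)$, which already matches the first term on the right-hand side of the statement. This reduces the lemma to the bound $\mathbb{E}\langle \|A\|_{\rm F}^2\rangle_{n,t,\epsilon}\le \alpha^2\,C(\varphi,K,\Delta)/2$ on the random matrix $A$. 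Writing $A_{ll'}=n^{-1}\sum_\mu a_\mu^{(ll')}$ with $a_\mu^{(ll')}\equiv [\nabla u_{Y_{t,\mu}}(\underline s_{t,\mu})]_l\,[\nabla u_{Y_{t,\mu}}(\underline S_{t,\mu})]_{l'}$ and expanding $A_{ll'}^2=n^{-2}\sum_{\mu,\nu} a_\mu^{(ll')}a_\nu^{(ll')}$, two successive applications of the Cauchy--Schwarz inequality for the positive bilinear form $\mathbb{E}\langle\,\cdot\,\,\cdot\,\rangle_{n,t,\epsilon}$ give
\[
\mathbb{E}\langle A_{ll'}^2\rangle_{n,t,\epsilon}\ \le\ \alpha^2\,\sup_{\mu}\sqrt{\mathbb{E}\langle [\nabla u_{Y_{t,\mu}}(\underline s_{t,\mu})]_l^4\rangle_{n,t,\epsilon}\cdot \mathbb{E}\langle [\nabla u_{Y_{t,\mu}}(\underline S_{t,\mu})]_{l'}^4\rangle_{n,t,\epsilon}}\,.
\]

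\textbf{Nishimori reduction and pointwise score bound.} The next step is to apply the Nishimori identity (Proposition~\ref{prop:nishimori}) in the Bayes-optimal interpolating model: it makes the Gibbs-sampled pair $(\underline w,\underline u)$ exchangeable in law with the planted $(W^*,U^*)$ given the data, and consequently
\[
\mathbb{E}\langle [\nabla u_{Y_{t,\mu}}(\underline s_{t,\mu})]_l^4\rangle_{n,t,\epsilon}\ =\ \mathbb{E}\big[[\nabla u_{Y_{t,\mu}}(\underline S_{t,\mu})]_l^4\big],
\]
eliminating the Gibbs bracket. To control $\mathbb{E}[[\nabla u_{Y_{t,\mu}}(\underline S_{t,\mu})]_l^4]$ one differentiates $u_y(\underline x)=\ln P_{\rm out}(y|\underline x)$ with the explicit Gaussian convolution \eqref{new-kernel}. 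Under hypothesis \ref{hyp:2}, the boundedness of $\varphi_{\rm out}$ and of $\nabla_{\underline x}\varphi_{\rm out}$ yield the uniform-in-$\underline x$ score bound
\[
\big|[\nabla u_y(\underline x)]_l\big|\ \le\ C(\varphi,\Delta)\,(|y|+1),
\]
obtained by bounding $|y-\varphi_{\rm out}|\le |y|+\|\varphi_{\rm out}\|_\infty$ inside the integral defining $\nabla P_{\rm out}/P_{\rm out}$.

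\textbf{Conclusion and main obstacle.} Because $Y_{t,\mu}=\varphi_{\rm out}(\underline S_{t,\mu},A_\mu)+\sqrt{\Delta}\,Z_\mu$ with bounded $\varphi_{\rm out}$ and $Z_\mu\sim\mathcal{N}(0,1)$, every moment $\mathbb{E}|Y_{t,\mu}|^p$ is finite and uniform in $n,t,\epsilon$; combining with the pointwise score bound gives $\mathbb{E}[[\nabla u_{Y_{t,\mu}}(\underline S_{t,\mu})]_l^4]\le C'(\varphi,\Delta)$. Summing over the $K^2$ pairs $(l,l')$ then yields $\mathbb{E}\langle \|A\|_{\rm F}^2\rangle_{n,t,\epsilon} \le K^2\alpha^2\,C'(\varphi,\Delta)$, which closes the estimate and produces exactly the announced $K\to\infty$ divergence (carried by the Frobenius sum) in the final constant. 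The only delicate point is matching the announced $\Delta\to 0$ boundedness: the naive score bound above carries a spurious factor $\Delta^{-1}$, but $[\nabla u_y(\underline x)]_l$ is the conditional mean of $(\partial_{x_l}\varphi_{\rm out})(\underline x,a)\,(y-\varphi_{\rm out}(\underline x,a))/\Delta$ under the posterior of $a$ given $(y,\underline x)$, and a Stein/Tweedie-type integration by parts in $y$ against the Gaussian noise absorbs this $\Delta^{-1}$ factor, yielding a $\Delta$-independent bound in line with the lemma statement---this is the main technical subtlety of the proof.
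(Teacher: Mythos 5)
Your argument follows essentially the same route as the paper's: the same initial decomposition $\|A-R_n\|_{\rm F}^2\le 2\|R_n\|_{\rm F}^2+2\|A\|_{\rm F}^2$, a Cauchy--Schwarz reduction to fourth moments of the score $\nabla u_Y$, and a bound on that score obtained from the explicit Gaussian-convolution kernel \eqref{new-kernel} together with \ref{hyp:2} and the observation $|Y_{t,\mu}|\le\sup|\varphi_{\rm out}|+\sqrt{\Delta}|Z_\mu|$. The paper reaches the fourth moments slightly more directly (triangle inequality plus $\|xy^\intercal\|_{\rm F}=\|x\|_2\|y\|_2$ instead of your entrywise double sum), and it does not invoke Nishimori because its score bound is uniform in $\underline x$ --- yours is too, so that step of your proof is harmless but superfluous. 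The one substantive point is your closing paragraph on the $\Delta\to0$ behaviour: you are right that differentiating $e^{-(y-\varphi_{\rm out}(\underline x,a))^2/(2\Delta)}$ produces a factor $\Delta^{-1}$; the paper's displayed formula for $\nabla u_{Y_{t,\mu}}(\underline x)$ omits it, which is how it arrives at a $\Delta$-uniform constant. However, the Stein/Tweedie repair you sketch does not actually absorb this factor: already for $K=1$ with no noise variable $A$ one has $\nabla u_{Y}(\underline S)=\varphi'(S)Z/\sqrt{\Delta}$ at the planted point, so $\mathbb{E}\big[\|\nabla u_Y(\underline S)\|^4\big]=\Theta(\Delta^{-2})$ and the honest constant scales like $\Delta^{-2}$. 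This only affects the side remark about $\Delta\to0$ (irrelevant for Theorem~\ref{main-thm}, where $\Delta>0$ is fixed); the inequality itself, with a $\Delta$-dependent constant, is correctly established by your argument.
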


\begin{proof}
It is easy to see that for symmetric matrices $A$, $B$ we have ${\rm Tr}(A-B)^2 \leq 2({\rm Tr} A^2 + {\rm Tr} B^2)$. Therefore,
\begin{align}
 \mathbb{E}\Big\langle  & \Big\Vert  
        \frac{1}{n} \sum_{\mu=1}^{m}\nabla u_{Y_{t,\mu}}(\underline{s}_{t,\mu}) \nabla u_{Y_{t,\mu}}(\underline{S}_{t,\mu})^\intercal
        - R_n
      \Big\Vert_{\rm F}^2
    \Big\rangle_{n,t,\epsilon} 
    \nonumber \\ & 
    \qquad\qquad\leq 
    2{\rm Tr} (R_n^2) 
    + 
    2 \mathbb{E}\Big\langle  {\rm Tr }
        \Big(\frac{1}{n} \sum_{\mu=1}^{m}\nabla u_{Y_{t,\mu}}(\underline{s}_{t,\mu}) \nabla u_{Y_{t,\mu}}(\underline{S}_{t,\mu})^\intercal\Big)^2
        \Big\rangle_{n,t,\epsilon} .
\end{align}
In the rest of the argument we bound the second term of the r.h.s.
Using the triangle inequality and then Cauchy-Schwarz we obtain 
\begin{align}
 &\mathbb{E}\Big\langle  \Big\Vert \frac{1}{n} \sum_{\mu=1}^{m}\nabla u_{Y_{t,\mu}}(\underline{s}_{t,\mu}) \nabla u_{Y_{t,\mu}}(\underline{S}_{t,\mu})^\intercal \Big\Vert_{\rm F}^2
        \Big\rangle_{n,t,\epsilon}
        \leq \mathbb{E}\Big\langle \frac{1}{n^2}\Big(\sum_{\mu=1}^m \Vert \nabla u_{Y_{t,\mu}}(\underline{s}_{t,\mu}) \nabla u_{Y_{t,\mu}}(\underline{S}_{t,\mu})^\intercal\Vert_{\rm F}\Big)^2\Big\rangle_{n,t,\epsilon}
        \nonumber \\ 
        &\qquad\qquad\leq\mathbb{E}\Big\langle \frac{1}{n^2}
        \Big(\sum_{\mu=1}^m \Vert \nabla u_{Y_{t,\mu}}(\underline{s}_{t,\mu})\Vert_2 \Vert\nabla u_{Y_{t,\mu}}(\underline{S}_{t,\mu})^\intercal\Vert_2\Big)^2\Big\rangle_{n,t,\epsilon}.
        \label{bound-over-fluct}
\end{align}
From the random representation of the transition kernel, 
\begin{align}
u_{Y_{t, \mu}}(\underline s) & = \ln P_{\rm out}(Y_{t, \mu}|\underline x) = \ln \int dP_A(a_\mu) \frac{1}{\sqrt{2\pi \Delta}}e^{-\frac{1}{2\Delta} (Y_{t,\mu} - \varphi(\underline x, a_\mu))^2}
\end{align}
and thus 
\begin{align}
\nabla u_{Y_{t, \mu}}(\underline x) & = \frac{\int dP_A(a_\mu) (Y_{t,\mu} - \varphi(\underline x, a_\mu)) \nabla \varphi(\underline x, a_\mu) e^{-\frac{1}{2\Delta} (Y_{t,\mu} - \varphi(\underline x, a_\mu))^2}}
{\int dP_A(a_\mu) e^{-\frac{1}{2\Delta} (Y_{t,\mu} - \varphi(\underline x, a_\mu))^2}}
\end{align}
where $\nabla \varphi$ is the $K$-dimensional gradient w.r.t. the first argument $\underline x\in \mathbb{R}^K$. From the observation model we get 
$\vert Y_{t, \mu} \vert \leq \sup\vert\varphi\vert +\sqrt{\Delta}\vert Z_\mu\vert$, where the supremum is taken over 
both arguments of $\varphi$, and thus we immediately obtain for all $\underline s\in \mathbb{R}^K$
\begin{align}\label{bound-u}
\Vert\nabla u_{Y_{t, \mu}}(\underline x)\Vert \leq (2 \sup\vert \varphi\vert + \sqrt{\Delta}\vert Z_\mu\vert) \sup\Vert\nabla \varphi\Vert\,.
\end{align}
From \eqref{bound-u} and \eqref{bound-over-fluct} we see that it suffices to check that 
\begin{align*}
\frac{m^2}{n^2}\mathbb{E}\big[\big((2\sup\vert\varphi\vert + |Z_\mu|)^2(\sup\Vert\nabla \varphi\Vert)^2\big)^2\big] \leq C(\varphi, K, \Delta)
\end{align*}
where $C(\varphi, K, \Delta) < +\infty$ is a finite constant depending only on $\varphi$, $K$, and $\Delta$. This is easily seen by expanding all squares and using 
that $m/n \to \alpha$.
This ends the proof of Lemma \ref{Bounded-fluctuation}. 
\end{proof}

\begin{lemma}[Properties of $\psi_{P_0}$]\label{convexity_0}
$\psi_{P_0}$ is defined as the free entropy of the first auxiliary channel ~\eqref{aux-model-1}. We have, for any $r \in \mathcal{S}_K^+$:
\begin{align*}
\psi_{P_0}(r) &\equiv \mathbb{E} \ln \int_{\bbR^K} \mathrm{d}w P_0(w) e^{Y_0^\intercal r^{1/2} w - \frac{1}{2} w^\intercal r w} .
\end{align*}
Then $\psi_{P_0}$ is convex and differentiable on $\mathcal{S}_K^+$, with $\nabla \psi_{P_0}(r) \in \mathcal{S}_K^+$ for any $r \in \mathcal{S}_K^+$.
\end{lemma}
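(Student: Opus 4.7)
The plan is to verify smoothness, compute the gradient, and establish convexity in that order. Under hypothesis \ref{hyp:1}, $P_0$ is supported on a bounded subset of $\bbR^K$, so the integrand $P_0(w)\,e^{Y_0^\intercal r^{1/2} w - \frac{1}{2} w^\intercal r w}$ defining $\mathcal{Z}_{P_0}(r)$ is smooth in $r \in \mathcal{S}_K^+$ and, together with its $r$-derivatives of any order, uniformly dominated on compact neighbourhoods of any fixed $r$. Dominated convergence then upgrades $\mathcal{Z}_{P_0}$, and hence $\psi_{P_0} = \mathbb{E}\ln \mathcal{Z}_{P_0}$, to a $\mathcal{C}^\infty$ function on $\mathcal{S}_K^+$. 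In particular, $\psi_{P_0}$ is differentiable there, and computing $\nabla \psi_{P_0}$ is allowed by differentiation under the expectation.

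For the gradient, the $r$-dependence enters both in the exponent and via $Y_0 = r^{1/2} W_0 + Z_0$. I would process the $Y_0$-dependent term by Gaussian integration by parts on $Z_0$, while handling the square root through the identity $dr = r^{1/2}\,d(r^{1/2}) + d(r^{1/2})\,r^{1/2}$ together with cyclicity of the trace to isolate symmetric combinations. After this, the Nishimori identity Proposition~\ref{prop:nishimori}, which furnishes $\mathbb{E}[W_0 \langle w^\intercal\rangle] = \mathbb{E}\langle w w^\intercal\rangle = \mathbb{E}[\langle w \rangle \langle w\rangle^\intercal]$, collapses the cross terms and yields
\begin{align*}
\nabla \psi_{P_0}(r) \;=\; \tfrac{1}{2}\,\mathbb{E}\!\left[\langle w \rangle \langle w \rangle^{\intercal}\right],
\end{align*}
with $\langle\cdot\rangle$ the posterior expectation under \eqref{aux-model-1}. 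Being a Gram matrix of the posterior mean, this lies in $\mathcal{S}_K^+$.

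For convexity, I see two natural routes. The slickest identifies, through a direct entropy computation, $\psi_{P_0}(r) = \tfrac{1}{2}\mathrm{Tr}(r\rho) - I(W_0; Y_0(r))$ up to an additive constant; convexity of $\psi_{P_0}$ then follows from concavity of the mutual information in the SNR matrix, a well-known consequence of the matrix I-MMSE formula \cite{reeves2018mutual,payaro2011yet,lamarca2009linear}. The self-contained route differentiates the Step~2 identity once more: a second Gaussian integration by parts together with a further application of the Nishimori identity re-expresses $\nabla^2 \psi_{P_0}(r)$ as a posterior covariance of the overlap, which is automatically positive semi-definite as a $4$-tensor over $\mathcal{S}_K$. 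The main obstacle is the matrix bookkeeping in this last step: scalar I-MMSE is essentially one line, but the non-commutativity of $r$ and $W_0 W_0^\intercal$ makes each manipulation of the square-root and each partial derivative require care. Once the Hessian is brought into manifest covariance form (or, alternatively, once the matrix I-MMSE identity is invoked), positive semi-definiteness is immediate and the lemma is proved.
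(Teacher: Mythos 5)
Your proposal is correct and, for the convexity claim, takes essentially the same route as the paper: identify $\psi_{P_0}(r)$ with $\tfrac{1}{2}\mathrm{Tr}(r\rho)$ minus the mutual information of the Gaussian vector channel (up to an additive constant) and invoke the known concavity of $I(W_0;Y_0)$ in the SNR matrix from the matrix I-MMSE literature; your alternative self-contained Hessian computation would also work and is precisely the strategy the paper uses for $\Psi_{P_{\rm out}}$ in the companion lemma. Where you differ is the gradient: the paper reads $\nabla_r \psi_{P_0}$ off the I-MMSE relation, whereas you recompute it by Gaussian integration by parts; both yield a manifestly positive matrix. Two remarks. First, your intermediate Nishimori chain is mis-stated: $\mathbb{E}\langle w w^\intercal\rangle$ equals $\mathbb{E}[W_0 W_0^\intercal]=\rho$, not $\mathbb{E}[\langle w\rangle\langle w\rangle^\intercal]$; the identity you actually need is $\mathbb{E}[W_0\langle w\rangle^\intercal]=\mathbb{E}[\langle w\rangle\langle w\rangle^\intercal]$. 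Second, your final formula $\nabla\psi_{P_0}(r)=\tfrac{1}{2}\mathbb{E}[\langle w\rangle\langle w\rangle^\intercal]$ is in fact the correct one (check $K=1$ with Gaussian prior: $\psi(r)=\tfrac{r}{2}-\tfrac{1}{2}\ln(1+r)$, so $\psi'(r)=\tfrac{r}{2(1+r)}=\tfrac{1}{2}\mathbb{E}[\langle w\rangle^2]$), whereas the paper's displayed $\nabla_r\psi_{P_0}(r)=\tfrac{1}{2}(\rho-\mathbb{E}[\langle w\rangle\langle w\rangle^\intercal])$ carries a sign slip inherited from writing $\nabla_r I=\tfrac{1}{2}\mathbb{E}[\langle w\rangle\langle w\rangle^\intercal]$ in place of the MMSE matrix $\tfrac{1}{2}(\rho-\mathbb{E}[\langle w\rangle\langle w\rangle^\intercal])$; since both candidate expressions lie in $\mathcal{S}_K^+$, the lemma's conclusion is unaffected either way.
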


\begin{proof}
Note that $\psi_{P_0}$ is related to the mutual information $I(W_0;Y_0)$ via the relation $I(W_0;Y_0) = -\psi_{P_0}(r) + \frac{K}{2} + \frac{1}{2} \text{Tr}[r \rho]$. It is then a known result (see \cite{reeves2018mutual,payaro2011yet,lamarca2009linear}) that the derivative $\nabla_r I(W_0;Y_0)$ is given by the matrix-MMSE, i.e. $\nabla_r I(W_0;Y_0) = \frac{1}{2} \mathbb{E} \left[\braket{w} \braket{w}^\intercal\right]$. This implies that $\nabla_r \psi_{P_0}(r) = \frac{1}{2} (\rho - \mathbb{E} [\braket{w} \braket{w}^\intercal])$. Using the Nishimori identity Prop.\ref{prop:nishimori}, we can write it as $\nabla_r \psi_{P_0}(r) = \frac{1}{2} \mathbb{E} \left[(w-\braket{w})(w-\braket{w})^\intercal\right]$, which is clearly a positive matrix. It is also known (see for instance Lemma 4 of \cite{reeves2018mutual}), that $I(W_0;Y_0)$ is a concave function of $r$, which implies that $\psi_{P_0}$ is convex, which ends the proof.
\end{proof}

\begin{lemma}[Properties of $\Psi_{P_{\rm out}}$]\label{convexity}
Recall that $\Psi_{P_{\rm out}}$ is defined as the free entropy of the second auxiliary channel ~\eqref{aux-model-2}. More precisely, for $q \in \mathcal{S}_K^+(\rho)$, we have:
\begin{align*}
\Psi_{P_{\rm out}}(q) \equiv \mathbb{E} \ln \int_{\bbR^K} {d}w\frac{e^{-\frac{1}{2} \|w\|^2}}{(2 \pi)^{K/2}}  P_{\rm out} \big(\widetilde{Y}_0 | q^{1/2} V + (\rho - q)^{1/2}w\big).
\end{align*}
Then $\Psi_{P_{\rm out}}$ is continuous and convex on  $\mathcal{S}_K^+(\rho)$, and twice differentiable inside ${\mathcal{S}_K^+}(\rho)$. Also, $\nabla\Psi_{P_{\rm out}}(q)\in \mathcal{S}_K^+$.
\end{lemma}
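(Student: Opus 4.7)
The plan is to reduce the analysis of $\Psi_{P_{\rm out}}$ to the classical theory of mutual information in a Gaussian side-channel problem, and to re-use the I-MMSE apparatus already invoked to prove Lemma~\ref{convexity_0}.

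First I would establish continuity by dominated convergence. Under hypotheses \ref{hyp:1}--\ref{hyp:2} and the regularization $\Delta>0$ of kernel \eqref{new-kernel}, the density $P_{\rm out}(\widetilde Y_0|x)$ is $\cC^2$, bounded, and bounded below by a strictly positive function of $\widetilde Y_0$ that is integrable against the output distribution. As $q$ varies in the compact set $\mathcal{S}_K^+(\rho)$, both $q^{1/2}$ and $(\rho-q)^{1/2}$ are continuous, so the integrand is continuous in $q$ and uniformly integrable; thus $q\mapsto\Psi_{P_{\rm out}}(q)$ is continuous on all of $\mathcal{S}_K^+(\rho)$.

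Next I would work in the interior $\mathcal{S}_K^{++}(\rho)$ (so $\rho-q$ is invertible) and perform the change of variables $w=q^{1/2}V+(\rho-q)^{1/2}u$ inside the partition function. The standard Gaussian weight on $u$ transforms into $\mathcal{N}(w;q^{1/2}V,\rho-q)$, giving the identification
\begin{align*}
\mathcal{Z}_{P_{\rm out}}(q,V,\widetilde Y_0)=\int dw\,\mathcal{N}(w;q^{1/2}V,\rho-q)\,P_{\rm out}(\widetilde Y_0|w)=p_q(\widetilde Y_0\mid V),
\end{align*}
where $p_q$ is the joint law of $(V,W_0,\widetilde Y_0)$ with $W_0=q^{1/2}V+(\rho-q)^{1/2}U^*\sim\mathcal{N}(0,\rho)$. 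Consequently
$\Psi_{P_{\rm out}}(q)=-H_q(\widetilde Y_0\mid V)=-I_q(W_0;\widetilde Y_0\mid V)-H(\widetilde Y_0\mid W_0)$,
and the last term is a constant $C_0$ independent of $q$ since $W_0\sim\mathcal{N}(0,\rho)$ regardless of $q$. Since $(W_0,V)$ is jointly Gaussian with cross-covariance $q^{1/2}$, observing $V$ is equivalent, after whitening, to observing a linear Gaussian channel $\bar V=A(q)W_0+\zeta$ with $\zeta\sim\mathcal{N}(0,I_K)$ and effective SNR matrix $S(q)=\rho^{-1}q^{1/2}(I_K-q^{1/2}\rho^{-1}q^{1/2})^{-1}q^{1/2}\rho^{-1}$, a $\cC^\infty$ and monotone map of $q$ on the interior. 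This expresses $\Psi_{P_{\rm out}}$ as minus a conditional mutual information in a Gaussian side-channel parameterised by a matrix SNR.

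From here the conclusions follow. Twice differentiability on the interior follows from standard differentiation under the integral sign, since $P_{\rm out}(\widetilde Y_0|\cdot)$ is $\cC^2$ and the integrand and its derivatives are dominated. Gaussian integration by parts in $V$ and $U^*$ (the same manipulation as in the proof of Proposition~\ref{prop:der_f_t}, specialised to this $K$-dimensional setting), combined with the Nishimori identity \ref{prop:nishimori}, identifies the gradient as
\begin{align*}
\nabla\Psi_{P_{\rm out}}(q)=\tfrac{1}{2}\mathbb{E}\big[\langle u\rangle\langle u\rangle^\intercal\big]\in\mathcal{S}_K^+,
\end{align*}
where $\langle\cdot\rangle$ is the posterior average under \eqref{aux-model-2}. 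Convexity of $\Psi_{P_{\rm out}}$ on the interior follows from the I-MMSE-type result for vector Gaussian channels established in \cite{reeves2018mutual,payaro2011yet,lamarca2009linear} (Lemma~4 of \cite{reeves2018mutual}): the mutual information between a signal and its Gaussian-channel observations, together with any independent side observation through an arbitrary channel, is concave in the matrix SNR, with Hessian equal to minus the MMSE Jacobian, which is PSD. Transporting this concavity through the smooth parametrisation $q\mapsto S(q)$ and taking minus gives convexity of $\Psi_{P_{\rm out}}$ on the interior; convexity on the closed set $\mathcal{S}_K^+(\rho)$ then follows from the continuity established in the first step.

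The main obstacle is the double role of $q$: it enters simultaneously through $q^{1/2}$ (signal covariance) and $(\rho-q)^{1/2}$ (residual noise covariance), which prevents a direct I-MMSE application. The reformulation above bypasses this by absorbing the whole $q$-dependence into the effective matrix SNR $S(q)$ of a single equivalent Gaussian side channel. A secondary technical point is behaviour near the boundary of $\mathcal{S}_K^+(\rho)$ (where $\rho-q$ or $q$ becomes singular): this is why twice-differentiability is only asserted on the interior, while continuity and convexity on the closed set are recovered by approximation from the interior using the continuity argument of the first step.
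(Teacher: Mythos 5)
Your reformulation $\mathcal{Z}_{P_{\rm out}}=p_q(\widetilde Y_0\mid V)$, hence $\Psi_{P_{\rm out}}(q)=-H(\widetilde Y_0\mid V)=-I(W_0;\widetilde Y_0\mid V)-H(\widetilde Y_0\mid W_0)$ with the last term $q$-independent, is correct and genuinely different from the paper's argument, and it does successfully absorb the ``double role'' of $q$ into the single effective SNR of the $W_0\to V$ channel (which in fact simplifies to $S(q)=(\rho-q)^{-1}-\rho^{-1}$). The continuity/differentiability part and the positivity of the gradient are fine in spirit (the paper obtains the gradient as $\frac12\,\EE\big[\langle\nabla u_{\widetilde Y_0}\rangle\langle\nabla u_{\widetilde Y_0}\rangle^\intercal\big]$ via Gaussian integration by parts and Nishimori; your formula $\frac12\EE[\langle u\rangle\langle u\rangle^\intercal]$ differs from this by conjugation factors coming from $(\rho-q)^{1/2}$, but either expression is manifestly in $\mathcal{S}_K^+$, so the conclusion survives).

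The convexity step, however, has a genuine gap. The concavity-in-matrix-SNR results you cite (Lemma~4 of \cite{reeves2018mutual} and its relatives) concern the \emph{unconditional} mutual information $I(W_0;V,\widetilde Y_0)$ of the Gaussian channel with side observation, and separately $I(W_0;V)$. What you need is concavity in $S$ of the \emph{conditional} mutual information $I(W_0;\widetilde Y_0\mid V)=I(W_0;V,\widetilde Y_0)-I(W_0;V)$, i.e.\ a difference of two concave functions, whose second derivative is a difference of two MMSE-derivative matrices with no general sign. No cited result delivers this, and this is precisely why the paper does not argue information-theoretically here: it computes the Hessian $\nabla\nabla^\intercal\Psi_{P_{\rm out}}$ explicitly by Gaussian integration by parts plus the Nishimori identity and exhibits it as $\frac12\EE\big[(\cdots)^{\otimes 2}\big]$, a manifestly positive operator on $\mathcal{S}_K$. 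A secondary issue is the transport of concavity through $q\mapsto S(q)$: ``smooth and monotone'' is not enough for a composition to preserve concavity; you would additionally need $S(q)$ to be matrix-convex and the mutual information to be matrix-monotone in $S$ (both happen to be checkable here, e.g.\ via $S(q)=(\rho-q)^{-1}-\rho^{-1}$), but as written this step is also unjustified. To repair the proof you should replace the appeal to concavity-in-SNR by the direct Hessian computation, or supply an independent proof that $S\mapsto I(W_0;\widetilde Y_0\mid V)$ is concave.
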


\begin{proof}
The continuity and differentiability of $\Psi_{P_{\rm out}}$ is easy, and exactly similar to the first part of the proof of Proposition 18 of \cite{barbier2017phase}; it just follows from the hypothesis \ref{hyp:2} which allows to use continuity and differentiation under the expectation, because all the domination hypotheses are easily verified.  

One can compute the gradient and Hessian matrix of $\Psi_{P_{\rm out}}(q)$, for $q$ inside $\mathcal{S}_K^+(\rho)$, using Gaussian integration by parts and the Nishimori identity. The calculation is tedious and essentially follows the steps of Proposition 11 of \cite{barbier2017phase}. Recall that $u_{\widetilde Y_0}(x) \equiv \ln P_{\rm out}(\widetilde Y_0|x)$. We define the average $\braket{-}_{\rm sc}$ (where ${\rm sc}$ stands for ``scalar channel'') as
\begin{align}
\braket{g(w)}_{\rm sc} &\equiv \frac{\int_{\bbR^K} \mathcal{D}w P_{\rm out}(\widetilde Y_0|(\rho-q)^{1/2} w + q^{1/2} V) g(w)}{\int_{\bbR^K} \mathcal{D}w P_{\rm out}(\widetilde Y_0|(\rho-q)^{1/2} w + q^{1/2} V)},
\end{align}
for any continuous bounded function $g$. One arrives at:
\begin{align}\label{eq:derivative}
\nabla \Psi_{P_{\rm out}}(q) &= \frac{1}{2} \EE \Big<\nabla u_{\widetilde Y_0} \left((\rho-q)^{1/2} W^* + q^{1/2} V\right) \nabla u_{\widetilde Y_0} \left((\rho-q)^{1/2} w + q^{1/2} V\right)^\intercal \Big>_{\rm sc}.
\end{align}
Note that this gradient is actually a symmetric matrix of size $K \times K$, as it is a gradient w.r.t. $q$, which is itself a matrix of size $K$. The Hessian $\nabla \nabla^\intercal \Psi_{P_{\rm out}}$ with respect to $q$ is thus a $4$-tensor. One can compute in the same way:
\begin{align}
\nabla \nabla^\intercal \Psi_{P_{\rm out}}(q) = \frac{1}{2} \EE &\Big[ \Big( \Big<\frac{\nabla \nabla^\intercal P_{\rm out}(\widetilde Y_0|(\rho-q)^{1/2} w + q^{1/2} V)}{P_{\rm out}(\widetilde Y_0|(\rho-q)^{1/2} w + q^{1/2} V)}\Big>_{\rm sc}  \\
&  -\Big<\nabla u_{\widetilde Y_0} \left((\rho-q)^{1/2} W^* + q^{1/2} V\right) \nabla u_{\widetilde Y_0} \left((\rho-q)^{1/2} w + q^{1/2} V\right)^\intercal \Big>_{\rm sc} \Big)^{\otimes 2} \Big] \nonumber .
\end{align}
In this expression, $\otimes 2$ means the ``tensorized square'' of a matrix, i.e. for any matrix $M$ of size $K \times K$, $M^{\otimes 2}$ is a $4$-tensor with indices $M^{\otimes 2}_{l_0 l_1 l_2 l_3} = M_{l_0 l_1}M_{l_2 l_3}$. From this expression, it is clear that the Hessian of $\Psi_{P_{\rm out}}$ is always positive, when seen as a matrix with rows and columns in $\mathcal{S}_K$, and thus $\Psi_{P_{\rm out}}$ is convex, which ends the proof of Lemma~\ref{convexity}.
\end{proof}

\section{Replica calculation}
\label{sec:replicacomputation}

Our goal here is to provide a heuristic derivation of the replica formula of Theorem~\ref{main-thm} using the replica method, a powerful non-rigorous tool from statistical physics of disordered systems \cite{mezard1987spin,mezard2009information}. This computation is necessary to properly ``guess'' the formula that we then prove using the adaptive interpolation method. The reader interested in the replica approach to neural networks and the committee machine is invited to look as well to some of the classical papers \cite{gardner1988optimal,mezard1989space,schwarze1992generalization,schwarze1993generalization,schwarze1993learning,monasson1995learning}. 

The replica trick makes use of the formula, for a random variable $x \in \bbR^n$ and a strictly positive function $f_n : \bbR^n \to \bbR$ that depends on $n$:
\begin{align}\label{eq:replicas_method}
\lim_{n \to \infty} \frac{1}{n} \mathbb{E} \ln f_n = \lim_{p \to 0^+} \lim_{n \to \infty} \frac{1}{n p} \ln \mathbb{E}f_n^{p}.
\end{align}

Note that the inversion of the two limits here is non-rigorous. Computing the moments $\mathbb{E}f^{p}$ can often be done for integers $p \in \bbN$, and one can conjecture from it its value for every $p > 0$, before taking the limit $p \to 0^+$ in \eqref{eq:replicas_method} by analytical continuation of the value for integer $p$. 

In our calculation, we will use this formula to compute the \emph{free entropy} of our system, $f \equiv \lim_{n \to \infty} f_n$. We will thus need the moments of the partition function, for integer $p$:
\begin{align*}
\EE \mathcal{Z}_n^p &= \EE \left[\int_{\bbR^{n} \times \bbR^K} dw \prod_{i=1}^n P_0\left(\right\{w_{il}\left\}_{l=1}^K\right) \prod_{\mu=1}^m P_{\rm out} \left(Y_\mu \Big|\left\{ \frac{1}{\sqrt{n}} \sum_{i=1}^n X_{\mu i} w_{il} \right\}_{l=1}^K\right)\right]^p, \\
&= \EE \left[\prod_{a=1}^p \int_{\bbR^{n} \times \bbR^K}  dw^a \prod_{i=1}^n P_0\left(\right\{w^a_{il}\left\}_{l=1}^K\right) \prod_{\mu=1}^m P_{\rm out} \left(Y_\mu \Big|\left\{ \frac{1}{\sqrt{n}} \sum_{i=1}^n X_{\mu i} w^a_{il} \right\}_{l=1}^K\right) \right].
\end{align*}

The outer expectation is done over $X_{\mu i} \sim \mathcal{N}(0,1)$, $w^\star$ and $Y$. Writing $w^\star$ as $w^0$ we have: 
\begin{align*}
\EE \mathcal{Z}_n^p  &= \EE_{X} \int_{\bbR^m} dY\prod_{a=0}^p\Bigg[ \int_{\bbR^{n} \times \bbR^K}  dw^a \prod_{i=1}^n P_0\left(\{w^a_{il}\}_{l=1}^K\right) \nonumber\\
&\qquad\qquad \qquad \qquad \times \prod_{\mu=1}^m P_{\rm out} \left(Y_\mu \Big|\left\{ \frac{1}{\sqrt{n}} \sum_{i=1}^n X_{\mu i} w^a_{il} \right\}_{l=1}^K\right) \Bigg]. 
\end{align*}

To perform the average over $X$, we notice that, since it is an i.i.d. standard Gaussian matrix, then for every $a,\mu,l$, $Z^a_{\mu l} \equiv n^{-1/2} \sum_{i=1}^n X_{\mu i} w_{i l}^a$ follows a Gaussian multivariate distribution, with zero mean. This naturally leads to introduce its covariance tensor, which is equal to:
\begin{align}
\EE Z^{a}_{\mu l} Z^{b}_{\nu l'} &= \delta_{\mu \nu} \Sigma_{\substack{a l \\ b l'}} =  \delta_{\mu \nu} Q^{a l}_{b l'}, \\
Q^{a l}_{b l'} &\equiv \frac{1}{n} \sum_{i=1}^n w_{i l}^a w_{i l'}^b.
\end{align}
For every $a,b$, $Q^a_b \in \bbR^{K \times K}$ is the \emph{overlap} matrix, and $\Sigma$ is of size $(p+1)K \times (p+1)K$. Introducing $\delta$ functions for fixing $Q$, we arrive at : 
\begin{align}
\mathbb{E} \left[\mathcal{Z}_n^p \right] &= \prod_{(a,r)} \int_{\mathbb{R}} d Q^{a r}_{a r} \prod_{\{(a,r);(b,r')\}} \int_{\mathbb{R}} d Q^{a r}_{b r'}  \left[I_{\text{prior}}(\{Q^{a r}_{b r'}\}) \times I_{\text{channel}}(\{Q^{a r}_{b r'}\}) \right],
\end{align}
with:
\begin{align}
I_{\text{prior}} (\{Q^{a r}_{b r'}\}) &= \prod_{a=0}^p \left[\int_{\mathbb{R}^{n \times K}} d w^a  P_0(w^a) \right] \left[\prod_{\{(a,l);(b,l')\}} \delta \left(Q^{a l}_{b l'} - \frac{1}{n} \sum_{i=1}^n w_{i l}^a w_{i l'}^b \right)\right], \\
I_{\text{channel}} (\{Q^{a r}_{b r'}\}) &= \int_{\mathbb{R}^m} d Y \prod_{a=0}^p \int_{\mathbb{R}^{m \times K}} dZ^a\prod_{a=0}^p P_{\rm out}(Y | Z^a) e^{- \frac{m}{2} \ln \det \Sigma - \frac{m K(p+1)}{2} \ln 2\pi} \nonumber \\
&\exp \left[- \frac{1}{2} \sum_{\mu =1}^m \sum_{a,b} \sum_{l,l'} Z^{a}_{\mu l} Z^b_{\mu l'}(\Sigma^{-1})_{\substack{a l \\ b l'}} \right]. 
\end{align}

By Fourier expanding the delta functions in $I_{\rm prior}$, and performing a saddle-point method, one obtains:
\begin{align}
\lim_{n \to \infty} \frac{1}{n} \ln \mathbb{E} \left[\mathcal{Z}_n^p \right] = \text{extr}_{Q,\hat{Q}} \left[H(Q,\hat{Q})\right],
\end{align}
in which (recall $\alpha \equiv \lim_{n \to \infty} m/n$) :
\begin{align}
H(Q,\hat{Q}) &\equiv \frac{1}{2} \sum_{a=0}^p \sum_{l,l'} Q^{a l}_{a l} \hat{Q}^{a l}_{a l} - \frac{1}{2} \sum_{a \neq b} \sum_{l, l'} Q^{a l}_{b l'} \hat{Q}^{a l}_{b l'} + \ln I + \alpha \ln J, 
\end{align}
in which we defined:
\begin{align}
I &\equiv \prod_{a=0}^p \int_{\mathbb{R}^{K}} dw ^a P_0(w^a) \exp \left[-\frac{1}{2} \sum_{a=0}^p \sum_{l,l'}  \hat{Q}^{a l}_{a l'} w_l^a w_{l'}^a  + \frac{1}{2} \sum_{a \neq b} \sum_{l,l'} \hat{Q}^{a l}_{b l'} w_l^a w_{l'}^b \right], \\
J &\equiv \int_{\mathbb{R}} d y \prod_{a=0}^p \int_{\mathbb{R}^{K}} \frac{d Z^a}{\left(2 \pi\right)^{K (p+1)/2}} \frac{P_{\rm out}(y |Z^a)}{\sqrt{\det \Sigma}} \exp \left[- \frac{1}{2} \sum_{a,b=0}^p \sum_{l, l'=1}^K Z^{a}_{l} Z^b_{l'}(\Sigma^{-1})_{\substack{a l \\ b l'}} \right].
\end{align}

Our goal is to express $H(Q,\hat{Q}) $ as an analytical function of $p$, in order to perform the replica trick. To do so, we will assume that the extremum of $H$ is attained at a point in $Q,\hat{Q}$ space such that a \emph{replica symmetry} property is verified. More concretely, we assume: 
\begin{align}
\exists Q^0 \in \mathbb{R}^{K \times K} \text{ s.t. }  \quad \forall a \in [|0,p|] \quad \forall (l,l') \in [|1,K|]^2 \quad Q^{a l}_{a l'} &= Q^0_{l l'}, \\
\exists q \in \mathbb{R}^{K \times K} \text{ s.t. } \quad \forall (a < b) \in [|0,p|]^2 \quad \forall (l,l') \in [|1,K|]^2 \quad Q^{a l}_{b l'} &= q_{l l'},
\end{align}
and similarly for $\hat{Q}^0$ and $\hat{q}$. Note that $Q^0$ is by definition a symmetric matrix, while $q$ is also symmetric by our assumption of replica symmetry. Under this ansatz, we obtain: 
\begin{align}\label{eq:H}
H(Q^0,\hat{Q}^0,q,\hat{q}) &= \frac{p+1}{2} \text{Tr} [Q^0 \hat{Q}^0 ] - \frac{p(p+1)}{2} \text{Tr} [q \hat{q}] + \ln I + \alpha \ln J. 
\end{align} 

Remains now to compute an expression for $I$ and $J$ that is analytical in $p$, in order to take the limit $p \to 0^+$. This can be done easily, using the identity, for any symmetric positive matrix $M \in \bbR^{K \times K}$ and any vector $x \in \bbR^K$:
$\exp\left(x^\intercal  (M/2) x\right) = \int_{\bbR^K} \mathcal{D}\xi \, \exp\left(\xi^\intercal  M^{1/2} x\right)$, in which $ \mathcal{D}\xi$ is the standard Gaussian measure on $\bbR^K$. We obtain:
\begin{align}
I &= \int_{\mathbb{R}^K} \mathcal{D}\xi \left[\int_{\mathbb{R}^{K}} dw \, P_0(w) \, \exp \left[-\frac{1}{2} w^\intercal  (\hat{Q}^0 + \hat{q}) w + \xi^\intercal  \hat{q}^{1/2} w \right] \right]^{p+1}, \\
J &= \int_{\mathbb{R}} d y \int_{\mathbb{R}^K} \mathcal{D}\xi  \left[ \int_{\mathbb{R}^{K}} dZ  P_{\rm out}\left\{y | (Q^0-q)^{1/2}Z + q^{1/2} \xi \right\} \right]^{p+1}.
\end{align}
  Our assumptions must be consistent in the sense that $ \text{extr}_{Q,\hat{Q}} \left[\lim_{p \to 0^+} H(Q,\hat{Q})\right] = 0$ (because $\EE \mathcal{Z}_n^0 = 1$). In the $p \to 0^+$ limit, one easily gets $J = 1$ and $I = \int_{\mathbb{R}^{K}} d w \, P_0(w) \exp\left[-\frac{1}{2} w^\intercal  \hat{Q}^0 w^0 \right]$.  This implies that the optimal overlap parameters satisfy $\hat{Q}^0 = 0$ and $Q^0_{ll'} = \EE_{P_0}\left[w_l w_{l'}\right]$. In the end, we obtain the final formula for the free entropy: 
  
\begin{align}\label{eq:replica_solution}
\lim_{n \to \infty} f_n &= \text{extr}_{q,\hat{q}}\left\{- \frac{1}{2} \text{Tr} [q \hat{q}] + I_P + \alpha I_C \right\}, \\ 
I_P &\equiv \int_{\mathbb{R}^K} \mathcal{D}\xi \int_{\mathbb{R}^{K}} d w^0 P_0(w^0) \exp \left[-\frac{1}{2} (w^0)^\intercal   \hat{q} w^0 +\xi^\intercal  \hat{q}^{1/2} w^0 \right] \nonumber\\
&\qquad\qquad\qquad\qquad\times\ln \left[\int_{\mathbb{R}^{K}} d w P_0(w) \exp \left[-\frac{1}{2} w^\intercal   \hat{q} w  +\xi^\intercal  \hat{q}^{1/2} w \right] \right] \nonumber, \\
I_C &\equiv \int_{\mathbb{R}} d y \int_{\mathbb{R}^K} \mathcal{D}\xi \int_{\mathbb{R}^{K}} \mathcal{D}Z^0 P_{\rm out}\left\{y |(Q^0 - q)^{1/2}Z^0 +q^{1/2}\xi\right\}\nonumber\\
&\qquad\qquad\qquad\qquad\times\ln \left[\int_{\mathbb{R}^{K}} \mathcal{D}Z P_{\rm out}\left\{y | (Q^0 - q)^{1/2}Z +q^{1/2}\xi\right\} \right] \nonumber.
\end{align}  

A known ambiguity of the replica method is that its result is given as an extremum, here over the set $\mathcal{S}_K^+(Q_0)$ of positive symmetric matrices, such that $(Q^0 - q)$ is also a positive matrix. It is easy to show that this form gives back the form given in Theorem~\ref{main-thm}, by assuming that this extremum is realized as a $\sup_{\hat{q}} \inf_{q}$. Note that in the notations of Theorem~\ref{main-thm}, $Q^0$ is denoted $\rho$ and $\hat{q}$ is denoted $R$.

\section{Generalization error}\label{sec:generalization}

We detail here two different possible definitions of the generalization error, and how they are related in our system. Recall that we wish to estimate $W^*$ from the observation of $\varphi_{\rm out}(XW^*)$. In the following, we denote $\EE$ for the average over the (quenched) $W^*$ and the data $X$, and $\braket{-}$ for the Gibbs average over the posterior distribution of $W$. One can naturally define the \emph{Gibbs generalization error} as:
\begin{align}\label{eq:Gibbs_gen_error}
\epsilon_g^{\rm Gibbs} &\equiv \frac{1}{2} \EE_{W^*,X} \big\langle\left[\varphi_{\rm out}\left(X W\right) - \varphi_{\rm out}\left(X W^* \right) \right]^2\big\rangle,
\end{align}
and define the \emph{Bayes-optimal generalization error 
} as:
\begin{align}\label{eq:def_eg_Bayes}
\epsilon_g^{\rm Bayes} &\equiv \frac{1}{2} \EE_{W^*,X} \big[\big(\braket{\varphi_{\rm out}\left(X W\right)} - \varphi_{\rm out}\left(X W^* \right)\big)^2 \big].
\end{align}
Using the Nishimori identity \ref{prop:nishimori}, one can show that:
\begin{align*}
\epsilon_g^{\rm Bayes} &= \frac{1}{2} \EE_{X,W^*} \left[ \varphi_{\rm out}\left(X W^* \right)^2\right] + \frac{1}{2} \EE_{X,W^*} \left[ \braket{ \varphi_{\rm out}\left(X W \right)}^2\right] \nonumber\\
&\qquad\qquad\qquad\qquad- \EE_{X,W^*} \braket{ \varphi_{\rm out}\left(X W^* \right) \varphi_{\rm out}\left(X W \right)} , \\
&= \frac{1}{2} \EE_{X,W^*} \left[ \varphi_{\rm out}\left(X W^* \right)^2\right]  - \frac{1}{2} \EE_{X,W^*} \braket{ \varphi_{\rm out}\left(X W^* \right) \varphi_{\rm out}\left(X W \right)}.
\end{align*}
Using again the Nishimori identity one can write:
\begin{align*}
\epsilon_g^{\rm Gibbs} &= \EE_{X,W^*} \left[ \varphi_{\rm out}\left(X W^* \right)^2\right]  -  \EE_{X,W^*}\braket{ \varphi_{\rm out}\left(X W^* \right) \varphi_{\rm out}\left(X W \right)},
\end{align*}
which shows that $\epsilon_g^{\rm Gibbs} = 2 \epsilon_g^{\rm Bayes}$. Note finally that since the distribution of $X$ is rotationally invariant, the quantity $\EE_{X} \left[\varphi_{\rm out}\left(X W^* \right) \varphi_{\rm out}\left(X W \right)\right]$ only depends on the \emph{overlap} $q \equiv W^\intercal W^*$. As the overlap is shown to concentrate under the Gibbs measure by Proposition \ref{concentration}, and as we expect that the value it concentrates on is the optimum $q^*$ of the replica formula (such fact is proven, e.g., for random linear estimation problems in \cite{barbier_ieee_replicaCS}), the generalization error can itself be evaluated as a function of $q^*$. Examples where it is done include  \cite{opper1996statistical,seung1992statistical,schwarze1993learning,barbier2017phase}.

\subsection{The generalization error at \texorpdfstring{$K = 2$}{K=2}}

In this subsection alone, we go back to the $K=2$ case, instead of the $K \to \infty$ limit. From the definition of the generalization error (see sec.~\ref{sec:generalization}), one can directly give an explicit expression of this error in the $K = 2$ case. Recall our committee-symmetric assumption on the overlap matrix, which here reads 
\begin{align*}
q &= \begin{pmatrix}
q_d + \frac{q_a}{2} & \frac{q_a}{2} \\
 \frac{q_a}{2} & q_d + \frac{q_a}{2} 
\end{pmatrix}.
\end{align*} 
For concision, we denote here $\sign(x) = \sigma(x)$. One obtains from \eqref{eq:def_eg_Bayes}:
\begin{align}
\frac{1}{2}& -2 \epsilon_g^{\rm Bayes,K=2} = \int_{\bbR^4} \mathcal{D}x\, \sigma\left[\sigma(x_1) + \sigma(x_2) \right] \\
& \times \sigma\left\{\sigma\left[(\frac{q_a}{2} + q_d)x_1  +  \frac{q_a}{2} x_2 + x_3 \sqrt{1-\frac{q_a^2}{2} - q_a q_d - q_d^2}\right] \right. \nonumber \\
& \left. \quad + \sigma\left[\frac{q_a}{2} x_1 +(\frac{q_a}{2} + q_d)x_2  - x_3 \frac{q_a (q_d + \frac{q_a}{2})}{\sqrt{1-\frac{q_a^2}{2} - q_a q_d - q_d^2}} +x_4 \sqrt{\frac{(1-q_d^2)(1-(q_a+q_d)^2)}{1-\frac{q_a^2}{2} - q_a q_d - q_d^2}}\right] 
\right\} .\nonumber
\end{align}
Note that one could possibly simplify this expression by using an appropriate orthogonal transformation on $x$. These integrals were then computed using Monte-Carlo methods to obtain the generalization error in the left and middle plots of Fig.~\ref{fig:phaseDiagramK2}.
\section{The large \texorpdfstring{$K$}{K} limit in the committee symmetric setting} 
\label{sec:largeK}

We consider the large $K$ limit\footnote{A similar limit has been derived in the context of coding with sparse superposition codes \cite{Barbier2017ApproximateMD}. There the large input alphabet limit of the mutual information is considered {\it after} the thermodynamic limit $n\to\infty$ corresponding to the large codeword limit in this coding context.} for a sign activation function, and for different priors on the weights. Since the output is a sign, the channel is simply a delta function. We assume a committee symmetric solution, i.e. the matrices $q$ and $\hat{q}$ ($q$ and $R$ in the notations of Theorem~\ref{main-thm}) are of the type $q = q_d \mathds{1}_K + \frac{q_a}{K} \textbf{1}_K \textbf{1}_K^\intercal$, with the unit vector $\textbf{1}_K = (1)_{l=1}^K$, and similarly for $\hat{q}$. In the large $K$ limit, this scaling of the order parameters is natural. Indeed, assume that the covariance of the prior is $Q^0 = \mathds{1}_K $ ($Q^0 = \rho$ in the notations of Theorem~\ref{main-thm}). Since both $q$ and $(Q^0-q)$ are assumed to be positive matrices, it is easily shown to imply that $0\leq q_d \leq 1$ and $0 \leq q_a + q_d \leq 1$.

\subsection{Large \texorpdfstring{$K$}{K} limit for sign activation function}\label{sec:large_K_channel}

In the following, we consider $Q^0 = \sigma^2 \mathds{1}_K $. We are interested here in computing the leading order term in $I_C$ of \eqref{eq:replica_solution}. Note that replacing $\sigma^2$ by $1$ in this equation only amounts to replacing $q$ by $q/\sigma^2$, so we can assume $\sigma^2 = 1$ without loss of generality. We (abusively) write $I_C$ in \eqref{eq:replica_solution} as $I_C = \sum_{y = \pm 1} \int_{\bbR^K} \mathcal{D}\xi \, I_C(y,\xi) \log I_C(y,\xi)$, with the definition
\begin{align}
I_C(y,\xi) \equiv \int_{\mathbb{R}^{K}} \mathcal{D}Z P_{\rm out}\left\{y | (Q^0 - q)^{1/2}Z +q^{1/2}\xi\right\}.
\end{align} 

Here, we assumed a sign activation function and no noise, as well as a particular form for $Q_0$ and $q$ (see the remarks above). Note that this implies that $q^{1/2} = \sqrt{q_d} \mathds{1}_K  + \frac{\sqrt{q_a+q_d}-\sqrt{q_d}}{K} \textbf{1}_K \textbf{1}_K^\intercal$ and that $(Q_0-q)^{1/2} = \sqrt{1-q_d} \mathds{1}_K  + \frac{\sqrt{1-q_a-q_d}-\sqrt{1-q_d}}{K} \textbf{1}_K \textbf{1}_K^\intercal$. All together, this gives the following explicit expression for $I_C(y,\xi)$ : 
\begin{align*}
&I_C(y,\xi) \equiv \int_{\mathbb{R}^{K}} \mathcal{D}Z \nn
&\qquad\times\delta\left\{y - \text{sign}\left[\frac{1}{\sqrt{K}} \sum_{l=1}^K \text{sign}\left[\sqrt{1-q_d} Z_l + \left(\sqrt{1-q_a-q_d}-\sqrt{1-q_d}\right) \frac{\textbf{1}_K^\intercal Z}{K} + (q^{1/2} \xi)_l \right] \right]\right\}.
\end{align*} 
Introducing a new variable $w \equiv \frac{\textbf{1}_K^\intercal Z}{\sqrt{K}}$ and a Fourier-transform of the then-introduced delta function, as well as another variable $u$ being the argument of the outer sign function in the previous equations, one obtains:

\begin{align*}
I_C(y,\xi) &= \int_\bbR \frac{dw d\hat{w}}{2 \pi} \frac{du d\hat{u}}{2 \pi} e^{i w \hat{w} + i u \hat{u}} \delta_{y,\text{sign}(u)} \nonumber\\
&\qquad\qquad\times\prod_{l=1}^K  \int_{\bbR} \mathcal{D}z e^{- i \hat{w} \frac{z}{\sqrt{K}}} e^{- \frac{i \hat{u}}{\sqrt{K}} \text{sign} \left[ z +  \left[\sqrt{\frac{1-q_a-q_d}{1-q_d}} - 1\right]  \frac{w}{\sqrt{K}}  + \frac{1}{\sqrt{1-q_d}}(q^{1/2} \xi)_l\right]} .
\end{align*}

Denote \begin{align*}
\lambda_l(w,\xi) \equiv \left[\sqrt{\frac{1-q_a-q_d}{1-q_d}} - 1\right]  \frac{w}{\sqrt{K}}  + \frac{1}{\sqrt{1-q_d}}(q^{1/2} \xi)_l,
\end{align*}
such that 
\begin{align*}
I_C(y,\xi) &= \int_\bbR \frac{dw d\hat{w}}{2 \pi} \frac{du d\hat{u}}{2 \pi} e^{i w \hat{w} + i u \hat{u}} \delta_{y,\text{sign}(u)} \prod_{l=1}^K  \int_{\bbR} \mathcal{D}z e^{- i \hat{w} \frac{z}{\sqrt{K}}} e^{- \frac{i \hat{u}}{\sqrt{K}} \text{sign} \left[ z +  \lambda_l(w,\xi)\right]} .
\end{align*}
For $1 \leq l \leq K$, one can rewrite the factorized integral in the last expression of $I_C(y,\xi)$ as:
\begin{align}
\label{eq:IC_decomposition}
I_C(y,\xi) &= \int_\bbR \frac{dw d\hat{w}}{2 \pi} \frac{du d\hat{u}}{2 \pi} e^{i w \hat{w} + i u \hat{u}} \delta_{y,\text{sign}(u)} \prod_{l=1}^K  J\left(\lambda_l(w,\xi),\hat{w},\hat{u}\right),
\\
J\left(\lambda_l(w,\xi),\hat{w},\hat{u}\right) &\equiv e^{-\frac{\lambda_l^2}{2} + i \lambda_l \frac{\hat{w}}{\sqrt{K}}} \int_{\bbR} \mathcal{D}z e^{z (\lambda_l - i \frac{\hat{w}}{\sqrt{K}})} e^{- \frac{i \hat{u}}{\sqrt{K}} \text{sign} \left[ z \right]}.
\end{align}
We abusively dropped the dependency of $\lambda_l$ on $(w,\xi)$. Note the following identity:
\begin{align}
F(\alpha,i\beta) \equiv \int_{\bbR}  \mathcal{D} z e^{\alpha z + i\beta \sign(z)} &= e^{\alpha^2/2} \left[\cos \beta + i \sin \beta \hat{H}(\alpha)\right],
\end{align}
with $\hat{H}(x) = \text{erf}(x/\sqrt{2})$. Using it in our previous expressions, we obtain:
\begin{align*}
J(\lambda_l,\hat{w},\hat{u}) &= e^{-\frac{1}{2K} \hat{w}^2} \left[\cos \left(\frac{\hat{u}}{\sqrt{K}}\right) - i\sin \left(\frac{\hat{u}}{\sqrt{K}}\right) \hat{H} \left(\lambda_l - i \frac{\hat{w}}{\sqrt{K}}\right) \right]. 
\end{align*}
Note that by our committee-symmetry assumption, we have $\lambda_l(w,\xi) = \lambda_{l,0}(\xi) + \frac{1}{\sqrt{K}} \lambda_{1}(w,\xi)$ with $\lambda_{l,0}$ and $\lambda_{1}$ typically of order $1$ when $K \to \infty$:
\begin{align}\label{eq:lambda_l0_dec}
\lambda_{l,0}(\xi) &\equiv \sqrt{\frac{q_d}{1-q_d}} \xi_l, \\
\label{eq:lambda_l1_dec}
\lambda_{1}(w,\xi) &\equiv \left[\sqrt{\frac{1-q_a-q_d}{1-q_d}} - 1\right]  w + \left[\sqrt{\frac{q_a+q_d}{1-q_d}} - \sqrt{\frac{q_d}{1-q_d}}\right] \frac{\textbf{1}_K^\intercal \xi}{\sqrt{K}}. 
\end{align}
Expanding $J(\lambda_l,\hat{w},\hat{u})$ as $K \to \infty$, we obtain using the known development of the error function:
\begin{align*}
J(\lambda_l,\hat{w},\hat{u}) &= e^{-\frac{1}{2K} \hat{w}^2} \left[1 - \frac{\hat{u}^2}{2 K} - i  \hat{H} \left[\lambda_{l,0}(\xi) \right]\frac{\hat{u}}{\sqrt{K}} - i \frac{\hat{u}\left[\lambda_1(w,\xi) - i \hat{w}\right]}{K} \sqrt{\frac{2}{\pi}} e^{-\frac{\lambda_{l,0}(\xi)^2}{2}} + \mathcal{O}(K^{-3/2}) \right].
\end{align*}
This yields (putting back the $(w,\xi)$ dependency): 
\begin{align}\label{eq:expansion_J}
\prod_{l=1}^K J\left[\lambda_l(w,\xi),\hat{w},\hat{u}) \right]&= e^{-\frac{1}{2} \hat{w}^2} \exp \left[- \frac{\hat{u}^2}{2} - i \hat{u} S_1 - i \sqrt{\frac{2}{\pi}}\hat{u} (\lambda_1-i\hat{w})\Gamma_0 + \frac{1}{2} \hat{u}^2 S_2 +  \mathcal{O}(K^{-1/2})\right],
\end{align}
in which we defined the following quantities, that only depend on $\xi$ (recall \eqref{eq:lambda_l0_dec})
\begin{align*}
w_\xi(\xi) &\equiv \frac{1}{\sqrt{K}} \sum_{l=1}^K \xi_l,   & \Gamma_0(\xi) &\equiv \frac{1}{K} \sum_{l=1}^K e^{-\frac{1}{2} \lambda_{l,0}(\xi)^2}, \\ \quad S_1(\xi) &\equiv \frac{1}{\sqrt{K}} \sum_{l=1}^K \hat{H}(\lambda_{l,0}(\xi)),     & S_2(\xi) &\equiv \frac{1}{K} \sum_{l=1}^K \hat{H}(\lambda_{l,0}(\xi))^2.
\end{align*}
A detailed calculation actually shows that the previous expansion of \eqref{eq:expansion_J} is valid up to $\mathcal{O}(K^{-1})$, and not only $\mathcal{O}(K^{-1/2})$. Recall also \eqref{eq:IC_decomposition}, in which one can now readily perform the integration over all variables $w,\hat{w},u,\hat{u}$ to obtain (dropping the $\xi$ dependency in $w_\xi, \Gamma_0, S_1, S_2$):
\begin{align}\label{eq:IC_expanded}
I_C(y,\xi) &= H \left[- y \frac{S_1+ \sqrt{\frac{2}{\pi}}  w_\xi \Gamma_0 \frac{\sqrt{q_d+q_a}-\sqrt{q_d}}{\sqrt{1-q_d}}}{\sqrt{1-S_2 -\frac{2}{\pi} \Gamma_0^2 \frac{q_a}{1-q_d}}}\right] + \mathcal{O}(K^{-1}),
\end{align}
in which $H(x) \equiv \int_x^\infty \mathcal{D}z = \frac{1}{2} \left[1-\text{erf}(x/\sqrt{2})\right]$. Note that all quantities $w_\xi, \Gamma_0, S_1, S_2$ only depend on $\xi$ via its empirical measure, which implies that the integration over $\xi \in \bbR^K$ will be tractable. We compute it in the following, using theoretical physics methods. We denote the quantity that appears in \eqref{eq:IC_expanded} as a function of $w_\xi, \Gamma_0, S_1, S_2$:
\begin{align*}
G(y,w_\xi,\Gamma_0,S_1,S_2) &\equiv  H \left[- y \frac{S_1+ \sqrt{\frac{2}{\pi}}  w_\xi \Gamma_0 \frac{\sqrt{q_d+q_a}-\sqrt{q_d}}{\sqrt{1-q_d}}}{\sqrt{1-S_2 -\frac{2}{\pi} \Gamma_0^2 \frac{q_a}{1-q_d}}}\right].
\end{align*}
Introducing once again delta functions and their Fourier transforms for $w_\xi, \Gamma_0, S_1, S_2$, we write, starting from \eqref{eq:IC_expanded}:
\begin{align}\label{eq:eq_I_expansion_step}
I_C &= \sum_{y = \pm 1}\int_{\mathbb{R}^K} \mathcal{D}\xi I_C (y,\xi) \log  I_C(y,\xi) \nonumber\\
&= \sum_{y = \pm 1} \int \frac{dw_\xi d\hat{w}_\xi}{2 \pi} \frac{d\Gamma_0 d\hat{\Gamma}_0}{2 \pi} \frac{dS_1 d\hat{S_1}}{2 \pi} \frac{dS_2 d\hat{S_2}}{2 \pi} e^{i w \hat{w} + i \Gamma_0 \hat{\Gamma}_0 + i S_1 \hat{S}_1 + i S_2 \hat{S}_2}  \, G(y,w_\xi,\Gamma_0,S_1,S_2) \nonumber \\
& \qquad\qquad   \times\log G(y,w_\xi,\Gamma_0,S_1,S_2) \left[\int_{\bbR^K} \mathcal{D} \xi e^{- i \hat{w} w_\xi(\xi) - i \hat{\Gamma}_0 \Gamma_0(\xi) - i \hat{S}_1 S_1(\xi) - i \hat{S}_2 S_2(\xi)}\right]  + \mathcal{O}(K^{-1}).
\end{align}
The integral over $\xi$ in  \eqref{eq:eq_I_expansion_step} can be computed in the limit $K \to \infty$: 
\begin{align*}
\Lambda &\equiv \int_{\bbR^K} \mathcal{D} \xi e^{- i \hat{w} w_\xi(\xi) - i \hat{\Gamma}_0 \Gamma_0(\xi) - i \hat{S}_1 S_1(\xi) - i \hat{S}_2 S_2(\xi)} \\
&= \left[\int_{\bbR} \mathcal{D} \xi \exp\left[- i \frac{\hat{w} \xi}{\sqrt{K}} - i \frac{\hat{\Gamma}_0 e^{- \frac{q_d}{2 (1-q_d)} \xi^2}}{K} - i \frac{\hat{S}_1 \hat{H}\left[\sqrt{\frac{q_d}{1-q_d}} \xi\right]}{\sqrt{K}}  - i \frac{\hat{S}_2 \hat{H}\left[\sqrt{\frac{q_d}{1-q_d}} \xi\right]^2}{K}\right]\right]^K
\end{align*}
The large $K$ expansion yields
\begin{align*}
\Lambda &= \exp\Bigg\{- \frac{1}{2} \hat{w}^2 - i \hat{\Gamma} \sqrt{1-q_d} - \hat{S_1}\hat{w}  \EE \left[\xi \hat{H}\left(\sqrt{\frac{q_d}{1-q_d}}\xi\right)\right]\nn
&\qquad\qquad\qquad\qquad\qquad- \left(\frac{1}{2} \hat{S}_1^2 + i \hat{S}_2\right) \EE \left[\hat{H}\left(\sqrt{\frac{q_d}{1-q_d}}\xi\right)^2\right ]\Bigg\} + \mathcal{O}(K^{-1})\,.
\end{align*}
The expectations are taken with respect to a real variable $\xi \sim \mathcal{N}(0,1)$. These expectations are known by properties of the error function:
\begin{align*}
\EE \left[\hat{H}\left(\sqrt{\frac{q_d}{1-q_d}}\xi\right)^2\right ] &= \frac{2}{\pi} \arcsin{q_d} \, , \\
\EE\left[\xi \hat{H}\left(\sqrt{\frac{q_d}{1-q_d}}\xi\right)\right ] &=  \sqrt{\frac{2 q_d}{\pi}}.
\end{align*}
One can now compute the integrals over the ``hat'' variables in \eqref{eq:eq_I_expansion_step}. Denote $\Gamma_0^f \equiv \sqrt{\frac{2(1-q_d)}{\pi}}$, and $S_2^f \equiv \frac{2}{\pi} \arcsin q_d $. This yields:
\begin{align}\label{eq:step_I}
I_C = \int_{\bbR^2} \mathcal{D}w  \mathcal{D}S_1 \, &G\left(y,w,\Gamma_0^f, \sqrt{\frac{2 (\arcsin q_d - q_d)}{\pi}} S_1 + w \sqrt{\frac{2q_d}{\pi}},S_2^f \right) \nonumber \\
&\quad \log G\left(y,w,\Gamma_0^f, \sqrt{\frac{2 (\arcsin q_d - q_d)}{\pi}} S_1 + w \sqrt{\frac{2q_d}{\pi}},S_2^f \right).
\end{align}
Note that 
\begin{align*}
G\left(y,w,\Gamma_0^f, \sqrt{\frac{2 (\arcsin q_d - q_d)}{\pi}} S_1 + w \sqrt{\frac{2q_d}{\pi}},S_2^f \right)  &=  H \left[- y \sqrt{\frac{2}{\pi}} \frac{\sqrt{\arcsin q_d - q_d} S_1+w\sqrt{q_d+q_a}}{\sqrt{1-\frac{2}{\pi}(q_a + \arcsin q_d)}}\right].
\end{align*}
Making the change of variable $S_1^{new} = S_1+w \frac{\sqrt{q_d+q_a}}{\sqrt{\arcsin q_d - q_d}}$ in \eqref{eq:step_I}, and defining $\gamma \equiv \frac{2}{\pi}(q_a + \arcsin q_d)$, one reaches:
\begin{align*}
I_C &= \sum_{y=\pm 1} \int_{\bbR} \mathcal{D}x H \left[y x \sqrt{\frac{\gamma}{1-\gamma}}\right]\log H \left[y x \sqrt{\frac{\gamma}{1-\gamma}}\right]  + \mathcal{O}(K^{-1}).
\end{align*}
The two values of $y$ contribute in the same way, which finally yields:
\begin{align}\label{eq:channel_large_K}
I_C &= 2 \int_{\bbR} \mathcal{D}x  H \left[x \sqrt{\frac{\gamma}{1-\gamma}}\right]\log H \left[x \sqrt{\frac{\gamma}{1-\gamma}}\right] + \mathcal{O}(K^{-1}).
\end{align}
Note that the parameter $\gamma$ is naturally bounded to the interval $[0,1]$ by the conditions $0 \leq q_d \leq 1$ and $0 \leq q_a + q_d \leq 1$.  

\subsection{The Gaussian prior}

The prior part $I_P$ of the free entropy of \eqref{eq:replica_solution} is very easy to evaluate in the Gaussian prior setting. We consider a prior with covariance matrix $Q_0 = I_{K}$ (we can simply rescale $q$ by $q/\sigma^2$ in the final expression for a finite variance $Q_0 = \sigma^2 I_{K}$ as we already described). Performing the Gaussian integration in $I_P$ in \eqref{eq:replica_solution} yields:
\begin{align}\label{eq:prior_large_K}
I_P &= \frac{K}{2} \hat{q}_d + \frac{1}{2} \hat{q}_a - \frac{K-1}{2} \log (1+\hat{q}_d) - \frac{1}{2} \log \left(1+ \hat{q}_d + \hat{q}_a \right).
\end{align}

\subsection{The fixed point equations}
\label{supp:large_K_SP}
From the definition of the free entropy \eqref{eq:replica_solution} and the expansions for $I_P$ and $I_C$ obtained in \eqref{eq:channel_large_K} and \eqref{eq:prior_large_K}, one obtains the fixed point equations after having extremized over $\hat{q}_d$ and $\hat{q}_a$ (recall that $\alpha \equiv \lim \frac{m}{n}$):
\begin{align}
\label{eq:fixed_point_formal_1}
\partial_{q_a} \left[I_G(q_d,q_a) + \alpha I_C(q_d,q_a)\right] &= 0, \\
\label{eq:fixed_point_formal_2}
\partial_{q_d} \left[I_G(q_d,q_a) + \alpha I_C(q_d,q_a)\right] &= 0,
\end{align}
with $I_G(q_d,q_a), I_C(q_d, q_a)$ defined as:
\begin{align}
  \label{eq:def_IG_largeK}
I_G(q_d,q_a) &\equiv \frac{1}{2} \left[q_a + K q_d\right] - \frac{K-1}{2} \log\left[\frac{1}{1-q_d}\right] - \frac{1}{2} \log \left[\frac{1}{1 - q_a - q_d}\right],\\
  \label{eq:def_IC_largeK}
I_C(q_d,q_a) &= \underbrace{2 \int_{\bbR} \mathcal{D}x  H \left[x \sqrt{\frac{\gamma}{1-\gamma}}\right]\log H \left[x \sqrt{\frac{\gamma}{1-\gamma}}\right]}_{\equiv J(\gamma)} + \mathcal{O}(K^{-1}),
\end{align}
and recall that $\gamma = \gamma(q_d,q_a) \equiv \frac{2}{\pi}(q_a + \arcsin q_d)$.
Notice that since $0 \leq q_a + q_d \leq 1$ and $0 \leq q_d \leq 1$, we have $0 \leq \gamma(q_d,q_a)  \leq \frac{2}{\pi}(1 - q_d + \arcsin q_d) \leq 1$.
The fixed point equations \eqref{eq:fixed_point_formal_1}, \eqref{eq:fixed_point_formal_2} have different behaviors depending on the scaling of $\alpha$ with the hidden layer size $K$. We detail these different behaviors in the following paragraphs. 

\subsubsection{Regime $\alpha = o_{K \to \infty}(K)$}

In this regime (which in particular contains the case in which $\alpha$ stays of order $1$ when $K \to \infty$), the fixed point equations \eqref{eq:fixed_point_formal_1}, \eqref{eq:fixed_point_formal_2} can be simplified as 
(recall the definition of $\gamma$ above):
\begin{equation}\label{eq:fixed_point_alpha_1}
	\begin{cases}
		q_d &= 0,  \\
		q_a &= \frac{4}{\pi} \alpha (1-q_a) J'(\gamma).
	\end{cases}
\end{equation}

\subsubsection{Regime $\alpha = \Theta_{K \to \infty}(K)$}

In this regime, we naturally define $\widetilde \alpha \equiv \alpha/K $, such that $\widetilde \alpha$ will remain of order $1$.
  From eq.~\eqref{eq:def_IG_largeK} we have: 
  \begin{align}\label{eq:derivative_IG}
    \begin{dcases}
      \frac{\partial I_G}{\partial q_d} &= - \frac{K q_d}{2 (1-q_d)} + \frac{1}{2 (1-q_d)} - \frac{1}{2(1-q_a-q_d)}, \\
      \frac{\partial I_G}{\partial q_a} &= - \frac{q_a+q_d}{2(1-q_a-q_d)}.
    \end{dcases}
  \end{align}
  Denoting the expansion of $I_C$ in eq.~\eqref{eq:def_IC_largeK} 
  as $I_C(q_d,q_a) = J(\gamma) + K^{-1} \Delta(q_d,q_a)$, with $\Delta(q_d, q_a) = \mcO(1)$ as $K \to \infty$, we have from 
  eqs.~\eqref{eq:fixed_point_formal_1} and \eqref{eq:derivative_IG}:
  \begin{align}\label{eq:fixed_point_chi_largeK}
      \frac{q_a+q_d}{2(1-q_a-q_d)} =\frac{2 \widetilde \alpha}{\pi} K J'(\gamma) + \widetilde \alpha \partial_{q_a} \Delta(q_d,q_a).
  \end{align}
  This implies that $q_a + q_d \sim 1 - \chi / K$, with $\chi$ remaining finite as $K \to \infty$.
  From eqs.~\eqref{eq:fixed_point_formal_2}, \eqref{eq:derivative_IG}, and \eqref{eq:fixed_point_chi_largeK}, we then have (at leading order as $K \to \infty$):
  \begin{align*}
    \begin{dcases}
      \frac{q_d}{2(1-q_d)} &= -\frac{\chi^{-1}}{2} + \frac{2\widetilde{\alpha}}{\pi\sqrt{1-q_d^2}} J'(\gamma), \\
      \chi^{-1} &= \frac{4 \widetilde{\alpha}}{\pi} J'(\gamma).
    \end{dcases}
  \end{align*}
We can finally simplify the equations above as:
\begin{equation}\label{eq:fixed_point_alpha_K}
\begin{cases}
	q_d = \frac{4 \widetilde{\alpha}}{\pi}(1-q_d)\left( \frac{1}{\sqrt{1-q_d^2}}-1\right) J'(\gamma),  \\
			\chi^{-1} = \frac{4 \widetilde{\alpha}}{\pi} J'(\gamma).
			\end{cases}
	\end{equation}
  Notice that here $\gamma = \gamma(q_d, q_a) \simeq \frac{2}{\pi}(1 - q_d + \arcsin(q_d))$ as $K \to \infty$.
The State Evolution (SE) computation of Figure \ref{fig:phaseDiagramKlarge} was performed by solving the fixed point equations \eqref{eq:fixed_point_alpha_1} and \eqref{eq:fixed_point_alpha_K} (depending on the regime of $\alpha$). 

\paragraph{The stability of the $q_d = 0$ solution:}
It is easy to show that \eqref{eq:fixed_point_alpha_K} always admit what we call a \emph{non-specialized solution}, i.e. a solution with $q_d = 0$. This solution stops to be optimal in terms of the free energy at a finite $\widetilde \alpha_{\rm
  spec} \simeq 7.65 $. However, one can show that this solution will remain \emph{linearly} stable for every $\widetilde \alpha$. Actually, it is linearly stable in the much broader regime $\alpha = o(K^2)$. Going back to the initial formulation of the fixed point equations \eqref{eq:fixed_point_formal_1},\eqref{eq:fixed_point_formal_2}, and adding the correct time indices to iterate them, one obtains:  
  \begin{align}
  \label{eq:fixed_point_iterate_qd}
  q_d^{t+1} &= \frac{F(q_d^t,q_a^t)}{1+F(q_d^t,q_a^t)}, \\
  q_a^{t+1} &= \frac{G(q_d^t,q_a^t)}{\left(1+F(q_d^t,q_a^t)\right) \left(1+F(q_d^t,q_a^t)  G(q_d^t,q_a^t)\right)},
  \end{align}
  with $F$ and $G$ defined as:
  \begin{align}
  \label{eq:def_F}
  F(q_d,q_a) &\equiv \frac{2 \alpha}{K-1}\left[\partial_{q_d} I_C-\partial_{q_a} I_C\right], \\
   G(q_d,q_a) &\equiv \frac{2 \alpha K}{K-1}\left[\partial_{q_a} I_C-\frac{1}{K}
  \partial_{q_d} I_C\right].
  \end{align}
  
We focus on the behavior of \eqref{eq:fixed_point_iterate_qd} around $q_d = 0$. Given our previous expansion of $I_C$ in the $K \to \infty$ limit, and \eqref{eq:def_F}, one easily sees that for $\alpha = o_{K \to \infty}(K^2)$,  $\frac{\partial F}{\partial q_d}|_{q_d = 0} \to_{K \to \infty} 0$, which means the $q_d = 0$ solution always remains linearly stable.

However, assume now that $\alpha = \Theta(K^2)$. Performing a similar calculation to the one shown in sec. \ref{sec:large_K_channel}, one can show the following expansion:
\begin{align*}
I_C(q_d,q_a) &= I_C^{(0)}(q_d,q_a) + \frac{1}{K} I_C^{(1)}(q_d,q_a) + \mathcal{O}\left(\frac{1}{K^2}\right).
\end{align*}
The term of $\frac{\partial F}{\partial q_d}|_{q_d = 0}$ arising from $I_C^{(1)}$ will thus have a possibly non-zero contribution in the $K \to \infty$ limit, as seen from \eqref{eq:def_F}. 

To summarize, the non-specialized solution always remains linearly stable in the large $K$ limit at least for $\alpha \ll K^2$. This implies that in this regime, Approximate Message Passing can not escape the non-specialized fixed point to find the specialized solution, as seen in Fig.~\ref{fig:phaseDiagramKlarge}. For $\alpha$ of order larger than $K^2$, one would have to explicitly compute $I_C^{(1)}$ in order to check that $\frac{\partial F}{\partial q_d}|_{q_d = 0} \neq 0$ to show that the non-specialized solution is indeed linearly unstable. This tedious calculation is left for future work.

\subsection{The generalization error at large \texorpdfstring{$K$}{K}}

Recall the definition of the generalization error in \eqref{eq:def_eg_Bayes}. From the remarks of section \ref{sec:generalization}, one can compute it
at large $K$ by applying the same techniques used to compute  the channel integral $I_C$ in sec.~\ref{sec:large_K_channel}. One obtains after a tedious, yet straightforward, calculation:
\begin{align}
\epsilon_g^{\rm Bayes} = \frac{1}{2}\epsilon_g^{\rm Gibbs}  = \frac{1}{\pi} \arccos \left[\frac{2}{\pi} \left(q_a + \arcsin q_d\right)\right] + \mathcal{O}(K^{-1}).
\end{align}
This expression is the one used in the computation of the generalization error in the left panel of Fig.~\ref{fig:phaseDiagramKlarge}.

\section{Linear networks show no specialization}\label{sec:linear_net}

An easy yet interesting case is a linear network with identical weights in the second layer and a final output function $\sigma : \bbR \to \bbR$, i.e a network in which $\varphi_{\rm out}(\bh) = \sigma \big(\frac{1}{\sqrt{K}} \sum_{l=1}^K h_l\big)$. For clarity, in this section, we decompose the channel as $P_{\rm out}(y|\varphi_{\rm out}(Z))$ for $Z \in \bbR^K$ instead of $P_{\rm out}(y|Z)$. We will compute the channel integral $I_C$ of the replica solution \eqref{eq:replica_solution}. For simplicity, we assume that $Q^0 = \mathds{1}_K$ the identity matrix (i.e.\ $w$ has identity covariance matrix under $P_0$). Note that \eqref{eq:replica_solution} gives $I_C$ as $I_C = \int_\bbR dy \int_{\bbR^K} \mathcal{D}\xi I_C(y,\xi) \log I_C(y,\xi)$. One can easily derive:
\begin{align*}
I_C(y,\xi) &=  e^{-\frac{1}{2} \xi^\intercal  (\mathds{1}_K - q)^{-1} q \xi} \int_{\bbR^2} \frac{du d\hat{u}}{2 \pi} e^{i u \hat{u}} P_{\rm out}(y|\sigma(u)) \nonumber\\
&\qquad\qquad\qquad\times\int_{\bbR^K} \frac{d Z}{\sqrt{(2 \pi)^K \det (\mathds{1}_K - q)}} e^{-\frac{1}{2} Z^\intercal  (\mathds{1}_K - q)^{-1} Z + Z^\intercal  X(\hat{u},xi)},
\end{align*}
in which we denoted $X(\hat{u},xi) \triangleq (\mathds{1}_K - q)^{-1} q^{1/2} \xi - \frac{i \hat{u}}{\sqrt{K}} \textbf{1}_K$, with the unit vector $\textbf{1}_K = (1)_{l=1}^K$. The inner integration over $Z$ can be done, as well as the integration over $\hat{u}$:
\begin{align*}
I_C(y,\xi) &=\frac{1}{\sqrt{1 - \frac{1}{K} \textbf{1}_K^\intercal  q \textbf{1}_K}}\int_{\bbR} \frac{du }{\sqrt{2 \pi}} P_{\rm out}(y|\sigma(u)) \exp\left[- \frac{\left(u - \frac{1}{\sqrt{K}} \textbf{1}_R^\intercal  q^{1/2} \xi \right)^2}{2  \left(1 - \frac{1}{K} \textbf{1}_K^\intercal  q \textbf{1}_K\right)} \right].
\end{align*}

So we can formally write the total dependency of $I_C(y,\xi)$ on $\xi$ and on $q$ as $$I_C(y,\xi) = I_C\left(y,\frac{1}{\sqrt{K}} \textbf{1}_K^\intercal  q^{1/2} \xi,\frac{1}{K} \textbf{1}_K^\intercal  q \textbf{1}_K \right).$$ Note that we have the following identity, for any fixed vector $x \in \bbR^K$ and smooth real function $F$:
\begin{align}
\int_{\bbR^K} \mathcal{D} \xi F(x^\intercal  \xi) = \frac{1}{\sqrt{2 \pi x^\intercal  x}} \int_\bbR d u F(u) e^{- \frac{u^2}{2 x^\intercal  x}}.
\end{align} 
In the end, if we denote $\Gamma(q) \triangleq \frac{1}{K} \textbf{1}_K^\intercal  q \textbf{1}_K$, we have:
\begin{align}
I_C &= \int_\bbR d y \frac{1}{\sqrt{2 \pi \Gamma(q)}} \int_\bbR d ve^{- \frac{v^2}{2 \Gamma(q)}}  I_C(v,y) \log I_C(v,y) , \\
I_C(v,y) &\equiv \frac{1}{\sqrt{ 2 \pi (1-\Gamma(q))}} \int_\bbR du \, P_{\rm out}(y|\sigma(u)) \exp\left[- \frac{1}{2  \left(1 - \Gamma(q)\right)} \left(u -v\right)^2\right].
\end{align}
Note that by hypothesis, both $q$ and $\mathds{1}_K-q$ are positive matrices, so $0 \leq \Gamma(q) \leq 1$. As these equations show, $I_C$ only depends on $\Gamma(q) = K^{-1} \sum_{l,l'} q_{ll'}$. From this one easily sees that extremizing over $q$ implies that the optimal $\hat{q}$ satisfies $\hat{q}_{l l'} = \hat{q}/K$ for some real $\hat{q}$. Subsequently, all $q_{ll'}$ are also equal to a single value, that we can denote $\frac{q}{K}$. This shows that this network never exhibits a specialized solution.

\section{Update functions and AMP derivation}
\label{sec:AMP}

AMP can be seen as Taylor expansion of the loopy belief-propagation (BP)
approach \cite{mezard1987spin,mezard2009information,wainwright2008graphical},
similar to the so-called Thouless-Anderson-Palmer equation in spin
glass theory \cite{thouless1977solution}. While the behavior of AMP
can be rigorously studied
\cite{bayati2011dynamics,javanmard2013state,bayati2015universality},
it is useful and instructive to see how the derivation can be
performed in the framework of belief-propagation and the cavity
method, as was pioneered in \cite{mezard1989space,Kaba} for the single
layer problem. The derivation uses the Generalized AMP notations of
\cite{rangan2011generalized} and follows closely the one of \cite{REVIEWFLOANDLENKA}.

\subsection{Definition of the update functions}
\label{supp:update_functions}
Let's consider the distributions probabilities $Q_{\rm out}$ and $Q_0$, closely related to the inference problems of eq.~\eqref{aux-model-1} and eq.~\eqref{aux-model-2}:
\begin{align*}
	Q_{\rm out}(z;\omega,y,V) &\equiv \frac{1}{\mathcal{Z}_{P_{\rm out}}}e^{-\frac{1}{2}(z-\omega)^\intercal V^{-1}(z-\omega)} P_{\rm out}(y|z); \hspace{0.5cm} Q_0 (W; \Sigma, T) \equiv \frac{1}{\mathcal{Z}_{P_0}} P_0(W) e^{ -\frac{1}{2} W^\intercal \Sigma^{-1}W + T^\intercal \Sigma^{-1}W }\,.
\end{align*}
 We define the update functions $g_{\rm out}$, $\partial_{\omega} g_{\rm out}$, $f_w$ and $f_c$, which will be useful later in the algorithm:
\begin{align*}	
		g_{\rm out}(\omega,y,V) &\equiv \partial_\omega \log( \mathcal{Z}_{P_{\rm out}} ) =	V^{-1} \mathbb{E}_{Q_{\rm out}} \left[ z-\omega \right] \,, \vspace{0.5cm} \\
		\partial_{\omega} g_{\rm out}(\omega,y,V) &= 	V^{-1} \mathbb{E}_{Q_{\rm out}} \left[ (z-\omega)(z-\omega)^\intercal \right] - V^{-1} -  g_{\rm out}g_{\rm out}^\intercal \,, \vspace{0.5cm} \\
		f_w( \Sigma, T) &\equiv  \partial_{\Sigma^{-1}T} \log \mathcal{Z}_{P_{\rm 0}} = \mathbb{E}_{Q_0}[W]\,, \vspace{0.5cm} \\
		f_c ( \Sigma, T) & \equiv \partial_{\Sigma^{-1}T} f_w =  \mathbb{E}_{Q_0} [W W^\intercal ]  - f_w  f_w^\intercal \,.
\end{align*}
Note that $g_{\rm out}$ is the mean of $V^{-1}(z-\omega)$ with respect tor $Q_{\rm out}$ and $f_w$ the mean of $Q_0$.

\subsection{Derivation of the Approximate Message Passing algorithm}
\label{supp:AMP_derivation}

\subsubsection{Relaxed BP equations}

Let us consider a set of messages $\{m_{i\to \mu},\tilde{m}_{\mu \to i}\}_{i=1..n,\mu=1..m}$ on the bipartite factor graph corresponding to our problem  Fig.~\ref{fig_factG}. These messages correspond to the marginal probabilities of $W_i$ if we remove the edges $i \to \mu$ or $\mu \to i$. The belief propagation (BP) equations (or sum-product equations) can be formulated as the following \cite{mezard2009information,wainwright2008graphical}, where $W_i=(w_{il})_{l=1..K} \in \bbR^K$:
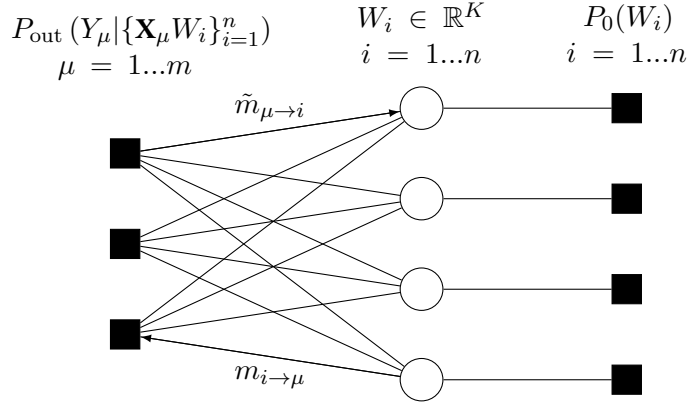
\begin{figure}[htb!]
\centering
	\begin{tikzpicture}[scale=1.2]
    \tikzstyle{annot} = [text width=3cm, text centered]
       \tikzstyle{factor}=[rectangle,minimum size=11pt,draw=black, fill opacity=1.,fill=black]
    \tikzstyle{latent}=[circle,minimum size=16pt,draw=black, fill opacity=1.,fill=white]     	
	\def\Nv{4}
	\def\Nu{3}
	\def\Nf{12}        
         \foreach \name / \y in {1,...,\Nu}
        \node[factor] (F2-\name) at (0,-0.5-\y) {};
        
        \foreach \name / \y in {1,...,\Nv}
        \node[latent] (W-\name) at (3.25,-\y) {};
         \foreach \name / \y in {1,...,\Nv}
        \node[factor] (P-\name) at (5.5,-\y) {};

\path[-latex] (F2-1) edge node[above]{$\td{m}_{\mu \to i}$} (W-1);
\path[-latex] (W-4) edge node[below]{$m_{i \to \mu}$} (F2-3);        	
    \foreach \i in {1,...,\Nu}
    	 \foreach \j in {1,...,\Nv}
        	\edge[-]{F2-\i}{W-\j};  
    	\foreach \j in {1,...,\Nv}
        	\edge[-]{P-\j}{W-\j};     	

\node[annot,above of=F2-1, node distance=1.4cm] {$P_{\rm out}\(Y_{\mu}| \{\tbf{X}_{\mu} W_i\}_{i=1}^n  \)$ \\ $\mu =1...m$};
\node[annot,above of=W-1, node distance=1cm] (hl) {$W_i \in \bbR^K$ \\ $i=1...n$};
\node[annot,above of=P-1, node distance=1cm] (hl) {$P_0(W_i)$ \\ $i=1...n$};
\end{tikzpicture}
	 \caption{Factor graph representation of the committee machine (for $n=4$ and $m=3$). The variable (circle) $W_i \in \bbR^{K}$ needs to satisfy a prior constraint (square) $P_0$ and a constraint accounting for the fully connected layer, that correlates all the variables together.}
\label{fig_factG}
\end{figure}

\begin{equation}
	\begin{cases}
		m_{i\to \mu}^{t+1} (W_i) = \displaystyle \frac{1}{\mathcal{Z}_{i\to \mu}} P_0 (W_i) \prod\limits_{k \neq \mu}^m \tilde{m}_{\nu \to i}^t (W_i)\,, \vspace{0.1cm}\\
		\tilde{m}_{\mu \to i}^t (W_i) =  \displaystyle \frac{1}{\mathcal{Z}_{\mu \to i}} \int \prod\limits_{j\neq i}^n dW_j P_{\rm out}\left (Y_{\mu} | \displaystyle  \frac{1}{\sqrt{n}} \sum_{j=1}^n  X_{\mu j}W_{j} \right)  m_{j \to \mu}^t (W_j ) \,.
	\end{cases}
	\label{supp:BPEquations}
\end{equation}
The term inside $P_{\rm out}$ can be decouple using its $K$-dimensional Fourier transform
\begin{equation*}
P_{\rm out}\left (Y_{\mu} | \displaystyle \frac{1}{\sqrt{n}} \sum_{j=1}^n  X_{\mu j}W_{j} \right) = \frac{1}{(2\pi)^{K/2}}
\int_{\bbR^K} d \xi \exp\left( i \xi^\intercal \left( \displaystyle \frac{1}{\sqrt{n}} \sum_{j=1}^n  X_{\mu j}W_{j}\right) \hat{P}_{\rm out}(Y_{\mu} , \xi )    \right) \,.	
\end{equation*}
Injecting this representation in the BP equations, \eqref{supp:BPEquations} becomes
\begin{align*}
&\tilde{m}_{\mu \to i}^t (W_i ) = 
\frac{1}{(2\pi)^{K/2}\mathcal{Z}_{\mu\to i} }
\int_{\bbR^K} d\xi \hat{P}_{\rm out}(Y_{\mu} , \xi)  
\exp\left(i  \xi^\intercal \frac{1}{\sqrt{n}} X_{\mu i} W_i \right)\nonumber\\
 &\qquad\qquad\qquad\qquad\times\prod\limits_{j\neq i}^n  \underbrace{\int_{\bbR^K} dW_j 
		 m_{j \to \mu}^t (W_j ) \exp\left( i  \xi^\intercal \frac{1}{\sqrt{n}} X_{\mu j} W_j ) \right)}_{\equiv I_j}
		 \label{mtilde} \,,
\end{align*}
and we define the mean and variance of the messages
\begin{equation}
	\begin{cases}
		\hat{W}_{j\to \mu}^t \equiv  \displaystyle \int_{\bbR^K} dW_j
		 m_{j\to \mu}^t (W_j ) W_j  \,, \vspace{0.2cm} \\
		 \hat{C}_{j\to \mu}^t \equiv \displaystyle  \int_{\bbR^K} dW_j
		 m_{j\to \mu}^t (W_j ) W_j W_j^\intercal - \hat{W}_{j\to \mu}^t(\hat{W}_{j\to \mu}^t)^\intercal \,.
	\end{cases}
\label{What_Chat}
\end{equation}

In the limit $n\to \infty$ the term $I_j$ can be easily expanded and expressed using $\hat{W}$ and $\hat{C}$
\begin{align*}
 I_j &= \int_{{\bbR}^K} dW_j
		 m_{j \to \mu}^t (W_j ) \exp\left( i  \xi^\intercal \frac{X_{\mu j}}{\sqrt{n}} W_j ) \right) \simeq  \exp\left( i \frac{X_{\mu j}}{\sqrt{n}}\xi^\intercal  \hat{W}_{j\to \mu}^t -  \frac{1}{2}\frac{X_{\mu j}^2}{n} \xi^\intercal  \hat{C}_{j\to \mu}^t  \,, \xi\right) \,,
\end{align*} 
and finally using the inverse Fourier transform, we obtain
\begin{align*}
&\tilde{m}_{\mu \to i}^t (W_i ) \simeq 
\frac{1}{(2\pi)^K\mathcal{Z}_{\mu \to i}}
\int_{\bbR^K} dz P_{\rm out}(Y_\mu , z ) 
\int_{\bbR^K} d \xi  
e^{-i \xi^\intercal z}
e^{ i X_{\mu i} \xi^\intercal W_i} \\
&\qquad\qquad\qquad\qquad\qquad\times\prod\limits_{j\neq i}^n  \exp\left( i \frac{X_{\mu j}}{\sqrt{n}} \xi^\intercal \hat{W}_{j\to \mu}^t -  \frac{1}{2}\frac{X_{\mu j}^2}{n} \xi^\intercal \hat{C}_{j\to \mu}^t \xi\right) \\
&= \frac{1}{(2\pi)^K\mathcal{Z}_{\mu\to i}}
\int_{\bbR^K} dz
P_{\rm out}(Y_\mu , z )
\int_{\bbR^K} d \xi  
e^{-i \xi^\intercal z}
e^{ i X_{\mu i} \xi^\intercal W_i} e^{i \xi^\intercal \sum\limits_{j\neq i}^n \frac{X_{\mu j}}{\sqrt{n}} \hat{W}_{j\to \mu}^t } e^{-  \frac{1}{2} \xi^\intercal \sum\limits_{j\neq i}^n \frac{X_{\mu j}^2}{n}   \hat{C}_{j \to \mu }^t\xi} \\
&= \frac{1}{(2\pi)^K\mathcal{Z}_{\mu\to i}} \int_{\bbR^K} dzP_{\rm out}(Y_\mu , z)\sqrt{\frac{(2\pi)^K}{\det(V_{i \mu }^t)}} \underbrace{e^{-\frac{1}{2} \left( z -\frac{X_{\mu i}}{\sqrt{n}} W_i -\omega_{i \mu}^t \right)^\intercal (V_{i \mu }^t)^{-1} \left( z -\frac{X_{\mu i}}{\sqrt{n}} W_i -\omega_{i \mu}^t \right)}}_{\equiv H_{i\mu}} \,,
\end{align*}
where we defined the mean and variance, depending on the node $i$
\begin{equation}
	\omega_{i \mu}^t \equiv \displaystyle  \frac{1}{\sqrt{n}} \sum\limits_{j\neq i}^n X_{\mu j}  \hat{W}_{j\to \mu}^t \,, \hspace{0.5cm} 
		V_{i\mu}^t \equiv \displaystyle  \frac{1}{n} \sum\limits_{j\neq i}^n X_{\mu j}^2  \hat{C}_{j \to \mu}^t \,.
		\label{appendix:amp:omega_V}
\end{equation}

Again, in the limit $n\to \infty$, the term $H_{i\mu}$ can be expanded:
\begin{align*}
	H_{i\mu} &\simeq   e^{-\frac{1}{2} \left( z -\omega_{i \mu}^t \right)^\intercal (V_{i\mu}^t)^{-1} \left( z -\omega_{i \mu}^t \right) } 
	\left( 1 + \frac{X_{\mu i}}{\sqrt{n}} W_i^\intercal (V_{i\mu}^t)^{-1} (z -\omega_{i \mu}^t) -\frac{1}{2}\frac{X_{\mu i}^2}{n} W_i^\intercal (V_{i\mu}^t)^{-1} W_i \right.\\
& \left. + \frac{1}{2} \frac{X_{\mu i}^2}{n} W_i^\intercal (V_{i\mu}^t)^{-1} (z -\omega_{i \mu}^t)(z -\omega_{i \mu}^t)^\intercal  (V_{i\mu}^t)^{-1} W_i \right).
\end{align*}
Gathering all pieces, the message $\tilde{m}_{\mu \to i}$ can be expressed using definitions of $g_{\rm out}$ and $\partial_{\omega} g_{\rm out}$
\begin{align*}
\tilde{m}_{\mu  \to i}^t (W_i ) &\sim \frac{1}{\mathcal{Z}_{\mu \to i}} \left \{1 +  \frac{X_{\mu i}}{\sqrt{n}} W_{i}^\intercal  g_{\rm out} (\omega_{i\mu}^t, Y_{\mu}, V_{i\mu}^t) +
\frac{1}{2} \frac{X_{\mu i}^2}{n} W_{i}^\intercal g_{\rm out}  g_{\rm out}^\intercal (\omega_{i\mu}^t, Y_{\mu}, V_{i\mu}^t) W_{i} +\right. \\
& \left. \frac{1}{2} \frac{X_{\mu i}^2}{n} W_{i}^\intercal  \partial_\omega g_{\rm out}(\omega_{i\mu}^t, Y_{\mu}, V_{i\mu}^t)  W_{i}
\right\}\\
&= \frac{1}{\mathcal{Z}_{\mu \to i}} \left\{ 1 + W_{i}^\intercal  B_{\mu \to i}^t +\frac{1}{2}  W_{i}^\intercal  B_{\mu \to i}^t (B_{\mu \to i}^t)^\intercal  (W_{i}) -\frac{1}{2} W_{i}^\intercal  A_{\mu \to i}^t W_{i} \right\} \\
&=\sqrt{\frac{\det(A_{\mu \to i}^t)}{(2\pi)^K}} \exp\left(-\frac{1}{2}\left(W_{i}^\intercal  - (A_{\mu \to i}^t)^{-1}B_{\mu \to i}^t \right)^\intercal  A_{\mu \to i}^t\left(W_{i}^\intercal  - (A_{\mu \to i}^t)^{-1}B_{\mu \to i}^t \right) \right) \,,
\label{supp:mtilde}
\end{align*}
with the following definitions of $A_{\mu \to i}$ and $B_{\mu \to i}$:
\begin{equation}
\label{appendix:amp:A_B}
		B_{\mu \to i}^t \equiv  \frac{X_{\mu i}}{\sqrt{n}} g_{\rm out} (\omega_{i\mu}^t, Y_{\mu}, V_{i\mu}^t), \hspace{0.5cm}
		A_{\mu \to i}^t \equiv - \frac{X_{\mu i}^2}{n}  \partial_\omega g_{\rm out}(\omega_{i\mu}^t, Y_{\mu}, V_{i\mu}^t)
\end{equation}
Using the set of BP equations \eqref{supp:BPEquations}, we can finally close the set of equations only over $\{ m_{i\to \mu}\}_{i\mu}$:
\begin{equation*}
	 m_{i\to \mu}^{t+1} (W_i) = \frac{1}{\mathcal{Z}_{i\to \mu}} P_0 (W_i) \prod\limits_{\nu \neq \mu}^m \sqrt{\frac{\det(A_{\nu \to i}^t)}{(2\pi)^K}} e^{-\frac{1}{2}\left(W_{i} - (A_{\nu \to i}^t)^{-1}B_{\nu \to i}^t \right)^\intercal  A_{\nu \to i}^t\left(W_{i} - (A_{\nu \to i}^t)^{-1}B_{\nu \to i}^t \right) }.
	 \label{mil}
\end{equation*}

In the end, computing the mean and variance of the product of Gaussians, the messages are updated using $f_w$ and $f_c$:
\begin{equation}
	\begin{cases}
		\hat{W}_{i\to \mu}^{t+1} = f_w( \Sigma_{\mu \to i}^t , T_{\mu \to i}^t )\,,  \vspace{0.3cm} \\
		\hat{C}_{i \to \mu}^{t+1} = f_c( \Sigma_{\mu \to i}^t , T_{\mu \to i}^t )\,, \vspace{0.3cm} \\
	\end{cases}
	\hspace{1cm}
	\begin{cases}
	\Sigma_{\mu \to i}^t \equiv \left( \sum\limits_{\nu \ne \mu}^m  A_{\nu \to i}^t \right )^{-1}\,,  \vspace{0.1cm} \\
	T_{\mu \to i}^t \equiv \Sigma_{\mu \to i}^t  \left( \sum\limits_{\nu \ne \mu}^m  B_{\nu \to i}^t \right) \,.
	\end{cases} 
	\label{appendix:amp:Sigma_T}
\end{equation}

\paragraph{Summary of the Relaxed BP set of equations:\\}
In the end, using eq~.(\ref{What_Chat},\ref{appendix:amp:omega_V},\ref{appendix:amp:A_B}, \ref{appendix:amp:Sigma_T}), relaxed BP equations can be written as the following set of equations:\\
\begin{minipage}[c]{.46\linewidth}
\begin{align*}
	\begin{cases}
		\omega_{i\mu}^t &= \sum\limits_{j\neq i}^n \frac{X_{\mu j}}{\sqrt{n}}   \hat{W}_{j\to \mu}^t\vspace{0.3cm} \\
		V_{i\mu}^t &= \sum\limits_{j\neq i}^n \frac{X_{\mu j}^2}{n} \hat{C}_{j\to \mu}^t \vspace{0.3cm} \\
		B_{\mu \to i}^t &=  \frac{X_{\mu i}}{\sqrt{n}} g_{\rm out} (\omega_{i\mu}^t, Y_{\mu}, V_{i\mu}^t)\vspace{0.3cm}  \\
		A_{\mu \to i}^t &= - \frac{X_{\mu i}^2}{n}  \partial_\omega g_{\rm out}(\omega_{i\mu}^t, Y_{\mu}, V_{i\mu}^t)\vspace{0.3cm} \\
	\end{cases}
\end{align*}
      
\end{minipage} \hfill
\begin{minipage}[c]{.46\linewidth}
\begin{align}
	\begin{cases}
		\Sigma_{\mu \to i}^t &= \left( \sum\limits_{\nu \ne \mu}^m  A_{\nu \to i}^t \right )^{-1} \vspace{0.3cm} \\
	T_{\mu \to i}^t &= \Sigma_{\mu \to i}^t  \left( \sum\limits_{\nu \ne \mu}^m  B_{\nu \to i}^t \right)\vspace{0.3cm} \\
	\hat{W}_{i\to \mu}^{t+1} &= f_w( \Sigma_{\mu \to i}^t , T_{\mu \to i}^t ) \vspace{0.3cm} \\
		\hat{C}_{i \to \mu}^{t+1} &= f_c( \Sigma_{\mu \to i}^t , T_{\mu \to i}^t ) \vspace{0.3cm} \\
	\end{cases}
	\label{supp:relaxed_BP}
\end{align}
\end{minipage}

\subsubsection{Approximate Message Passing algorithm}
The relaxed BP algorithm uses $\mathcal{O}(n^2)$ messages. However, all the messages depend weakly on the target node. On a tree, the missing message is negligible, that allows us to expand the previous relaxed BP equations \eqref{supp:relaxed_BP} to make appear the Onsager term at a previous time step, and reduce the number of messages to $\mathcal{O}(n)$. We define the following estimates and parameters based on the complete set of messages:

\begin{minipage}[c]{.46\linewidth}
\begin{equation*}
	\begin{cases}
		\omega_{\mu}^t \equiv \sum\limits_{j = 1}^n \frac{X_{\mu j}}{\sqrt{n}}   \hat{W}_{j\to \mu}^t\vspace{0.1cm} \\
		V_{\mu}^t \equiv  \sum\limits_{j=1}^n \frac{X_{\mu j}^2}{n}   \hat{C}_{j\to \mu}^t\vspace{0.1cm} \\
	\end{cases}
\end{equation*}
\end{minipage} \hfill
\begin{minipage}[c]{.46\linewidth}
\begin{equation}
	\begin{cases}
		\Sigma_{i  }^t \equiv  \left( \sum\limits_{\nu =1}^m  A_{\nu \to i}^t \right )^{-1} \vspace{0.1cm} \\
	T_{i }^t \equiv  \Sigma_{i}^t  \left( \sum\limits_{\nu =1}^m  B_{\nu \to i}^t \right)\vspace{0.1cm} \\
	\end{cases}
\end{equation}
\end{minipage}
Let's now expand the previous messages of eq.~\eqref{supp:relaxed_BP}, making appear these new target-independent messages:
\paragraph{$\bullet$ $\Sigma_{\mu \to i}^t$}
\begin{align*}
	\Sigma_{\mu \to i}^t &= \left( \sum\limits_{\nu \ne \mu}^m  A_{\nu \to i}^t \right )^{-1} = \left( \sum\limits_{\nu =1 }^m  A_{\nu \to i}^t - A_{\mu \to i}^t \right )^{-1} 
	=  \left( \sum\limits_{\nu =1 }^m  A_{\nu \to i}^t \left( I_{K\times K} -  \left( \sum\limits_{\nu =1 }^m  A_{\nu \to i}^t \right)^{-1} A_{\mu \to i}^t \right) \right )^{-1} \\
	&=  \left( I_{K\times K} -  \left( \sum\limits_{\nu =1 }^m  A_{\nu \to i}^t \right)^{-1} A_{\mu \to i}^t \right)^{-1}  \left( \sum\limits_{\nu =1 }^m  A_{\nu \to i}^t  \right )^{-1} = \underbrace{\left( I_{K\times K} -  \Sigma_{i  }^t A_{\mu \to i}^t \right)^{-1}}_{\simeq I_{K\times K} + \Sigma_{i  }^t A_{\mu \to i}^t + {\cal O}(n^{-1})}  \Sigma_{i  }^t \simeq  \Sigma_{i  }^t + {\cal O}\(\frac{1}{n} \)
\end{align*}

\paragraph{$\bullet$ $T_{\mu \to i}^t$}
\begin{align*}
	T_{\mu \to i}^t &= \Sigma_{\mu \to i}^t  \left( \sum\limits_{\nu \ne \mu}^m  B_{\nu \to i}^t \right) 
	= \left( \Sigma_{i  }^t + {\cal O}\left(\frac{1}{n}\right) \right) \left( \sum\limits_{\nu =1}^m  B_{\nu \to i}^t -  B_{\mu \to i}^t \right) \\
	&= T_{i}^t - \Sigma_{i  }^t B_{\mu \to i}^t + {\cal O}\left(\frac{1}{n}\right)
\end{align*}

\paragraph{$\bullet$ $\hat{W}_{i\to \mu}^{t+1}$}
\begin{align*}
	\hat{W}_{i\to \mu}^{t+1} &= f_w( \Sigma_{\mu \to i}^t , T_{\mu \to i}^t ) = f_w\left( \Sigma_{i  }^t  , T_{i}^t - \Sigma_{i  }^t B_{\mu \to i}^t  \right) + {\cal O}\left(\frac{1}{n}\right)\\
	&\simeq f_w\left( \Sigma_{i  }^t  , T_{i}^t  \right) -      \left.\frac{d f_w}{d T}  \right|_{\left(\Sigma_{i  }^t ,T_{i}^t\right)}\Sigma_{i  }^t B_{\mu \to i}^t \\
	&= \underbrace{f_w\left( \Sigma_{i  }^t  , T_{i}^t  \right)}_{=\hat{W}_{i}^{t+1}} - \left(\Sigma_{i  }^t\right)^{-1} \underbrace{f_c \left(\Sigma_{i  }^t, T_{i}^t \right) \Sigma_{i  }^t}_{=\hat{C}_{i}^{t+1}} \underbrace{B_{\mu \to i}^t}_{\simeq \frac{X_{\mu i}}{\sqrt{n}} g_{\rm out} (\omega_{\mu}^t, Y_{\mu}, V_{\mu}^t)}\\
	&= \hat{W}_{i}^{t+1} - \frac{X_{\mu i}}{\sqrt{n}} \left(\Sigma_{i  }^t\right)^{-1}\hat{C}_{i}^{t+1}\Sigma_{i  }^tg_{\rm out} (\omega_{\mu}^t, Y_{\mu}, V_{\mu}^t) + {\cal O}\left( \frac{1}{n} \right)
\end{align*}

\paragraph{$\bullet$ $\hat{C}_{i\to \mu}^{t+1}$\\}
Let's denote for convenience, $\mathcal{E} = \left(\Sigma_{i  }^t\right)^{-1}\hat{C}_{i}^{t+1}\Sigma_{i  }^tg_{\rm out} (\omega_{\mu}^t, Y_{\mu}, V_{\mu}^t)$. Then

\begin{align*}
\hat{C}_{i\to \mu}^{t+1} &=\mathbb{E}_{Q_0} \[ \hat{W}_{i\to \mu}^t (\hat{W}_{i\to \mu}^t)^\intercal  \] -\mathbb{E}_{Q_0} \[\hat{W}_{i\to \mu}^t  \] \mathbb{E}_{Q_0} \[ \hat{W}_{i\to \mu}^t \]^\intercal \\
&= \mathbb{E}_{Q_0} \[ \left(\hat{W}_{i}^t - \frac{X_{\mu i}}{\sqrt{n}} \mathcal{E}\right)\left(\hat{W}_{i}^t - \frac{X_{\mu i}}{\sqrt{n}} \mathcal{E}\right)^\intercal  \] - \mathbb{E}_{Q_0} \[ \hat{W}_{i}^t - \frac{X_{\mu i}}{\sqrt{n}} \mathcal{E}\] \mathbb{E}_{Q_0} \[ \hat{W}_{i}^t - \frac{X_{\mu i}}{\sqrt{n}} \mathcal{E}\]^\intercal \\
&=  \mathbb{E}_{Q_0} \[ \hat{W}_{i}^t (\hat{W}_{i}^t)^\intercal  \] - \mathbb{E}_{Q_0} \[ \hat{W}_{i}^t \]\mathbb{E}_{Q_0} \[ \hat{W}_{i}^t \]^\intercal  + {\cal O}\left( \frac{1}{\sqrt{n}} \right)  = \hat{C}_{i}^{t+1} + {\cal O}\left( \frac{1}{\sqrt{n}} \right) 
 \end{align*} 
 
\paragraph{$\bullet$ $g_{\rm out} (\omega_{i\mu}^t, Y_{\mu}, V_{i\mu}^t)$}
\begin{align*}
	g_{\rm out} (\omega_{i\mu}^t, Y_{\mu}, V_{i\mu}^t) &= g_{\rm out} \left(\omega_{\mu}^t - \frac{X_{\mu i}}{\sqrt{n}}   \hat{W}_{i\to \mu}^t, Y_{\mu}, V_{\mu}^t - \frac{X_{\mu i}^2}{n}   \hat{C}_{i \to l}^t \right)\\
	&= g_{\rm out} \left(\omega_{\mu}^t , Y_{\mu}, V_{\mu}^t \right) - \frac{X_{\mu i}}{\sqrt{n}} \frac{\partial g_{\rm out} }{ \partial \omega}\left(\omega_{\mu}^t , Y_{\mu}, V_{\mu}^t \right)   \underbrace{\hat{W}_{i\to \mu}^t}_{=\hat{W}_{i}^t + {\cal O}\left( \frac{1}{\sqrt{n}}\right)} + {\cal O}\left( \frac{1}{n}\right)\\
	&= g_{\rm out} \left(\omega_{\mu}^t , Y_{\mu}, V_{\mu}^t \right)-\frac{X_{\mu i}}{\sqrt{n}}\frac{\partial g_{\rm out} }{ \partial \omega}\left(\omega_{\mu}^t , Y_{\mu}, V_{\mu}^t \right)   \hat{W}_{i}^t +{\cal O}\left( \frac{1}{n}\right)
\end{align*}

\paragraph{$\bullet$ $V_{\mu}^t$}
\begin{align*}
	V_{\mu}^t &= \sum\limits_{i=1}^n\frac{X_{\mu i}^2}{n}   \hat{C}_{i \to l}^t = \sum\limits_{i=1}^n \frac{X_{\mu i}^2}{n}   \hat{C}_{i }^t + {\cal O}\left( \frac{1}{n^{3/2}}\right)
\end{align*}

\paragraph{$\bullet$ $\omega_{\mu}^t$}
\begin{align*}
	\omega_{\mu}^t &= \sum\limits_{i = 1}^n \frac{X_{\mu i}}{\sqrt{n}}   \hat{W}_{i\to \mu}^t = \sum\limits_{i = 1}^n \frac{X_{\mu i}}{\sqrt{n}} \left(\hat{W}_{i}^t - X_{\mu i}\left(\Sigma_{i  }^{t-1}\right)^{-1}\hat{C}_{i}^t\Sigma_{i  }^{t-1}g_{\rm out} (\omega_{\mu}^{t-1}, Y_{\mu}, V_{\mu}^{t-1}) + {\cal O}\left( \frac{1}{n} \right)  \right) \\
	&= \sum\limits_{i = 1}^n \frac{X_{\mu i}}{\sqrt{n}} \hat{W}_{i}^t -   \sum\limits_{i = 1}^n \frac{X_{\mu i}^2}{n} \left(\Sigma_{i  }^{t-1}\right)^{-1}\hat{C}_{i}^t\Sigma_{i  }^{t-1}g_{\rm out} (\omega_{\mu}^{t-1}, Y_{\mu}, V_{\mu}^{t-1}) + {\cal O}\left( \frac{1}{n^{3/2}}\right)
\end{align*}

\paragraph{$\bullet$ $\left(\Sigma_{i}^t\right)^{-1}$}
\begin{align*}
\left(\Sigma_{i}^t\right)^{-1} &= \sum\limits_{\mu =1}^m  A_{\mu \to i}^t
	= - \sum\limits_{\mu =1}^m X_{\mu i}^2  \partial_\omega g_{\rm out}(\omega_{i\mu}^t, Y_{\mu}, V_{i\mu}^t) = - \sum\limits_{\mu =1}^m X_{\mu i}^2  \partial_\omega g_{\rm out}(\omega_{\mu}^t, Y_{\mu}, V_{\mu}^t) + {\cal O}\left( \frac{1}{n^{3/2}}\right)
\end{align*}

\paragraph{$\bullet$ $T_{i}^t$}
\begin{align*}
T_{i }^t &= \Sigma_{i}^t  \left( \sum\limits_{\mu =1}^m  B_{\mu \to i}^t \right)  =  \Sigma_{i}^t   \sum\limits_{\mu =1}^m \frac{X_{\mu i}}{\sqrt{n}} g_{\rm out} (\omega_{i\mu}^t, Y_{\mu}, V_{i\mu}^t)\\
&= \Sigma_{i}^t   \sum\limits_{\mu =1}^m \frac{X_{\mu i}}{\sqrt{n}} \left(  g_{\rm out} \left(\omega_{\mu}^t , Y_{\mu}, V_{\mu}^t \right)-\frac{X_{\mu i}}{\sqrt{n}}\frac{\partial g_{\rm out} }{ \partial \omega}\left(\omega_{\mu}^t , Y_{\mu}, V_{\mu}^t \right)   \hat{W}_{i}^t +{\cal O}\left( \frac{1}{n}\right) \right)\\
&= \Sigma_{i}^t \left(  \sum\limits_{\mu =1}^m \frac{X_{\mu i}}{\sqrt{n}} g_{\rm out} \left(\omega_{\mu}^t , Y_{\mu}, V_{\mu}^t \right)  -\frac{X_{\mu i}^2}{n} \frac{\partial g_{\rm out} }{ \partial \omega}\left(\omega_{\mu}^t , Y_{\mu}, V_{\mu}^t \right)   \hat{W}_{i}^t \right) + {\cal O}\left( \frac{1}{n^{3/2}}\right)
\end{align*}

The AMP algorithm follows naturally the rBP updates \eqref{supp:relaxed_BP} using the expanded estimates of the mean and variance $\omega_{\mu}$, $V_{\mu}$, $T_{i }$ and $\Sigma_{i}$, and finally reads in pseudo language:
\begin{algorithm}
\caption{Approximate Message Passing for the committee machine}  
\begin{algorithmic}
    \STATE {\bfseries Input:} vector $Y \in \bbR^m$ and matrix $X\in \bbR^{m \times n}$:
    \STATE \emph{Initialize}: $g_{\rm out,\mu} = 0, \Sigma_i = I_{K\times K} $ for $ 1 \leq i \leq n $ and $ 1 \leq \mu \leq m $ at $t=0$.
    \STATE \emph{Initialize}: $\hat{W}_i \in \bbR^K$ and $\hat{C}_i$, $\partial_{\omega} g_{\rm out , \mu}$ $\in \mathcal{S}_K^+$ for $ 1 \leq i \leq n $ and $ 1 \leq \mu \leq m $ at $t=1$.
    \REPEAT   
    \STATE Update of the mean $\omega_{\mu} \in \bbR^K$ and covariance $V_{\mu}\in \mathcal{S}_K^+$: \\
    \hspace{0.5cm} $\omega_{\mu}^t = \sum\limits_{i = 1}^n \big(\frac{X_{\mu
      i}}{\sqrt{n}}\hat{W}_{i}^t -    \frac{X_{\mu
      i}^2}{n}
    \left(\Sigma_{i}^{t-1}\right)^{-1}\hat{C}_{i}^t \Sigma_{i
    }^{t-1}g_{\rm out,\mu}^{t-1} \big)    \hspace{0.5cm}|\hspace{0.5cm}  V_{\mu}^t = \sum\limits_{i=1}^n\frac{X_{\mu
      i}^2}{n} \hat{C}_{i}^t $\vspace{0.1cm}
    \STATE Update of $g_{\rm out, \mu} \in \bbR^K$ and $\partial_{\omega} g_{\rm out , \mu} \in \mathcal{S}_K^+$: \\
    \hspace{0.5cm}$g_{\rm out, \mu}^t = g_{\rm out} (\omega_{\mu}^t , Y_{\mu}, V_{\mu}^t) \hspace{0.5cm}|\hspace{0.5cm}  \partial_{\omega} g_{\rm out, \mu}^t = \partial_{\omega}  g_{\rm out} (\omega_{\mu}^t , Y_{\mu}, V_{\mu}^t)  $ \vspace{0.1cm}
    \STATE Update of the mean $T_i \in \bbR^K$ and covariance $\Sigma_i \in \mathcal{S}_K^+$:\\
    \hspace{0.5cm}$T_i^t = \Sigma_{i}^t \Big(  \sum\limits_{\mu =1}^m
      \frac{X_{\mu
      i}}{\sqrt{n}}g_{\rm out,\mu}^t  -\frac{X_{\mu
      i}^2}{n}  \partial_{\omega} g_{\rm out , \mu}^t \hat{W}_{i}^t \Big) \hspace{0.5cm}|\hspace{0.5cm}  \Sigma_{i}^t = -\Big(\sum\limits_{\mu =1}^m \frac{X_{\mu
      i}^2}{n}  \partial_\omega g_{\rm out,\mu}^t \Big)^{-1} $\vspace{0.1cm}
    \STATE Update of the estimated marginals $\hat{W}_i \in \bbR^K$ and $\hat{C}_i \in \mathcal{S}_K^+$: \\
    \hspace{0.5cm}$\hat{W}_i^{t+1} = f_w( \Sigma_i^t , T_i^t )   \hspace{0.5cm}|\hspace{0.5cm}  \hat{C}_i^{t+1} = f_c( \Sigma_i^t , T_i^t )$\vspace{0.1cm}
    \STATE ${t} = {t} + 1$ 
    \UNTIL{Convergence on
    $\hat{W}$, $\hat{C}$.} 
    \STATE {\bfseries Output:}
    $\hat{W}$ and $\hat{C}$.
\end{algorithmic}
\end{algorithm}

\section{State evolution equations from AMP}
\label{sec:se}
In this section, $W^\star$ denotes the ground truth weights of the teacher, and we define the overlap parameters at time $t$, $m^t$, $\sigma^t$, $q^t$, $Q$ and  that respectively measure the correlation of the AMP estimator with the ground truth, its variance and the norms of student and teacher weights:
\begin{align*}
	\begin{cases}
	m^t \equiv \displaystyle \EE_{W^\star} \lim_{n\to \infty} \frac{1}{n} \sum_{i=1}^n \hat{W}^t_i(W^\star_i)^\intercal \,, \\
	q^t \equiv \displaystyle \EE_{W^\star} \lim_{n\to \infty} \frac{1}{n} \sum_{i=1}^n \hat{W}^t_i(\hat{W}^{t}_i)^\intercal \,,  
	\end{cases}
	\hspace{0.3cm} \textrm{ and } \hspace{0.3cm}
	\begin{cases}
	\sigma^t \equiv \displaystyle \EE_{W^\star} \lim_{n\to \infty} \frac{1}{n} \sum_{i=1}^n \hat{C}_{i}^{t} \,. \\
	Q \equiv \displaystyle \EE_{W^\star} \lim_{n\to \infty} \frac{1}{n} \sum_{i=1}^n W^\star_i(W^\star_i)^\intercal \,, \\ 
	\end{cases}
\end{align*}
The aim is to derive the asymptotic behavior of these overlap parameters, called state evolution. The idea is to compute the overlap distributions starting with the relaxed BP equations of eq.~\eqref{supp:relaxed_BP}.

\subsection{Messages distribution}
In order to get the asymptotic behavior of the overlap parameters, we need first to compute the distribution of $\Sigma_{\mu \to i}^t$ and $T_{\mu \to i}^t$. Besides, we recall that  in our model, the output has been generated by a teacher according to $Y_\mu = \varphi_{\rm out}^0 \(\frac{1}{\sqrt{n}} W^\star X_\mu , A\)  $. We define $z_\mu \equiv \frac{1}{\sqrt{n}} W^\star X_\mu = \frac{1}{\sqrt{n}} \sum_{i=1}^n X_{\mu i} W_i^\star $ and $z_{\mu \to i} \equiv  \frac{1}{\sqrt{n}} \sum_{j\ne i}^n X_{\mu j} W_j^\star $. And it is useful to recall $\EE_X[X_{\mu i}] = 0 $ and $\EE_X[X_{\mu i}^2] = 1$.

\subparagraph{$\bullet$ $\omega_{\mu \to i}^t$\\}
Under belief propagation assumption messages are independent. $\omega_{\mu \to i}^t$ is thus the sum of independent variables and follows a Gaussian distribution. Let's compute the first two moments, using expansions of the relaxed BP equations of eq.~\eqref{supp:relaxed_BP}:
\begin{align*}
	\EE_X \[ \omega_{\mu \to i}^t \] &=   \frac{1}{\sqrt{n}} \sum\limits_{j\neq i}^n \EE_X \[X_{\mu j}\]   \hat{W}_{j\to \mu}^t  = 0 \,, \\
	\EE_X \[ \omega_{\mu \to i}^t (\omega_{\mu \to i}^t)^\intercal \] &=  \frac{1}{n} \sum\limits_{j\neq i, k\ne i }^n \EE_X \[X_{\mu j} X_{\mu k} \]   \hat{W}_{j\to \mu}^t (\hat{W}_{k\to \mu}^t)^\intercal  = \sum\limits_{j\neq i }^n \EE_X \[X_{\mu j}^2 \]   \hat{W}_{j\to \mu} (\hat{W}_{j\to \mu})^\intercal \\
		&= \frac{1}{n} \sum\limits_{j\neq i }^n    \hat{W}_{j\to \mu}^t (\hat{W}_{j\to \mu}^t)^\intercal 
		= \frac{1}{n} \sum\limits_{i=1 }^n    \hat{W}_{i}^t (\hat{W}_{i}^t)^\intercal + \mO\(1/n^{3/2} \) \underlim{n}{\infty} q^t \,.
\end{align*}	

\subparagraph{$\bullet$ $z_{\mu}$}
\begin{align*}
	\EE_X \[z_\mu  \] &= \frac{1}{\sqrt{n}} \sum_{i=1}^n \EE_X\[X_{\mu i}\] W_i^\star  = 0 \,,\\
	\EE_{X,W^\star} \[ z_\mu z_\mu^\intercal \] &=   \EE_{W^\star} \frac{1}{n} \sum\limits_{j =1, k=1 }^n \EE_X \[X_{\mu j} X_{\mu k} \]   W_j^\star (W_k^\star)^\intercal = \EE_{W^\star} \frac{1}{n} \sum\limits_{i =1 }^n   W_i^\star (W_i^\star)^\intercal \underlim{n}{\infty} Q \,.
\end{align*}

\subparagraph{$\bullet$ $z_{\mu}$ and $\omega_{\mu \to i}^t$}

\begin{align*}
	\EE_{X,W^\star} \[ \omega_{\mu \to i}^t z_\mu^\intercal \] &= \EE_{W^\star} \frac{1}{n} \sum\limits_{j\neq i, k=1 }^n \EE_X \[X_{\mu j} X_{\mu k} \]   \hat{W}_{j\to \mu}^t (W_{k}^\star)^\intercal  = \EE_{W^\star} \frac{1}{n} \sum\limits_{j\neq i}^n    \hat{W}_{j\to \mu}^t (W_{j}^\star)^\intercal \\
	&= \EE_{W^\star} \frac{1}{n} \sum\limits_{i=1}^n  \hat{W}_{i}^t (W_{i}^\star)^\intercal + \mO\(1/n^{3/2} \) \underlim{n}{\infty} m^t \,.
\end{align*}

Hence, asymptotically ($z_\mu$, $\omega_{\mu \to i}^t$) follows a Gaussian distribution with covariance matrix
$	\tbf{Q}^t=\begin{bmatrix}
    Q & m^t \\
    m^t & q^t  \\
  \end{bmatrix} $.
 
\subparagraph{$\bullet$ $V_{\mu \to i}$} concentrates around its mean:
\begin{align*}
	\EE_{X,W^\star} \[ V_{ \mu \to i}^t \]&= \EE_{W^\star}  \frac{1}{n} \sum\limits_{j\neq i}^n\EE_X \[X_{\mu j}^2\] \hat{C}_{j\to \mu}^t = \EE_{W^\star}  \frac{1}{n} \sum\limits_{j\neq i}^n \hat{C}_{j\to \mu}^t  = \EE_{W^\star}  \frac{1}{n} \sum\limits_{i}^n \hat{C}_{i}^t  + \mO\(1/n^{3/2} \) \underlim{n}{\infty} \sigma^t \,.
\end{align*}

Let's define other order parameters, that will appear in the following:
\begin{align*}
	\begin{cases}
		\hat{q}^t & \equiv \alpha \EE_{\omega,z,A} \[g_{\rm out}(\omega,\varphi^0_{\rm out}(z,A), \sigma^t )g_{\rm out}(\omega,\varphi^0_{\rm out}(z,A), \sigma^t )^\intercal \]  \,, \\
		 \hat{m}^t &\equiv \alpha \EE_{\omega,z,A} \[\partial_z g_{\rm out}(\omega,\varphi^0_{\rm out}(z,A), \sigma^t )\]  \,, \\
		  \hat{\chi}^t &\equiv \alpha \EE_{\omega,z,A} \[ - \partial_\omega g_{\rm out}(\omega,\varphi^0_{\rm out}(z,A), \sigma^t )\] \,. \\
	\end{cases}
\end{align*}

\subparagraph{ $\bullet$ $T_{\mu \to i}^t$ } can be expanded around $z_{\mu \to i}$:
\begin{align*}
	&\(\Sigma_{\mu \to i}^t\)^{-1} T_{\mu \to i}^t = \left( \sum\limits_{\nu \ne \mu}^m  B_{\nu \to i}^t \) = \left( \sum\limits_{\nu \ne \mu}^m  \frac{1}{\sqrt{n}} X_{\nu i} g_{\rm out} (\omega_{\nu \to i}^t, \varphi_{\rm out}^0 \(\frac{1}{\sqrt{n}} \sum_{j \ne i}^n X_{\nu j} W_j^\star  + X_{\nu i} W_i^\star  , A\), V_{\nu \to i}^t) \)\\
	&= \left( \sum\limits_{\nu \ne \mu}^m  \frac{1}{\sqrt{n}} X_{\nu i} g_{\rm out} (\omega_{\nu \to i}^t, \varphi_{\rm out}^0 \(z_{\nu \to i}, A\), V_{\nu \to i}^t) \) + \left( \sum\limits_{\nu \ne \mu}^m  \frac{1}{n} X_{\nu i}^2 \partial_z g_{\rm out} (\omega_{\nu \to i}^t, \varphi_{\rm out}^0 \(z_{\nu \to i}, A\), V_{\nu \to i}^t) \) W_i^\star\,.  \\
\end{align*}

\subparagraph{ $\bullet$ $\Sigma_{\mu \to i}^t$ }
\begin{align*}
	\(\Sigma_{\mu \to i}^t\)^{-1} &= \sum \limits_{\nu \ne \mu}^m  A_{\nu \to i}^t = - \sum \limits_{\nu \ne \mu}^m   \frac{1}{n} X_{\nu i}^2  \partial_\omega g_{\rm out}(\omega_{\nu \to i}^t, Y_{\nu}, V_{\nu \to i}^t) \\
	&= - \sum \limits_{\nu \ne \mu}^m   \frac{1}{n} X_{\nu i}^2  \partial_\omega g_{\rm out}(\omega_{\nu \to i}^t, \varphi_{\rm out}^0 \(z_{\nu \to i},A\), V_{\nu \to i}^t) + \mO\(1/n^{3/2} \) \,.
\end{align*}

Hence, taking the average and the large size limit, the first moments of the variables $\Sigma_{\mu \to i}^t$ and $T_{\mu \to i}^t$ read:
\begin{align*}
\begin{cases}
	&\EE_{\omega, z , A, X} \[ \(\Sigma_{\mu \to i}^t\)^{-1} T_{\mu \to i}^t \] \underlim{n}{\infty} \hat{m}^tW_i^\star  \,, \\
	&\EE_{\omega, z , A, X} \[ \(\Sigma_{\mu \to i}^t\)^{-1} T_{\mu \to i}^t  \(T_{\mu \to i}^t\)^\intercal \(\Sigma_{\mu \to i}^t\)^{-1}  \] \underlim{n}{\infty}  \hat{q}^t \,, \\
	&\EE_{\omega, z , A, X} \[ \(\Sigma_{\mu \to i}^t\)^{-1}  \] \underlim{n}{\infty} \hat{\chi}^t  \,.
\end{cases}
\end{align*}

And finally $ T_{\mu \to i}^t \sim (\hat{\chi}^t)^{-1} \( \hat{m}^tW_i^\star  + ( \hat{q}^t)^{1/2}\xi \) $ with $\xi \sim \mathcal{N}(0,\id)$ and $ \(\Sigma_{\mu \to i}^t\)^{-1} \sim (\hat{\chi}^t)^{-1}$ \,.

\subsection{State evolution equations - Non Bayes optimal case}
Let's define the following notations:
\begin{align*}
	T^t[W^\star, \xi ] &\equiv (\hat{\chi}^t)^{-1} \(\hat{m}^t W^\star + (\hat{q}^t)^{1/2} \xi   \) \\
	\Sigma^t &\equiv ( \hat{\chi}^t)^{-1} 
\end{align*}
Gathering above results, the state evolution equations read: 
\begin{align*}
\begin{cases}
	&m^{t+1} = \displaystyle \EE_{W^\star} \lim_{n\to \infty} \frac{1}{n} \sum_{i=1}^n \hat{W}^t_i(W^\star_i)^\intercal =  \EE_{W^\star, \xi } \[ f_w\( \Sigma^t,T^t[W^\star, \xi ] \) \(W^\star \)^\intercal\]  \\
	&q^{t+1} = \displaystyle \EE_{W^\star} \lim_{n\to \infty}  \frac{1}{n} \sum_{i=1}^n \hat{W}^{t+1}_i(\hat{W}^{t+1}_i)^\intercal = \EE_{W^\star, \xi }  \[ f_w\( \Sigma^t,T^t[W^\star, \xi ] \) f_w\( \Sigma^t,T^t[W^\star, \xi ] \)^\intercal \]  \\
	&\sigma^{t+1} = \displaystyle \EE_{W^\star} \lim_{n\to \infty}  \frac{1}{n} \sum_{i=1}^n \hat{C}^{t+1}_i = \EE_{W^\star, \xi }  \[ f_c\( \Sigma^t,T^t[W^\star, \xi ] \)\]  \\
\end{cases}
\end{align*}
and 
\begin{align*}
	\begin{cases}
		\hat{q}^t &= \alpha \EE_{\omega,z,A} \[g_{\rm out}(\omega,\varphi^0_{\rm out}(z,A), \sigma^t )g_{\rm out}(\omega,\varphi^0_{\rm out}(z,A), \sigma^t )^\intercal \]  \\
		&= \alpha \displaystyle \int dP_A(A) \int dz d\omega \mathcal{N} \( z , \omega ; 0 , \tbf{Q}^t   \)  g_{\rm out}(\omega,\varphi^0_{\rm out}(z,A), \sigma^t )  g_{\rm out}(\omega,\varphi^0_{\rm out}(z,A), \sigma^t )^\intercal \\   
		 \hat{m}^t &= \alpha \EE_{\omega,z,A} \[\partial_z g_{\rm out}(\omega,\varphi^0_{\rm out}(z,A), \sigma^t )\]  \\
		 &= \alpha \displaystyle \int dP_A(A) \int dz d\omega \mathcal{N} \( z , \omega ; 0 , \tbf{Q}^t   \)  \partial_z g_{\rm out}(\omega,\varphi^0_{\rm out}(z,A), \sigma^t ) \\
		 \hat{\chi}^t &= \alpha \EE_{\omega,z,A} \[ - \partial_\omega g_{\rm out}(\omega,\varphi^0_{\rm out}(z,A), \sigma^t )\]  \\
		 &= - \alpha \displaystyle \int dP_A(A) \int dz d\omega \mathcal{N} \( z , \omega ; 0 , \tbf{Q}^t   \)   \partial_\omega g_{\rm out}(\omega,\varphi^0_{\rm out}(z,A), \sigma^t )\\
	\end{cases}
\end{align*}

\subsection{State evolution equations -- Bayes-optimal case}
In the Bayes-optimal case, the student knows all the parameters of the teacher and then $P_0^\star = P_0$, $\varphi^0_{\rm out} = \varphi_{\rm out}$, $m^t = q^t$ and $ \hat{q}^t  = \hat{m}^t = \hat{\chi}^t $, $\sigma^{t} = Q - q^t  $ and then, naturally 
\begin{align*}
	T^t[W^\star, \xi ] &\equiv W^\star + (\hat{q}^t)^{-1/2} \xi  \,,  \\
	\Sigma^t &\equiv ( \hat{q}^t)^{-1} \, .
\end{align*}

In the Bayes-optimal case, the set of state evolution equations reduces and simplifies to:
\begin{align}
\begin{cases}
	&q^{t+1}  =  \EE_{W^\star, \xi }  \[ f_w\( \Sigma^t,T^t[W^\star, \xi ] \) f_w\( \Sigma^t,T^t[W^\star, \xi ] \)^\intercal \]  \,,\vspace{0.3cm} \\
	&\hat{q}^t = \alpha \EE_{\omega,z,A} \[g_{\rm out}(\omega,\varphi_{\rm out}(z,A), \sigma^t )g_{\rm out}(\omega,\varphi_{\rm out}(z,A), \sigma^t )^\intercal \] \,,
\end{cases}
\label{appendix:se_amp_bayes}
\end{align}
where $(z,\omega) \sim \mathcal{N}_{z,\omega}\(0,0;\tbf{Q}^t\)$ with $	\tbf{Q}^t=\begin{bmatrix}
    Q & q^t \\
    q^t & q^t  \\
  \end{bmatrix} $.

\subsection{State evolution - Consistence between replicas and AMP - Bayes optimal case}

\paragraph{State evolution - AMP\\}
Using the change of variable $\xi \leftarrow \xi + \(\hat{q}^t\)^{1/2} W^\star$, eq.~\eqref{appendix:se_amp_bayes} becomes:
\begin{align*}
q^{t+1}  \displaystyle = \EE_{\xi} \[ \mathcal{Z}_{P_{\rm 0}} \((\hat{q}^t)^{1/2}\xi,(\hat{q}^t)^{-1}\) f_w\((\hat{q}^t)^{1/2}\xi, (\hat{q}^t)^{-1}\) f_w\((\hat{q}^t)^{1/2}\xi, (\hat{q}^t)^{-1}\)^\intercal \]	
\end{align*}

In addition, in the Bayes-optimal case, as: 
\begin{align*}
	\begin{cases}
		\EE_X \[ \omega_{\mu \to i}^t (z_\mu-\omega_{\mu \to i}^t)^\intercal \] = m^t - q^t = 0 \vspace{0.2cm} \\
		\EE_X[\omega_{\mu \to i}^t (\omega_{\mu \to i}^t)^\intercal] = q^t  \vspace{0.2cm} \\
		\EE_X\[(z_\mu^\intercal-\omega_{\mu \to i}^t)(z_\mu-\omega_{\mu \to i}^t)^\intercal\] = Q  - q^t \, ,  \vspace{0.2cm} \\
	\end{cases}
\end{align*}
the multivariate distribution can be written as a product: $\mathcal{N}_{z,\omega}\(0,0;\tbf{Q}^t\) = \mathcal{N}_\omega\(0,q^t\) \mathcal{N}_{z}\(\omega,Q-q^t\)$.
Hence, using $P_{\rm out}(y|z) = \int dP_A(A) \delta\( y - \varphi^0_{\rm out}(z,A)\) $,  eq.~\eqref{appendix:se_amp_bayes} becomes:
\begin{align*}
	\hat{q}^{t}  \displaystyle &= \alpha \EE_{\omega,z,A} \[g_{\rm out}(\omega,\varphi^0_{\rm out}(z,A), Q - q^t )g_{\rm out}(\omega,\varphi^0_{\rm out}(z,A), Q - q^t )^\intercal \] \vspace{0.2cm} \\
	&= \alpha \int dy \int d \omega \frac{e^{-\frac{1}{2} \omega^\intercal (q^t)^{-1} \omega  }}{(2\pi)^{K/2} \det(q^t)^{1/2}} \int dz P_{\rm out}(y|z) \frac{e^{-\frac{1}{2}(z-\omega)^\intercal (Q-q^t)^{-1} (z-\omega) }}{(2\pi)^{K/2} \det (Q-q^t)^{1/2} } g_{\rm out}(\omega,y, Q - q^t )g_{\rm out}(\omega,y, Q - q^t )^\intercal \vspace{0.2cm} \\
	&= \alpha \int dy \int D\xi \int dz P_{\rm out}(y|z) \frac{e^{-\frac{1}{2}(z-\omega)^\intercal (Q-q^t)^{-1} (z-\omega) }}{(2\pi)^{K/2} \det (Q-q^t)^{1/2} } g_{\rm out}((q^t)^{1/2}\xi,y, Q - q^t )g_{\rm out}((q^t)^{1/2}\xi,y, Q - q^t )^\intercal \vspace{0.2cm} \\
	&= \alpha \EE_{y,\xi} \[ \mathcal{Z}_{P_{\rm out}}\((q^t)^{1/2}\xi,y, Q - q^t \) g_{\rm out}\((q^t)^{1/2}\xi,y, Q - q^t \) g_{\rm out}\((q^t)^{1/2}\xi,y, Q - q^t \)^\intercal \]
\end{align*}

To summarize, the state evolution equations can be written as:
\begin{align}
	\begin{cases}
	q^{t+1}  \displaystyle = \EE_{\xi} \[ \mathcal{Z}_{P_{\rm 0}} \((\hat{q}^t)^{1/2}\xi,(\hat{q}^t)^{-1}\) f_w\((\hat{q}^t)^{1/2}\xi, (\hat{q}^t)^{-1}\) f_w\((\hat{q}^t)^{1/2}\xi, (\hat{q}^t)^{-1}\)^\intercal \]		\vspace{0.2cm} \\
		\hat{q}^{t} = \alpha \EE_{y,\xi} \[ \mathcal{Z}_{P_{\rm out}}\((q^t)^{1/2}\xi,y, Q - q^t \) g_{\rm out}\((q^t)^{1/2}\xi,y, Q - q^t \) g_{\rm out}\((q^t)^{1/2}\xi,y, Q - q^t \)^\intercal \]
	\end{cases}
	\label{appendix:se_amp}
\end{align}

\paragraph{State evolution - Replicas\\}
Recall from sec.~\ref{sec:replicacomputation}, the free entropy eq.~\eqref{eq:replica_solution} reads
\begin{align*}
\begin{cases}
\lim_{n \to \infty} f_n &= \text{extr}_{q,\hat{q}}\left\{- \frac{1}{2} \text{Tr} [q \hat{q}] + I_P + \alpha I_C \right\} \,, \vspace{0.2cm} \\
	I_P &\equiv \EE_{\xi} \[ \mathcal{Z}_{P_{\rm 0}}(\hat{q}^{1/2}\xi, \hat{q}^{-1}) \log(\mathcal{Z}_{P_{\rm 0}}(\hat{q}^{1/2}\xi, \hat{q}^{-1})) \]\,, \vspace{0.2cm} \\
	I_C &\equiv \EE_{\xi,y} \[ \mathcal{Z}_{P_{\rm out}}(q^{1/2}\xi,y, Q - q ) \log(\mathcal{Z}_{P_{\rm out}}(q^{1/2}\xi,y, Q - q )) \] \,.
\end{cases}
\end{align*}  

Taking the derivatives with respect to $q$ and $\hat{q}$, using an integration by part and the following identities:
\begin{align*}
\begin{cases}
	\frac{\partial \mathcal{Z}_{P_{\rm out}}}{\partial q } = -\frac{1}{2} q^{-1}  e^{\frac{1}{2} \xi^\intercal \xi }  \partial_{\xi} 
	\[  e^{-\frac{1}{2} \xi^\intercal \xi } \partial_{\xi} \mathcal{Z}_{P_{\rm out}}  \]	\,, \vspace{0.2cm} \\
	\frac{\partial \mathcal{Z}_{P_{\rm 0}}}{\partial \hat{q} } = -\frac{1}{2}\hat{q}^{-1} e^{\frac{1}{2} \xi^\intercal \xi }   \partial_{\xi} 
	\[  e^{-\frac{1}{2} \xi^\intercal \xi } \partial_{\xi} \mathcal{Z}_{P_{\rm 0}}  \]\,,
\end{cases}
\end{align*}
the state evolution equations read:
\begin{equation*}
	\begin{cases}
	q = 2 \frac{\partial I_P }{\partial \hat{q}} \vspace{0.2cm}\\
	\hat{q} = 2 \alpha \frac{\partial I_C }{\partial q} 
	\end{cases}
	\hspace{0.3cm} \textrm{ with } \hspace{0.3cm}
		\begin{cases}
		\frac{\partial I_P }{\partial \hat{q}} = \frac{1}{2} \EE_{\xi} \[ \mathcal{Z}_{P_{\rm 0}}(\hat{q}^{1/2}\xi, \hat{q}^{-1}) f_w(\hat{q}^{1/2}\xi, \hat{q}) f_w(\hat{q}^{1/2}\xi, \hat{q}^{-1})^\intercal \]  \vspace{0.2cm} \\
		\frac{\partial I_C }{\partial q} = \frac{1}{2} \EE_{y,\xi} \[ \mathcal{Z}_{P_{\rm out}}(q^{1/2}\xi,y, Q - q ) g_{\rm out}(q^{1/2}\xi,y, Q - q ) g_{\rm out}(q^{1/2}\xi,y, Q - q )^\intercal \]
	\end{cases}
\end{equation*}
that simplifies and allows to recover the state evolutions equations directly derived from AMP eq.~\eqref{appendix:se_amp}, but without time indices
\begin{align*}
	\begin{cases}
		q =  \EE_{\xi} \[ \mathcal{Z}_{P_{\rm 0}}(\hat{q}^{1/2}\xi, \hat{q}^{-1}) f_w(\hat{q}^{1/2}\xi, \hat{q}) f_w(\hat{q}^{1/2}\xi, \hat{q}^{-1})^\intercal \] \,, \vspace{0.2cm} \\
		\hat{q} =  \alpha \EE_{y,\xi} \[ \mathcal{Z}_{P_{\rm out}}(q^{1/2}\xi,y, Q - q ) g_{\rm out}(q^{1/2}\xi,y, Q - q ) g_{\rm out}(q^{1/2}\xi,y, Q - q )^\intercal \] \,.
	\end{cases}
\end{align*}

\section{Parity machine for \texorpdfstring{$K=2$}{K=2}}
\label{sec:parity}
Although we mainly focused on the committee machine, another classical two-layers neural network is the parity machine \cite{engel2001statistical} and our proof applies to this case as well.  While learning is known to be computationally hard for general $K$, the case $K=2$ is special, and in fact can be reformulated as a committee machine, where the sign activation function has been replaced by $\varphi_1 (z) = \id (z\ne 0) - \id (z = 0)$:  \begin{equation} Y_\mu = {\rm{\sign}}\Big[\prod_{l=1}^K {\sign} \Big( \sum_{i=1}^nX_{\mu i} W_{i l}^* \Big) \Big]\,= {\rm{\varphi_1}}\Big[\sum_{l=1}^K {\sign} \Big( \sum_{i=1}^nX_{\mu i} W_{i l}^* \Big) \Big]\, .  \label{} \end{equation}

We have repeated our analysis for the $K=2$ parity machine and the phase diagram is summarized in Fig.~\ref{fig:phaseDiagramK2_parity} where we show the generalization error and the elements of the overlap matrix for Gaussian (left) and binary weights (right), with the results of the AMP algorithm (points). 

Below the specialization phase transition $\alpha < \alpha_{\rm spec}$, the symmetry of the output imposes the non-specialized fixed point $q_{00}=q_{01}=0$ to be the only solution, with $\alpha_{\rm spec}^G(K=2) \simeq 2.48 $ and $\alpha_{\rm spec}^B(K=2)\simeq 2.49$. Above the specialization transition $\alpha_{\rm spec}$, the overlap becomes specialized with a non-trivial diagonal term. 

Additionally, in the binary case, an information theoretical transition towards a perfect learning occurs at $\alpha_{\rm IT}^B(K=2)\simeq 2.00$, meaning that the perfect generalization fixed point ($q_{00}=1,q_{01}=0$) becomes the global optimizer of the free entropy. It leads to a first order phase transition of the AMP algorithm which retrieves the perfect generalization phase only at $\alpha_{\rm perf}^B(K=2)\simeq 3.03$. This is similar to what happens in single layer neural networks for the symmetric door activation function, see \cite{barbier2017phase}. Again, these results for the parity machine emphasize a gap between information-theoretical and computational performance.

\begin{figure}[!!!t]
\center
\includegraphics[width=0.9\linewidth]{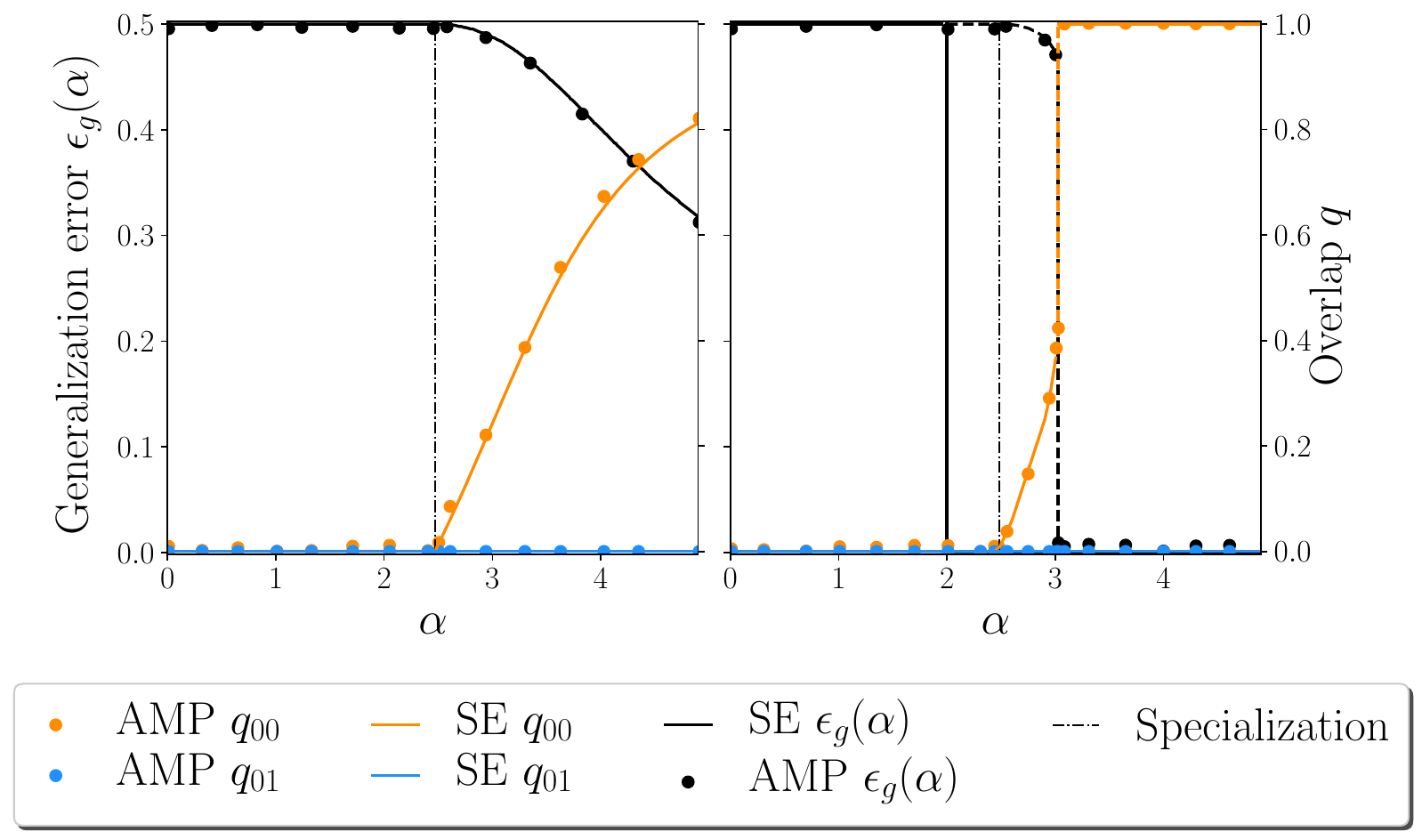}
 \caption{Similar plot as in Fig. \ref{fig:phaseDiagramK2} but for the parity machine with two hidden neurons. Value of the order parameter and the optimal generalization error for a parity
   machine with two hidden neurons with Gaussian weights (left) and
   binary/Rademacher weights (right).  SE and AMP overlaps are respectively represented in full line and points.}
\label{fig:phaseDiagramK2_parity}
\vspace{-0.5cm}
\end{figure}
\end{document}